\documentclass[aos]{imsart}

\RequirePackage{amsthm,amsmath,amsfonts,amssymb}

\usepackage[authoryear]{natbib}

\usepackage[T1]{fontenc}
\usepackage[utf8]{inputenc}
\usepackage{bm}
\usepackage{verbatim}
\usepackage{float}
\usepackage{algorithm}
\usepackage{algpseudocode}
\usepackage{placeins}
\usepackage{needspace}
\usepackage{changepage}
\algrenewcommand{\algorithmicrequire}{\textbf{Input:}}
\usepackage{color}
\usepackage{threeparttable}
\usepackage{graphicx}
\usepackage{mdframed}
\usepackage{xcolor}

\usepackage{multirow}
\usepackage{makecell}
\usepackage{booktabs}
\usepackage{array}
\usepackage{hhline}

\usepackage{lscape}
\usepackage{mathtools}
\usepackage{romanbar}
\usepackage{enumerate}
\usepackage{rotating}
\usepackage{tikz}
\usetikzlibrary{matrix}
\usetikzlibrary{calc,intersections}
\usepackage{mathrsfs}

\startlocaldefs
\theoremstyle{plain}

\newtheorem{theorem}{Theorem}[section]
\newtheorem{corollary}{Corollary}[section]
\newtheorem{lemma}[theorem]{Lemma}

\theoremstyle{remark}
\newtheorem{definition}[theorem]{Definition}

\newtheorem{assumption}{Assumption}[section]

\newcommand{\argmin}{\mathop{\mathrm{argmin}}}

\endlocaldefs

\RequirePackage[colorlinks,citecolor=blue,urlcolor=blue]{hyperref}
\begin{document}

\begin{frontmatter}
\title{Federated Targeted Maximum Likelihood Estimation} \runtitle{Federated Targeted Maximum Likelihood Estimation} \runauthor{D. Li, F. Wang and K. Gan} \pdftitle{Federated Targeted Maximum Likelihood Estimation} \pdfauthor{Diyang Li, Fei Wang, Kyra Gan}

\thankslabel[a]{e1} \thankslabel[b]{e2} \thankslabel[c]{e3}

\begin{aug}
\author[A]{\fnms{Diyang}~\snm{Li}\ead[label=e1]{diyang01@cs.cornell.edu}}, \author[A,B]{\fnms{Fei}~\snm{Wang}\ead[label=e2]{few2001@med.cornell.edu}} \and \author[A]{\fnms{Kyra}~\snm{Gan}\ead[label=e3]{kyragan@cornell.edu}}

\address[A]{Cornell University\printead[presep={,\ }]{e1,e3}} \address[B]{Weill Cornell Medicine\printead[presep={,\ }]{e2}}
\end{aug}

\begin{abstract}
The evidence behind a scientific or operational decision is often held by hospitals, banks, or registries that cannot pool individual observations. Cross-silo federated learning moves computation to the data and exchanges agreed summaries. Targeted maximum likelihood estimation (TMLE) refines a flexible initial fit through low-dimensional, target-directed fluctuations guided by the efficient influence function, yielding plug-in estimators that respect the model and support efficient inference. TMLE itself, however, has remained a fully centralized procedure. No general federated form of it has existed. Our paper introduces the first federated TMLE algorithm. We federate targeting itself, for an arbitrary target, loss, and fluctuation family, through two complementary frameworks. FedTMLE-G aggregates local gradients and reproduces centralized targeting step for step. FedTMLE-L lets each institution complete its own fluctuation fit before a single exchange of fitted updates, trading synchronized fidelity for local autonomy. For gradient aggregation, we develop a finite-precision protocol that transmits changes rather than values and certifies targeting accuracy within explicit bounds on exchanges and bits. A description-length analysis of the accepted updates then shows that this finite communication leaves numerical targeting error negligible against sampling uncertainty. The cost of computing an estimator is thus distinct from the complexity of selecting it. Our analysis also indicates that keeping data local is not itself a privacy guarantee of TMLE, since instability of full-record reconstruction need not prevent recovery of a specified sensitive attribute. For a personalized version of local averaging, institutions retain their own estimates and leave once local targeting is complete. A nonconvex convergence bound charges the improvement forfeited through averaging to disagreement among local fits and exposes a tradeoff between equal institutional influence and the sampling variability of small silos.
\end{abstract}

\begin{keyword}[class=MSC]
\kwd[Primary ]{62G05} \kwd[; secondary ]{62G20} \kwd{65Y05} \kwd{65K10}
\end{keyword}

\begin{keyword}
\kwd{Targeted learning} \kwd{Distributed training} \kwd{Cross-silo federated learning} \kwd{Semiparametric efficiency}
\end{keyword}
\end{frontmatter}

\setlength{\textfloatsep}{12pt plus 2pt minus 2pt} \setlength{\intextsep}{12pt plus 2pt minus 2pt} \setlength{\floatsep}{12pt plus 2pt minus 2pt}

\section{Introduction}\label{sec:intro} Scientific and operational decisions often depend on evidence held by institutions that cannot pool their records. A hospital may observe too few patients to assess a treatment reliably; banks may need a broader view of credit risk while retaining customer-level transactions locally. Obligations to patients and customers, local governance, and institutional boundaries constrain how these records can be used \citep{rieke2020future,yang2019federated}. Cross-silo federated learning allows institutions to collaborate by computing on their own datasets and exchanging agreed summaries \citep{mcmahan2017communication,kairouz2021advances}. This can expand the evidence available for analysis without creating a central repository of sensitive records \citep{dayan2021federated}. For scientific inference, the resulting procedure must also provide a statistically reliable estimate of the quantity of interest.

Targeted learning organizes estimation around a specified feature of the data-generating distribution, such as a treatment effect or a policy's expected outcome \citep{vanderlaan2011targeted}. Targeted maximum likelihood estimation (TMLE) starts from an initial estimate, possibly fitted by flexible machine-learning methods \citep{vanderlaan2007super}, and refines it through low-dimensional, target-directed changes called fluctuations \citep{vanderlaan2006targeted}. The efficient influence function guides these changes to correct the leading estimation error relevant to the target. Evaluating the target at the updated distribution preserves its model-implied constraints and supports efficient inference under appropriate statistical conditions \citep{gruber2010bounded,vandervaart1998asymptotic}. For many causal targets, TMLE also provides double robustness, so consistency can hold when one nuisance model is misspecified but its complementary model is estimated consistently \citep{vanderlaan2006targeted,bang2005doubly}. In its general iterative form, TMLE fits a fluctuation, updates the distribution, and constructs the next fluctuation at that distribution \citep{vanderlaan2016onestep,li2025optimization}. This sequence connects empirical optimization to the statistical estimating equation.

Targeted learning knows how to turn a fit into a defensible estimate but cannot see past the records in front of it; federated learning sees across institutions but comes back with a predictor. Each has spent years perfecting precisely what the other lacks, yet no general federated or distributed TMLE exists, and an estimate that could draw on every hospital in a consortium still draws on the patients of one. The strengths of both go unused where they would matter most. Therefore, we introduce the first general federated formulation of standard TMLE with a unified theory of communication, statistical accuracy, information disclosure, and personalized optimization. We federate the targeting algorithm itself through repeated, coordinated refinement across private datasets. The target, loss, and admissible fluctuation family remain abstract, so the construction is not tied to one inferential target. Each accepted fit changes the working model for the next fit. The analysis must therefore track optimization progress, statistical completion, and communication across this evolving sequence of models.

Both proposed frameworks start from a shared initial estimate. FedTMLE-G aggregates local gradients to solve the pooled fluctuation problem (Algorithm~\ref{alg:gradient-aggregation}). With matched optimization and sampling rules, it reproduces centralized targeting in exact arithmetic and inherits the corresponding optimization behavior. FedTMLE-L lets each institution complete its fit in the common current fluctuation submodel, then averages the fitted fluctuation parameters in one exchange (Algorithm~\ref{alg:local-averaging}). Local computation can thus proceed between exchanges, but the averaged minimizers need not solve the pooled problem, even with sample-size weights. The two frameworks can consequently follow different targeting trajectories.

A small targeting error may require many exchanges, but not every exchange needs a long message. 
For FedTMLE-G, we quantify the communication needed to certify a prescribed targeting accuracy through a finite-precision protocol with bounds on both rounds and transmitted bits (Section~\ref{sec:theory-gradient}). Clients encode changes from previously communicated gradients. The analysis controls these changes across inner steps and submodel reconstruction without requiring agreement among local gradients. Slowly varying gradients can then be sent with fewer bits while preserving the rounded reference computation. The number of synchronization rounds is accounted for separately.

The statistical analysis accounts for the data-dependent choice of the final estimate through its accepted targeting history. Given the initial state, this history reconstructs the estimate without recording every intermediate gradient query. Theorem~\ref{thm:statistical} uses its description length to control data-dependent fitting and stopping. Under the stated statistical conditions, finite communication suffices to make numerical targeting error negligible relative to sampling uncertainty (Corollary~\ref{cor:inference}). An inner solve may require many exchanges while contributing only one accepted update to the description used in this bound.

Keeping records local does not, by itself, make communicated gradients confidential \citep{zhu2019deep,melis2019exploiting}. A server may infer properties of a silo's data from its responses, particularly when it can select successive queries. Section~\ref{sec:robustness} studies record reconstruction and recovery of a specified sensitive attribute under an explicit bounded-uncertainty model. These tasks can have different sensitivity to transcript uncertainty, allowing an attribute to be recovered even when reconstruction of the full record is unstable. The analysis makes the family of queries accepted by each client part of the disclosure model.

For FedTMLE-L, nonzero local fits can cancel, leaving the shared density unchanged even though local targeting is unfinished. We study a personalized formulation in which a client whose exact local fit selects zero retains its current density and targeting state and leaves, while the other clients continue (Section~\ref{sec:personalized}). The retained estimate incorporates any earlier shared updates, and completion is certified on that client's own data. Our nonconvex convergence bound controls the improvement still available at each client's active or retained density. It separates sampling variation from the improvement forfeited through averaging (Theorem~\ref{thm:personalized-convergence}). Nearby local fits with similar derivatives incur a smaller averaging penalty; persistent disagreement can obstruct completion. The bound also shows how equal institutional weighting differs from sample-size weighting in its sensitivity to small silos. Retired clients perform no further local work or communication, and their outputs are unaffected by later shared updates.

Experiments (Section~\ref{sec:experiments}) compare targeting progress under unequal client sizes, different data partitions, and limited computation and communication. The ordering observed over elapsed time can differ from that obtained under a fixed communication budget, reflecting the different amounts of local computation and synchronization required by the two frameworks.

\section{Preliminaries}\label{sec:preliminaries}
TMLE reduces the empirical correction in a plug-in estimator by minimizing the empirical loss over a sequence of finite-dimensional fluctuation models. This section reviews the statistical decomposition that motivates the correction and then sets out the iterative procedure in the general form of \citet{vanderlaan2006targeted}. Following \citet{li2025optimization}, we keep two operations distinct throughout: optimization within a fixed fluctuation model, and the density update that follows it.

\subsection{The target parameter and its canonical gradient}\label{sec:plugin}
Let $O_1,\ldots,O_n$ be independent and identically distributed observations with unknown density $p_0$ in a model $\mathcal M$, which may be nonparametric or semiparametric. All densities are taken with respect to a common dominating measure, denoted by $do$. The subscript in $p_0$ marks the true density; superscripts index targeting iterations. The quantity of interest is a fixed-dimensional, pathwise differentiable functional $\Psi:\mathcal M\to\mathbb R^d$. A plug-in estimator starts from a density estimate $p^0\in\mathcal M$ and reports $\Psi(p^0)$. The initial estimate may be fitted by any method, including flexible ensemble learners \citep{vanderlaan2007super}, but accurate prediction or density estimation alone need not make the error negligible at the scale required for inference on $\Psi(p_0)$.

For integrable scalar or vector functions $f$, write $P_nf:=n^{-1}\sum_{i=1}^nf(O_i)$ and $P_0f:=\int f(o)p_0(o)\,do$. Vector norms $\|\cdot\|$ are Euclidean, and $\|\cdot\|_1$ and $\|\cdot\|_\infty$ denote the sum and the maximum of absolute coordinates; for a scalar function, $\|\cdot\|_\infty$ is the uniform norm on the stated full-measure set. For a scalar or vector function $f$, $\|f\|_{L^2(p_0)}^2:=P_0\|f\|^2$, and $\|\cdot\|_{\mathrm{op}}$ is the induced operator norm of a matrix or multilinear form. Unsubscripted logarithms are natural; $\log_2$ is reserved for bit counts.

The efficient influence function (EIF) $D(p)(o)\in\mathbb R^d$ is the canonical gradient of $\Psi$ at $p$: it represents the derivative of $\Psi$ along regular perturbations of $p$, in that for every regular one-dimensional submodel $\{p(t)\}$ through $p(0)=p$ with score $s(o)=\left.\frac{\partial}{\partial t}\log p(t)(o)\right|_{t=0}$,
\begin{equation*}
	\left.\frac{d}{dt}\Psi(p(t))\right|_{t=0} =\int D(p)(o)s(o)p(o)\,do,
\end{equation*}
and its components belong to the tangent space of $\mathcal M$ at $p$, the closure in $L^2(p)$ of the linear span of the model's scores. The tangent-space requirement is what singles out the canonical gradient among the influence functions representing the same derivative. In an unrestricted nonparametric model the tangent space is the whole mean-zero subspace of $L^2(p)$; in a semiparametric model the restriction matters. In either case $D(p)$ is centered and square-integrable under $p$, that is, $\int D(p)(o)p(o)\,do=\mathbf0$ and $\int\|D(p)(o)\|^2p(o)\,do<\infty$, and at $p_0$ its covariance is the efficiency bound for regular estimation of $\Psi(p_0)$ \citep{kennedy2016semiparametric,vandervaart1998asymptotic}.

The centering identity holds under $p$ and need not hold under $p_0$ or under the empirical distribution: $D(p^0)$ has mean zero under $p^0$, but its empirical mean $P_nD(p^0)$ need not vanish. This discrepancy is what TMLE targets.

\subsection{The estimating equation and statistical error}\label{sec:statistical-error}
For a candidate density $p$ at which the quantities below are finite, define the von Mises remainder
\begin{equation}\label{eq:remainder}
	R_2(p,p_0) :=\Psi(p)-\Psi(p_0)+P_0D(p).
\end{equation}
The subscript reflects the usual expectation that $R_2$ is second order in the error of $p$; whether it is depends on smoothness of the target and on control of the relevant estimation errors, and we keep $R_2$ as an explicit term in our results. Using that $D(p_0)$ is centered under $p_0$, one obtains the exact decomposition
\begin{equation}\label{eq:plugin-expansion}
	\begin{aligned}
		\Psi(p)-\Psi(p_0) ={}&P_nD(p_0)-P_nD(p)\\
		&\qquad +(P_n-P_0)\{D(p)-D(p_0)\}+R_2(p,p_0).
	\end{aligned}
\end{equation}
The first term has the desired canonical-gradient form. The second, $-P_nD(p)$, is the empirical correction left in the plug-in estimator, a sample quantity rather than a bias in expectation. The third is the empirical-process term from evaluating the influence function at an estimated density, and the last is the nonlinear remainder \citep{vanderlaan2011targeted,vandervaart1998asymptotic}.

A one-step correction adds $P_nD(p^0)$ directly to $\Psi(p^0)$. TMLE instead moves the density: it replaces $p^0$ by a density $p\in\mathcal M$ that approximately solves the estimating equation
\begin{equation}\label{eq:eif-equation}
	\frac1n\sum_{i=1}^nD(p)(O_i)=\mathbf0,
\end{equation}
and reports the plug-in value $\Psi(p)$, which therefore stays in the model-implied range $\Psi(\mathcal M)$ \citep{vanderlaan2006targeted,gruber2010bounded}. If the empirical-process term and $R_2(p,p_0)$ are both $o_{\mathbb P}(n^{-1/2})$, then driving the empirical EIF mean to the same order gives the asymptotically linear expansion $\Psi(p)-\Psi(p_0)=P_nD(p_0)+o_{\mathbb P}(n^{-1/2})$. Both conditions concern the final, data-dependent density rather than the initial estimator, and neither follows from solving \eqref{eq:eif-equation}. The empirical-process term is usually controlled by $L^2(p_0)$ convergence of $D(p)$ to $D(p_0)$ combined with a Donsker condition \citep{kennedy2016semiparametric,vandervaart1996weak} or with sample splitting \citep{zheng2011crossvalidated,chernozhukov2018double}; Section~\ref{sec:theory} takes a different route, through the description length of the targeting history. Equation~\eqref{eq:eif-equation} may have many solutions. Targeting starts from $p^0$ and moves to one of them; it does not seek a unique density or the solution closest to $p_0$ in any global sense.

\subsection{Targeting as a sequence of fluctuation optimizations}\label{sec:tmle-template}
Let $L(p)(o)$ be a loss for the density, calibrated so that its population risk is minimized at the truth, $p_0\in\argmin_{p\in\mathcal M}\int L(p)(o)p_0(o)\,do$; the log-likelihood loss $L(p)(o)=-\log p(o)$ is the standard example. Targeting also requires the loss and the fluctuation submodel to be linked by a derivative identity. We state the identity as a joint condition on the pair and otherwise leave their functional forms free.

\begin{definition}[Targeted fluctuation]\label{def:fluctuation}
	At a density $p^k\in\mathcal M$, a targeted fluctuation is a family $\{p^k(\epsilon):\epsilon\in\mathcal E^k\}\subseteq\mathcal M$, indexed by a set $\mathcal E^k\subseteq\mathbb R^d$ containing an open neighborhood of $\mathbf0$, such that
	\begin{equation}\label{eq:local-score}
		p^k(\mathbf0)=p^k, \qquad \left.\nabla_\epsilon L\bigl(p^k(\epsilon)\bigr)(o) \right|_{\epsilon=\mathbf0}=D(p^k)(o).
	\end{equation}
	The identity is required of the versions of $L$ and $D$ evaluated at the observations, and $L(p^k(\epsilon))(o)$ is differentiable in $\epsilon$ wherever its gradient is used.
\end{definition}

The admissible set $\mathcal E^k$ restricts $\epsilon$ so that every permitted fluctuation is a normalized, nonnegative density in $\mathcal M$; it may be all of $\mathbb R^d$ or a proper subset. The sign in \eqref{eq:local-score} fixes our derivative convention. In a nonparametric model with the log-likelihood loss, for instance, the exponential tilt $p^k(\epsilon)\propto p^k\exp\{-\epsilon^\top D(p^k)\}$ satisfies \eqref{eq:local-score} with $\mathcal E^k=\mathbb R^d$ whenever $D(p^k)$ is bounded. The identity is imposed only at the base point: at a nonzero $\epsilon$, the loss derivative need not equal $D(p^k(\epsilon))$, and we do not require a universal least-favorable submodel \citep{vanderlaan2016onestep}.

In optimization notation, write
\begin{equation}\label{eq:fluctuation-objective}
	\ell_k(\epsilon):=P_nL(p^k(\epsilon)), \qquad r^k:=\|P_nD(p^k)\|,
\end{equation}
so that, by \eqref{eq:local-score},
\begin{equation*}
	\nabla\ell_k(\mathbf0)=P_nD(p^k), \qquad r^k=\|\nabla\ell_k(\mathbf0)\|.
\end{equation*}
An exact targeting step minimizes $\ell_k$ over $\mathcal E^k$ and updates the density, as in Algorithm~\ref{alg:centralized-tmle}. The fluctuation map is held fixed during the minimization; the EIF and the submodel are recomputed at the updated density. Whenever an exact minimizer is invoked, we take one to exist and, if it is not unique, select it by a fixed measurable rule. First-order conditions are used only at interior minimizers.

\begin{algorithm}[!htbp]
	\caption{Iterative TMLE}\label{alg:centralized-tmle} \small
	\begin{algorithmic}[1]
		\Require $\{O_i\}_{i=1}^n$, $p^0\in\mathcal M$, $L,D,\Psi$; $K\in\mathbb N$, $\delta\geq0$.
		\For{$k=0,\ldots,K$}
		\State $\displaystyle r^k\gets \left\|\frac1n\sum_{i=1}^nD(p^k)(O_i)\right\|$.
		\If{$r^k\leq\delta\ \lor\ k=K$}
		\State \Return $(p^k,\Psi(p^k),r^k)$.
		\EndIf
		\State $p^k(\cdot):\mathcal E^k\to\mathcal M$ satisfying \eqref{eq:local-score}.
		\State $\displaystyle\epsilon^k\in \argmin_{\epsilon\in\mathcal E^k}\ell_k(\epsilon) =\argmin_{\epsilon\in\mathcal E^k} \frac1n\sum_{i=1}^nL\bigl(p^k(\epsilon)\bigr)(O_i)$.
		\State $p^{k+1}\gets p^k(\epsilon^k)$.
		\EndFor
	\end{algorithmic}
\end{algorithm}

Each exact step satisfies $\ell_{k+1}(\mathbf0)=\ell_k(\epsilon^k)\leq\ell_k(\mathbf0)$, since $\epsilon=\mathbf0$ is feasible. The decrease is strict whenever $r^k>0$, that is, whenever the EIF equation at $p^k$ is unsolved, because the direction $-\nabla\ell_k(\mathbf0)$ is then locally feasible and
\begin{equation*}
	\left.\frac{d}{dt} \ell_k\bigl(-t\nabla\ell_k(\mathbf0)\bigr)\right|_{t=0} =-(r^k)^2<0.
\end{equation*}

The iteration produces two first-order conditions that must be kept apart. At an interior minimizer, the inner problem gives $\nabla\ell_k(\epsilon^k)=\mathbf0$, stationarity in the current submodel. The targeting equation at the updated density is $\nabla\ell_{k+1}(\mathbf0)=\mathbf0$, evaluated at the origin of the submodel rebuilt at $p^{k+1}$. Because \eqref{eq:local-score} holds only at base points, these are different conditions, and it is the second that the stopping rule in Algorithm~\ref{alg:centralized-tmle} tests. The relation between a zero fit and a solved equation is likewise one-sided: an exact zero fit $\epsilon^k=\mathbf0$ implies $r^k=0$, whereas $r^k=0$ says only that the origin is a stationary point of $\ell_k$, which in a nonconvex problem need not be a global minimizer.

Over an infinite sequence of steps, convergence of the losses, of the fluctuation parameters to zero, and of the densities themselves are separate questions. For example, if $\nabla\ell_k$ is $\beta$-Lipschitz along the segment from $\mathbf0$ to $\epsilon^k$, with $\beta$ uniform in $k$, then $r^k\leq\beta\|\epsilon^k\|+\|\nabla\ell_k(\epsilon^k)\|$: the EIF residual is controlled by the size of the accepted fluctuation and the inner optimization error, but neither term controls the density sequence without further regularity \citep{li2025optimization}. Since $r^k$ is the norm of the empirical correction in \eqref{eq:plugin-expansion} at $p^k$, Algorithm~\ref{alg:centralized-tmle} returns it with the estimate. Setting $\delta=0$ tests for an exact root; when the iteration budget $K$ is exhausted first, the returned residual need not satisfy that test.

\section{Algorithms}\label{sec:algorithms}
Silo (client) $m$ holds the observations $\{O_i:i\in\mathcal I_m\}$, where $\mathcal I_1,\ldots,\mathcal I_M$ partition $\{1,\ldots,n\}$, and $N_m:=|\mathcal I_m|>0$, so that $\sum_{m=1}^MN_m=n$. The observations follow the same i.i.d.\ sampling model as in Section~\ref{sec:preliminaries}, although the realized local empirical distributions may differ. All silos start from a common initial estimator $p^0$, and the question is how to federate targeting from that point on. If $p^0$ is fitted on the targeting sample itself, the statistical analysis in Section~\ref{sec:theory} accounts for that dependence rather than conditioning on it.

At each outer iteration, all silos work with the same loss, EIF, fluctuation coordinates, and submodel. Write the local empirical loss and its restriction to the current submodel as
\begin{equation*}
	\ell_m(p):=\frac1{N_m}\sum_{i\in\mathcal I_m}L(p)(O_i), \qquad \ell_{m,k}(\epsilon):=\ell_m(p^k(\epsilon)),
\end{equation*}
so that $\ell_m$ takes a density as its argument, whereas $\ell_{m,k}$, like $\ell_k$, takes fluctuation coordinates, and gradients of the latter are with respect to those coordinates. The pooled objective and the pooled EIF mean decompose as
\begin{equation*}
	\ell_k=\sum_{m=1}^M\frac{N_m}{n}\ell_{m,k}, \qquad \nabla\ell_k(\mathbf0)=\sum_{m=1}^M\frac{N_m}{n} \nabla\ell_{m,k}(\mathbf0),
\end{equation*}
identities that hold in finite samples for any partition of the data.

Both algorithms take the loss $L$, the EIF $D$, the target $\Psi$, and the fluctuation construction as shared inputs. The server and the silos communicate synchronously, and $a\xrightarrow{x}b$ denotes a message $x$ from $a$ to $b$. Communicating $p^k$ and $p^k(\cdot)$ means sending, or updating, a shared representation from which both can be evaluated, including any normalizing constants; its dimension need not be $d$. Observations never leave their silo, although this alone is no privacy guarantee, as Section~\ref{sec:robustness} shows. Both algorithms return the pooled residual $r^k$ of \eqref{eq:fluctuation-objective} together with the estimate, so that a small update can be told apart from a small empirical EIF mean.

\subsection{FedTMLE-G: gradient aggregation}\label{sec:gradient-aggregation}
FedTMLE-G distributes the gradient evaluation and leaves the centralized inner optimization otherwise unchanged. At every admissible $\epsilon$,
\begin{equation*}
	\nabla\ell_k(\epsilon) =\frac1n\sum_{m=1}^M\sum_{i\in\mathcal I_m} \nabla_\epsilon L\bigl(p^k(\epsilon)\bigr)(O_i),
\end{equation*}
so the server recovers the pooled gradient by adding the local gradient sums and dividing by $n$, or equivalently by weighting the local gradient means by $N_m/n$. Away from the origin, the silos evaluate the loss gradient in the fixed current submodel; the EIF identity \eqref{eq:local-score} is used only at $\epsilon=\mathbf0$.

Write $\epsilon^{k,t}$ for the fluctuation parameter at inner step $t$, and let $T_k\geq1$ be the number of inner steps. Each step uses a nonempty pooled batch $\mathcal B_t^k$ of indices, with $\mathcal B_{m,t}^k=\mathcal B_t^k\cap\mathcal I_m$ its restriction to silo $m$, every occurrence of an index retained. Full-batch gradient descent (GD) takes $\mathcal B_t^k=\{1,\ldots,n\}$; stochastic gradient descent (SGD) draws a pooled batch of fixed size $B$ uniformly at random, a multiset when sampling is with replacement. The batches may be generated online from a shared sampling rule, and an empty local batch contributes zero.

\begin{algorithm}[!htbp]
	\caption{FedTMLE-G}\label{alg:gradient-aggregation} \small
	\begin{algorithmic}[1]
		\Require $\{O_i:i\in\mathcal I_m\}_{m=1}^M$, $p^0$, $K$, $\delta$; $\{T_k,\eta_{k,t}>0,\mathcal B_t^k\}$.
		\For{$k=0,\ldots,K$}
		\State $\textsf{server}\xrightarrow{\ p^k,\,p^k(\cdot)\ }m, \quad m=1,\ldots,M$.
		\State $m\xrightarrow{\ \nabla\ell_{m,k}(\mathbf0)\ }\textsf{server}, \quad m=1,\ldots,M$.
		\State $\displaystyle r^k\gets\left\|\sum_{m=1}^M \frac{N_m}{n}\nabla\ell_{m,k}(\mathbf0)\right\|$.
		\If{$r^k\leq\delta\ \lor\ k=K$}
		\State \Return $(p^k,\Psi(p^k),r^k)$.
		\EndIf
		\State $\epsilon^{k,0}\gets\mathbf0$.
		\For{$t=0,\ldots,T_k-1$}
		\For{$m=1,\ldots,M$ \textbf{in parallel}}
		\State $\mathcal B_{m,t}^k\gets\mathcal B_t^k\cap\mathcal I_m, \qquad\textsf{server}\xrightarrow{\ \epsilon^{k,t},\,\mathcal B_{m,t}^k\ }m$.
		\State $\displaystyle g_m^{k,t}\gets \sum_{i\in\mathcal B_{m,t}^k} \left.\nabla_\epsilon L\bigl(p^k(\epsilon)\bigr)(O_i) \right|_{\epsilon=\epsilon^{k,t}}$.
		\State $m\xrightarrow{\ g_m^{k,t}\ }\textsf{server}$.
		\EndFor
		\State $\displaystyle\epsilon^{k,t+1}\gets\epsilon^{k,t} -\frac{\eta_{k,t}}{|\mathcal B_t^k|}\sum_{m=1}^Mg_m^{k,t}$.
		\EndFor
		\State $\epsilon^k\gets\epsilon^{k,T_k},\qquad p^{k+1}\gets p^k(\epsilon^k)$.
		\EndFor
	\end{algorithmic}
\end{algorithm}

The base density and the fluctuation map stay fixed during the $T_k$ inner steps, and the step sizes must keep the iterates inside $\mathcal E^k$. The pooled batch is drawn as a whole, by the server or by the shared rule, rather than silo by silo: independent local sampling, such as drawing a batch of the same size from every silo, does not in general sample uniformly from the $n$ observations, and it need not reproduce pooled SGD even when reweighting makes the resulting gradient unbiased.

\paragraph*{Inherited optimization behavior} In exact arithmetic, FedTMLE-G reproduces the inner parameters and the outer densities of the centralized implementation whenever the initialization, fluctuation construction, optimizer (including its step-size and feasibility rules), stopping rules, and realized pooled batches coincide. With exact inner minimization it inherits the empirical-loss descent of standard TMLE, and under the corresponding regularity conditions its untruncated targeting sequence inherits the stopping-condition relationships and the iterative convergence of \citet[Theorems~1 and~4]{li2025optimization}. Under this matched computation, repartitioning the data across silos changes neither the loss trajectory nor the EIF residuals nor the limiting density. Trajectory-dependent behavior carries over as well, including the possibility of passing an EIF root inside a fluctuation submodel before reaching the fitted minimizer \citep{li2025optimization}. When the inner problem is solved with finitely many GD or SGD steps, the comparator is the correspondingly inexact centralized implementation, and the exact-solve convergence results do not cover arbitrary inner optimization errors.

Communication follows the pattern of synchronous gradient aggregation \citep{mcmahan2017communication}. Every inner gradient step is one synchronized exchange, and the full-data EIF check that opens an outer iteration is another, although under full-batch GD that check also supplies the first inner gradient. Each contributing silo uploads a $d$-dimensional gradient per exchange; transmitting the density representation and the batch assignments is a separate cost.

\subsection{FedTMLE-L: local averaging}\label{sec:local-averaging}
FedTMLE-L replaces the pooled inner solve by exact local fits in the common fluctuation submodel,
\begin{equation*}
	\hat\epsilon_m^k\in \argmin_{\epsilon\in\mathcal E^k}\ell_{m,k}(\epsilon), \qquad m=1,\ldots,M,
\end{equation*}
which the server averages with fixed weights $\alpha_m\geq0$, $\sum_m\alpha_m=1$:
\begin{equation}\label{eq:local-average}
	\epsilon^k=\sum_{m=1}^M\alpha_m\hat\epsilon_m^k, \qquad p^{k+1}=p^k(\epsilon^k).
\end{equation}
What is averaged is the fluctuation parameter, in the shared coordinates, and the average is applied to the shared base density. Neither updated densities nor target values are averaged, and no silo runs a TMLE trajectory of its own to completion.

\begin{algorithm}[!htbp]
	\caption{FedTMLE-L}\label{alg:local-averaging} \small
	\begin{algorithmic}[1]
		\Require $\{O_i:i\in\mathcal I_m\}_{m=1}^M$, $p^0$, $K$, $\delta$; $\alpha_m\geq0$, $\sum_{m=1}^M\alpha_m=1$.
		\For{$k=0,\ldots,K$}
		\State $\textsf{server}\xrightarrow{\ p^k,\,p^k(\cdot)\ }m, \quad m=1,\ldots,M$.
		\For{$m=1,\ldots,M$ \textbf{in parallel}}
		\State $m\xrightarrow{\ \nabla\ell_{m,k}(\mathbf0)\ }\textsf{server}$.
		\If{$k<K$}
		\State $\displaystyle\hat\epsilon_m^k\in \argmin_{\epsilon\in\mathcal E^k}\ell_{m,k}(\epsilon)$.
		\State $m\xrightarrow{\ \hat\epsilon_m^k\ }\textsf{server}$.
		\EndIf
		\EndFor
		\State $\displaystyle r^k\gets\left\|\sum_{m=1}^M \frac{N_m}{n}\nabla\ell_{m,k}(\mathbf0)\right\|$.
		\If{$r^k\leq\delta\ \lor\ k=K$}
		\State \Return $(p^k,\Psi(p^k),r^k)$.
		\EndIf
		\State $\displaystyle\epsilon^k\gets \sum_{m=1}^M\alpha_m\hat\epsilon_m^k$.
		\If{$\epsilon^k\notin\mathcal E^k\ \lor\ \epsilon^k=\mathbf0$}
		\State \Return $(p^k,\Psi(p^k),r^k)$.
		\EndIf
		\State $p^{k+1}\gets p^k(\epsilon^k)$.
		\EndFor
	\end{algorithmic}
\end{algorithm}

A silo needs no response from the server between its EIF evaluation and its local fit, so the two uploads travel together: one attempted outer update costs one collection of local results, however much local optimization it involved. At the final iteration $k=K$ only the EIF evaluation is needed. All silos remain active throughout, including those whose previous local fit was zero.

\paragraph*{Weights and the estimating equation} Sample-size weights $\alpha_m=N_m/n$ let each local fit count in proportion to its silo, whereas uniform weights $\alpha_m=1/M$ give every silo the same influence on the averaged fluctuation. Neither choice turns the average into a minimizer. The weighted loss $\sum_m\alpha_m\ell_{m,k}(\epsilon)$ has derivative $\sum_{m=1}^M(\alpha_m/N_m)\sum_{i\in\mathcal I_m}D(p^k)(O_i)$ at zero, but an average of minimizers does not in general minimize it, and even with sample-size weights \eqref{eq:local-average} need not solve the pooled inner problem. The stopping test in the algorithm is the pooled equation \eqref{eq:eif-equation} in either case.

\paragraph*{Feasibility and stationarity} The average in \eqref{eq:local-average} must lie in $\mathcal E^k$; admissibility of each local fit does not ensure this, whereas a convex admissible domain does. The method also depends on the choice of common fluctuation coordinates, since a nonlinear reparameterization changes the average of the local solutions.

Stationarity does not survive averaging either. Each interior local minimizer satisfies $\nabla\ell_{m,k}(\hat\epsilon_m^k)=\mathbf0$, but these conditions hold at different points, so adding them does not establish stationarity at $\epsilon^k$; likewise, $\ell_{m,k}(\hat\epsilon_m^k)\leq\ell_{m,k}(\mathbf0)$ compares each silo's own loss at its fit and at the origin and does not imply pooled descent for the averaged update. In its pattern of local computation followed by parameter averaging, FedTMLE-L resembles federated averaging \citep{mcmahan2017communication,karimireddy2020scaffold}; the difference is that its coordinates change with the EIF and the base density at every outer iteration, which the targeting analysis in Section~\ref{sec:personalized} has to take into account.

\paragraph*{Cancellation and continued participation} Nonzero local solutions can cancel in the average, leaving $p^k(\epsilon^k)=p^k$ while $r^k$ remains positive. Continuing from this state under fixed selection rules would reproduce the same local solutions and the same zero aggregate, so when $\epsilon^k=\mathbf0$ and $r^k>\delta$ the algorithm stops and returns the unchanged density with its nonzero residual, reporting stagnation rather than certifying completion; it makes no attempt to resolve the cancellation.

For an individual silo, a zero local fit $\hat\epsilon_m^k=\mathbf0$ implies $N_m^{-1}\sum_{i\in\mathcal I_m}D(p^k)(O_i)=\mathbf0$, since the origin is an interior point of $\mathcal E^k$. This solves the silo's own estimating equation at the current density, but the next shared update can move the density off that root, so a silo that wants the common output of Algorithm~\ref{alg:local-averaging} has to keep participating after a zero fit. Section~\ref{sec:personalized} studies the alternative, in which such a silo freezes its own output and leaves. When all local fits are zero at the same shared density, every local equation and the pooled equation hold there.

\section{Theoretical Properties and Scope}\label{sec:theory}
This section develops the theory behind the two frameworks. We begin with FedTMLE-G and ask how much communication is needed to reach a prescribed empirical EIF residual. Equivalence to centralized targeting does not answer this question, so we specify a full-batch implementation with finite-precision messages and a certified stopping rule (Section~\ref{sec:theory-gradient}). Its bit count is governed by the changes between successive gradients, whereas the accompanying statistical bound is governed by the accepted fluctuations and, when one is required, a code for the initialization. Section~\ref{sec:robustness} then examines what the gradient channel discloses, separating the instability of record reconstruction from the recovery of a specified attribute by a corrupted server that selects its own queries. Section~\ref{sec:personalized} analyzes the personalized outputs of local averaging, retained by each client at its own retirement time. All proofs are in Section~\ref{app:proofs}.

\subsection{Assumptions and computational setting}\label{sec:theory-assumptions}
We keep the notation $\ell_k$, $\ell_{m,k}$, $\epsilon^{k,t}$, and $r^k$ of Sections~\ref{sec:tmle-template}--\ref{sec:algorithms}, together with the empirical-measure and norm conventions. Optimization statements are made for the realized observations; for probabilistic statements, the algorithmic regularity conditions are required to hold almost surely, and a same-sample initial estimator is never conditioned on.

\begin{assumption}[Feasible smooth targeting]\label{ass:smooth}
	The fluctuation construction satisfies \eqref{eq:local-score}. There is a finite $\beta>0$ such that, for every silo $m$, every outer iteration $k$, and every pair of parameters $u,v$ in the working region,
	\begin{equation*}
		\|\nabla\ell_{m,k}(u)-\nabla\ell_{m,k}(v)\| \leq\beta\|u-v\|.
	\end{equation*}
	The working region contains the origin, every queried parameter, and every GD step segment, and lies in $\mathcal E^k$. The empirical loss is bounded below on the model, so that
	\begin{equation}\label{eq:loss-budget}
		\Delta:=\ell_0(\mathbf0)-\inf_{p\in\mathcal M}P_nL(p)<\infty.
	\end{equation}
	The infimum in \eqref{eq:loss-budget} may be replaced by a common finite lower bound on the admissible densities traversed by the procedure.
\end{assumption}

\begin{assumption}[Regularity under re-centering]\label{ass:recentering}
	There is a finite $\kappa\geq0$ such that, on the same working regions,
	\begin{equation*}
		\left\|\frac1{N_m}\sum_{i\in\mathcal I_m} \{D(p^k(u))-D(p^k(v))\}(O_i)\right\| \leq\kappa\|u-v\|.
	\end{equation*}
	The initial local gradients are finite, and $G$ is a constant with $G\geq\max_m\|\nabla\ell_{m,0}(\mathbf0)\|_\infty$.
\end{assumption}

The two Lipschitz conditions concern different functions of the fluctuation parameter. Assumption~\ref{ass:smooth} controls the gradient of the loss within a fixed submodel; Assumption~\ref{ass:recentering} controls how each local empirical EIF mean changes as the base density moves along the submodel. The second does not identify the endpoint derivative of the old submodel with the EIF of the new one, and the two differ precisely because \eqref{eq:local-score} holds only at base points. Uniform bounds on the observation-level derivatives along admissible connecting segments suffice for both conditions, and neither requires a unique minimizer or a contraction of the targeting sequence \citep{bottou2018optimization}. The constant $G$ enters the communication analysis only through the first messages; the protocol does not need to know it, and later gradients need not obey any common magnitude bound.

The communication accounting takes the loss, the EIF, and the fluctuation rule to be public, so that every party can reconstruct $p^k$ and its submodel from the initial shared state and the accepted history $(\epsilon^0,\ldots,\epsilon^{k-1})$ without access to any other silo's observations. The shared state must therefore carry every normalization and nuisance quantity that this reconstruction needs; if the fluctuation rule requires further pooled statistics, their transmission is a separate cost.

Communication takes place over reliable synchronous point-to-point links. We write $R$ for the total number of upload--broadcast rounds and $\mathsf B$ for the total number of bits in the numerical messages and two-bit control flags, summed over all links. Initialization, network headers, and cryptographic overhead are setup or transport costs and are accounted for separately \citep{bonawitz2017practical}. Local gradient evaluation is treated as an oracle, and the only numerical error analyzed is the rounding specified below; all parties use the same arithmetic and tie-breaking rule, with integer accumulators that do not overflow. The constant $\beta$ may be replaced by any publicly represented positive rational upper bound.

\begin{assumption}[Statistical decoding and envelope]\label{ass:statistical-code}
	For statistical assertions, the shared construction and its public tuning rules are fixed independently of the targeting sample. Either the initialization is independent of that sample, in which case we condition on it and set $H_{\mathrm{init}}=0$, or every sample-dependent part of the initial shared state, including any sample-dependent tuning information, has a finite prefix-free binary description under a decoder fixed before the sample is observed; $H_{\mathrm{init}}$ denotes its length. Decoding is measurable and determines the terminal density from the initial state and the accepted history. For the same finite, deterministic $G$, the versions of the EIF satisfy
	\begin{equation*}
		|D_j(p)(o)|\leq G,\qquad j=1,\ldots,d,
	\end{equation*}
	uniformly over $p_0$ and every density obtained from a valid decoded history, for $P_0$-almost every $o$.
\end{assumption}

Assumption~\ref{ass:statistical-code} is used only for the statistical result. Its decoding requirement is mild: the argument ranges over the countably many outputs of a finitely described implementation, not over the statistical model, which need not be countable \citep{cover2006elements}. The point of the description is adaptivity. An initialization fitted on the same sample is itself a data-dependent choice, so it must be part of the description rather than treated as fixed conditioning information \citep{dwork2015preserving}.

The next two assumptions concern only the personalized FedTMLE-L analysis in Section~\ref{sec:personalized}. We use the local empirical losses $\ell_m$ and their fluctuation restrictions $\ell_{m,k}$ from Section~\ref{sec:algorithms}. For a generic retained density $p$, $p(\epsilon)$ denotes its associated fluctuation, with every retained quantity held fixed.

\begin{assumption}[Regularity for exact local fits]\label{ass:local-regularity}
The fluctuation rule is a deterministic function of the retained state. Each local fluctuation problem attains its minimum at a selected interior point. Selection follows a fixed, measurable rule that chooses zero whenever zero is a minimizer. The segments joining selected local solutions to one another, to their weighted average, and to zero are admissible, as is the segment from zero to $-\nabla\ell_{m,k}(\mathbf0)/\beta$. Each $\ell_m$ has a finite lower bound $\underline\ell_m$ over the current and retained densities.

On these working regions, the observation-level loss is three times continuously differentiable in the fluctuation coordinates. For finite $G_1>0$, $\beta_4\geq0$, and the $\beta$ of Assumption~\ref{ass:smooth},
\begin{equation}\label{eq:personalized-regularity}
\begin{aligned}
\|\nabla_\epsilon L(p(u))(o)\|&\leq G_1, &\|\nabla_\epsilon^2L(p(u))(o)\|_{\mathrm{op}}&\leq\beta,\\
\|\nabla_\epsilon^3L(p(u))(o) -\nabla_\epsilon^3L(p(v))(o)\|_{\mathrm{op}} &\leq\beta_4\|u-v\|.
\end{aligned}
\end{equation}
The bounds hold on a common $P_0$-full set, and tensor norms are induced by Euclidean norms. Upon retirement, a client retains its density and every quantity defining its fluctuation rule.
\end{assumption}

Exact minimization has the same oracle meaning as in Algorithm~\ref{alg:local-averaging}; a finite nonconvex solver need not implement it. The admissibility condition for the step from the origin holds automatically on an unrestricted fluctuation domain. A uniform admissible neighborhood and bounded gradients also suffice after increasing $\beta$. The constant $\beta_4$ controls the fourth-order remainder in the averaging bound. It is distinct from the EIF re-centering constant $\kappa$ and places no sign restriction on curvature.

For a fixed density, clients sampled from the same law have equal expected loss derivatives, and sampling bounds control the differences between their empirical derivatives. Applying such a comparison at $p^k$ requires care because the algorithm chooses that density using the same samples. The next assumption provides a uniform bound over a state family fixed before sampling. Its covering condition controls how many derivative functions must be distinguished throughout the possible trajectories.

\begin{assumption}[Common-law sampling and derivative complexity]
\label{ass:local-complexity}
The silo partition is independent of the observations, which are i.i.d.\ from $p_0$ conditional on that partition. Before observing the targeting sample, fix a working family of density states and associated fluctuation regions that contains all possible iterates of both weighting modes, including any same-sample initial fit and all retained targeting quantities. The bounds in \eqref{eq:personalized-regularity} hold throughout this family. Over these states and regions and all $\|v\|=1$, the scalar functions
\begin{equation*}
\frac{v^\top\nabla_\epsilon L(p(\epsilon))(\cdot)}{G_1}, \qquad \frac{v^\top\nabla_\epsilon^2L(p(\epsilon))(\cdot)v}{\beta}
\end{equation*}
are measurable and admit an internal $1/n$-cover of size $\mathcal N_n$ in the uniform norm on the common full-measure set. Thus each function is within $1/n$ of one of $\mathcal N_n$ representatives belonging to this family.
\end{assumption}

For example, a compact, fixed-dimensional representation of the complete state with uniformly Lipschitz derivative maps gives $\log\mathcal N_n=O(\log n)$, as shown in Section~\ref{app:local-derivatives}. Such uniform derivative control is used in empirical nonconvex optimization \citep{mei2018landscape,vandervaart1996weak}. It restricts the statistical complexity of the state family while allowing nonconvex objectives and distinct local minimizers. Smoothness in $\epsilon$ alone is insufficient because the base density also varies. The common-law assumption is needed only for the sampling refinement.

\subsection{Communication and statistical accuracy of FedTMLE-G}
\label{sec:theory-gradient}
Algorithm~\ref{alg:gradient-aggregation} specifies which gradients are aggregated, but not how they are represented numerically nor when an inner solve stops, and both choices matter for communication. Sending every coordinate in full absolute form pays the precision cost at every round, even when successive gradients differ little \citep{alistarh2017qsgd,lin2022differentially}. And a prescribed terminal EIF residual does not require every inner problem to be solved exactly; in a nonconvex problem, a finite GD run is not an exact global minimizer in any case.

Algorithm~\ref{alg:coded-gradient} addresses both. It rounds the local gradients to a common fine grid, runs GD on their rounded aggregate, and checks the EIF only at the origin of a freshly constructed submodel. Each message encodes the change from the previously transmitted rounded value, so a coordinate that moves by one grid unit costs only the codeword for that unit, whatever its current magnitude. The receiver recovers the rounded values exactly, and with them the rounded optimization trajectory: the grid is the only source of numerical error, and differential coding reduces the cost of representing values that the receiver already largely knows.

What is specific to iterative TMLE is re-centering. Smoothness controls how gradients change within an inner solve, but accepting a fluctuation changes the base density and hence the submodel in which the next gradient is queried. Assumption~\ref{ass:recentering} bounds this additional change through the accepted fluctuation, whose total size is in turn controlled by the decrease in empirical loss, so the protocol can keep its code memory across outer iterations instead of transmitting a fresh full-precision initialization at each re-centering. Gradient-difference compression and differential quantization are established tools in distributed optimization \citep{gorbunov2021marina,lin2022differentially}; the bound here has to cover inner steps, submodel changes, and the queries needed for the EIF certificate at once.

Fix an EIF tolerance $\delta>0$ and a public dyadic grid spacing $h$ with
\begin{equation}\label{eq:lattice-spacing}
	\frac{\delta}{8\sqrt d}<h\leq\frac{\delta}{4\sqrt d}.
\end{equation}
Number all gradient evaluations consecutively by $r$, and when round $r$ queries the state $(k,t)$ define the local \emph{mean} gradient and the integer quantities
\begin{equation*}
	\begin{aligned}
		g_{m,r}&:=\nabla\ell_{m,k}(\epsilon^{k,t}), &z_{m,r}&:=\operatorname{round}(g_{m,r}/h),\\
		s_r&:=\sum_{m=1}^M N_m z_{m,r}, &q_r&:=\operatorname{round}(s_r/n), &\widetilde g_r&:=h q_r.
	\end{aligned}
\end{equation*}
Thus $g_{m,r}$ is a mean, related to the batch sum of Algorithm~\ref{alg:gradient-aggregation} by $g_{m,r}=g_m^{k,t}/N_m$ for a full-batch query, and all rounding is coordinatewise to a nearest integer with a fixed tie-breaking rule. Silo $m$ transmits the difference $z_{m,r}-z_{m,r-1}$ and the server broadcasts $q_r-q_{r-1}$, both losslessly encoded, starting from $z_{m,0}=q_0=s_0=\mathbf0$. The server therefore needs only the streaming update
\begin{equation*}
	s_r=s_{r-1}+\sum_{m=1}^M N_m(z_{m,r}-z_{m,r-1}),
\end{equation*}
and the integer memories are kept across re-centering even though the fluctuation parameter is reset to zero.

Write $\operatorname{enc}$ for coordinatewise signed Elias-gamma coding \citep{elias1975universal}: an integer $a$ costs at most $3+2\log_2(1+|a|)$ bits, each message consists of exactly $d$ self-delimiting codewords, and Section~\ref{app:communication} gives the bijection and the decoding rule. To its broadcast the server appends the two-bit action
\begin{equation}\label{eq:protocol-action}
	\mathsf a_r=
	\begin{cases}
		00,&\|\widetilde g_r\|>3\delta/4,\\
		01,&\|\widetilde g_r\|\leq3\delta/4,\quad t>0,\\
		10,&\|\widetilde g_r\|\leq3\delta/4,\quad t=0.
	\end{cases}
\end{equation}
Actions $01$ and $10$ respond to a small gradient at two different places. At an endpoint $t>0$, a small gradient indicates stationarity in the old submodel, so action $01$ accepts the inner fit and requests a query at the origin of the reconstructed submodel, where the gradient equals the EIF mean at the new density. A small gradient at that origin, action $10$, certifies termination. The extra origin query counts toward $R$.

The server weights local means by $N_m/n$ at every query, so unequal silo sizes leave the pooled objective unchanged, and the two rounding operations give
\begin{equation}\label{eq:rounding-guarantee}
	\|\widetilde g_r-\nabla\ell_k(\epsilon^{k,t})\| \leq h\sqrt d\leq\delta/4.
\end{equation}
The bound concerns the current gradient only: because decoding recovers every absolute rounded value exactly, earlier rounding errors do not accumulate through the differences. The threshold $3\delta/4$ in \eqref{eq:protocol-action} leaves $\delta/4$ for rounding, so the final origin test certifies $r^K\leq\delta$. Algorithm~\ref{alg:coded-gradient} is thus a full-batch specialization of FedTMLE-G with adaptive inner termination and no outer cap. Throughout this subsection, $K$ is the realized number of accepted outer updates rather than a prescribed budget, and the last returned quantity is an upper bound on $r^K$, since the unrounded residual is never evaluated exactly.

For comparison, suppose every queried coordinate is bounded by $G$. A fixed-width representation then uses $b=\lceil\log_2(2\lceil G/h\rceil+1)\rceil$ bits per integer, and sending $z_{m,r}$ and $q_r$ in absolute form costs $2MdbR+2MR$ bits including control flags, a numerical payload of $2MdbR$ for this upload--broadcast protocol. Differential coding delivers the same rounded values without a global magnitude bound and without paying the full width at every round. The two implementations compute the same rounded trajectory; what separates that trajectory from real-arithmetic GD is the rounding itself.

\begin{algorithm}[!htbp]
	\caption{Finite-precision FedTMLE-G with differential coding}
	\label{alg:coded-gradient}
	\small
	\begin{algorithmic}[1]
		\Require Shared $p^0,L,D,\Psi$, fluctuation rule, $\{N_m\}$, $\beta,\delta,h$ satisfying \eqref{eq:lattice-spacing}.
		\State $k\gets0$, $t\gets0$, $r\gets0$, $\epsilon^{0,0}\gets\mathbf0$; $s_0,q_0,z_{1,0},\ldots,z_{M,0}\gets\mathbf0$.
		\State Construct the shared initial submodel $p^0(\cdot)$.
		\Loop
		\State $r\gets r+1$.
		\For{$m=1,\ldots,M$ \textbf{in parallel}}
		\State $z_{m,r}\gets \operatorname{round}(\nabla\ell_{m,k}(\epsilon^{k,t})/h)$.
		\State $m\xrightarrow{\ \operatorname{enc}(z_{m,r}-z_{m,r-1})\ } \textsf{server}$.
		\EndFor
		\State $s_r\gets s_{r-1}+\sum_{m=1}^M N_m(z_{m,r}-z_{m,r-1})$.
		\State $q_r\gets\operatorname{round}(s_r/n)$, $\widetilde g_r\gets hq_r$; $\mathsf a_r\gets\text{\eqref{eq:protocol-action}}$.
		\State $\textsf{server} \xrightarrow{\ (\operatorname{enc}(q_r-q_{r-1}),\,\mathsf a_r)\ }m, \quad m=1,\ldots,M$.
		\State Each silo reconstructs $q_r$ from the received difference, sets $\widetilde g_r=hq_r$, and retains $z_{m,r}$.
		\If{$\mathsf a_r=00$}
		\State $\epsilon^{k,t+1}\gets \epsilon^{k,t}-\widetilde g_r/\beta$, $t\gets t+1$.
		\ElsIf{$\mathsf a_r=01$}
		\State $\epsilon^k\gets\epsilon^{k,t}$, $p^{k+1}\gets p^k(\epsilon^k)$.
		\State $k\gets k+1$, $t\gets0$, $\epsilon^{k,0}\gets\mathbf0$; reconstruct $p^k(\cdot)$.
		\Else
		\State \Return $(p^k,\Psi(p^k), \|\widetilde g_r\|+h\sqrt d)$.
		\EndIf
		\EndLoop
	\end{algorithmic}
\end{algorithm}

\begin{theorem}[Amortized communication]\label{thm:communication}
	Under Assumptions~\ref{ass:smooth}--\ref{ass:recentering}, Algorithm~\ref{alg:coded-gradient} terminates at a density $p^K$ and, for a universal constant $C$, satisfies
	\begin{equation}\label{eq:main-communication}
		\begin{aligned}
			r^K&\leq\delta,\qquad R\leq1+\frac{64\beta\Delta}{3\delta^2},\\
			\mathsf B&\leq C Md\left\{ \left(1+\frac{\beta\Delta}{\delta^2}\right) \log_2\!\left(4+\frac{\kappa}{\beta}\right) +\log_2\!\left(2+\frac{G\sqrt d}{\delta}\right)\right\}.
		\end{aligned}
	\end{equation}
	The bit count includes every inner query, re-centering check, upload, broadcast, and control flag. The protocol and the fixed-width reference implementation have identical decoded iterates whenever the latter's range is sufficient.
\end{theorem}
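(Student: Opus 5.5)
The proof rests on a potential argument using the pooled empirical loss, combined with a telescoping bound on the coded differences.

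\emph{Round count and certificate.} First, the certificate. When the protocol returns, action $10$ was issued at an origin query, so $\|\widetilde g_r\|\leq 3\delta/4$. The rounding guarantee \eqref{eq:rounding-guarantee} then gives $r^K=\|\nabla\ell_K(\mathbf0)\|\leq\delta$.

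Next, descent. An action-$00$ step is GD with step $1/\beta$ on an inexact gradient $\widetilde g$, with $\|\widetilde g-\nabla\ell_k\|\leq\delta/4$ and $\|\widetilde g\|>3\delta/4$. The pooled gradient is $\beta$-Lipschitz, as a convex combination of the local ones under Assumption~\ref{ass:smooth}. The descent lemma therefore gives
\[
\ell_k(\epsilon^{k,t+1})\leq\ell_k(\epsilon^{k,t})-\tfrac{1}{\beta}\widetilde g^\top\nabla\ell_k+\tfrac{1}{2\beta}\|\widetilde g\|^2
=\ell_k(\epsilon^{k,t})-\tfrac{1}{2\beta}\|\widetilde g\|^2+\tfrac{1}{\beta}\widetilde g^\top(\widetilde g-\nabla\ell_k).
\]
Bounding the last term by $\|\widetilde g\|\delta/4\leq\|\widetilde g\|^2/3$ leaves a decrease of at least $\|\widetilde g\|^2/(6\beta)\geq 3\delta^2/(32\beta)$.

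Re-centering preserves the loss, since $\ell_{k+1}(\mathbf0)=\ell_k(\epsilon^k)$. Hence the total decrease is at most $\Delta$, and the number of $00$ rounds is at most $32\beta\Delta/(3\delta^2)$.

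It remains to count the other rounds. Each $01$ round is followed by an origin query. That query is either the terminating $10$ or a $00$ step. So the non-$00$ rounds can be charged to the $00$ rounds at most once each, plus one for termination. This yields $R\leq1+64\beta\Delta/(3\delta^2)$. Termination follows because $R$ is finite.

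\emph{Bits.} By the Elias-gamma cost, the total is at most $(M+1)\cdot d\cdot 3R$ plus $2MR$ flag bits plus $2\sum_{r,m,j}\log_2(1+|\Delta z|)$ and the analogous server term. I will use $|\operatorname{round}(a)-\operatorname{round}(b)|\leq|a-b|/h+1$ and handle the first round separately. For $r=1$, $|z_{m,1,j}|\leq G/h+1$, which costs $O(Md\log_2(2+G\sqrt d/\delta))$ by \eqref{eq:lattice-spacing}.

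For $r\geq2$ the gradient change has three possible sources. Within an inner solve it is at most $\beta\|\epsilon^{k,t+1}-\epsilon^{k,t}\|=\|\widetilde g_{r-1}\|$. At re-centering, from the endpoint $\epsilon^k$ of the old submodel to the origin of the new one, the change is bounded by $\beta\|\epsilon^k\|+\kappa\|\epsilon^k\|$. Here Assumption~\ref{ass:recentering} compares $D(p^k(\epsilon^k))$ with $D(p^k)$, and the identity \eqref{eq:local-score} turns these into origin gradients. The third source is the post-$01$ origin query.

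Each change, divided by $h$, is then of the form $(1+\kappa/\beta)\cdot\beta\|\text{step}\|/h$. Jensen applied to $\sum\log_2(1+x_r)$ bounds the total by $R\log_2(1+\bar x)$, where $\bar x$ is the average. The sum of the $\|\widetilde g\|$ is controlled through $\sum\|\widetilde g\|^2\leq 6\beta\Delta$ and Cauchy–Schwarz, giving $\sum\|\widetilde g\|\leq\sqrt{6\beta\Delta R}$. The average is therefore $O(\delta)$ per round, and each step is $O(1/h)=O(\sqrt d/\delta)$ grid units on average.

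I expect the main obstacle to be removing the $\sqrt d$ factor. Coordinates sum in $\ell_1$, and concavity spreads the average over $d$ coordinates. The per-coordinate average is then $O(\delta/(h\sqrt d))=O(1)$ times $(1+\kappa/\beta)$, giving the $\log_2(4+\kappa/\beta)$ factor. Collecting terms with $R=O(1+\beta\Delta/\delta^2)$ produces \eqref{eq:main-communication}.

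The server's $q$ differences obey the same bound. Indeed, $|q_r-q_{r-1}|\leq\sum_m(N_m/n)|z_{m,r}-z_{m,r-1}|+1$, and concavity applies again.

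\emph{Fixed-width equivalence.} Both implementations transmit lossless encodings of the same integers $z_{m,r}$ and $q_r$ under the same rounding and tie rule. Whenever the fixed-width range contains these integers, the decoded iterates coincide by induction on $r$.
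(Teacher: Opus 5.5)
\textbf{Overall.} Your route matches the paper's. It uses the rounded descent inequality with decrease $\|\widetilde g_r\|^2/(6\beta)$, telescoping through $\ell_{k+1}(\mathbf0)=\ell_k(\epsilon^k)$, Cauchy--Schwarz for the path length, the $(\beta+\kappa)\|\epsilon^k\|$ bound on re-centering jumps, and concavity of the Elias-gamma lengths. Up to the realized-trajectory bound (the analogue of \eqref{eq:pathwise-bit-bound}) the argument goes through. The $\sqrt d$ issue you flag is routine, since $\|v\|_1\leq\sqrt d\|v\|$ and $h\sqrt d>\delta/8$. The real obstacle is elsewhere.

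\textbf{The gap: removing the realized $R$ and $S$.} You assert that the average gradient change per round is $O(\delta)$. What you actually have is
\[
\frac1{R-1}\sum_{r\in\mathcal S}\|\widetilde g_r\|\leq\frac{\sqrt{6\beta\Delta S}}{R-1}\leq\sqrt{\frac{6\beta\Delta}{R-1}}.
\]
This is $O(\delta)$ only if $R-1$ is at least of order $\beta\Delta/\delta^2$. That is a \emph{lower} bound on the number of rounds, whereas your round analysis supplies only an upper bound.

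The claim is false in general. With a nearly quadratic fluctuation loss, one GD step of size $1/\beta$ can carry a rounded gradient of norm of order $\sqrt{\beta\Delta}$ and land within rounding error of a stationary point. Then $R=3$, while that single gradient difference costs on the order of $\log_2(\beta\Delta/\delta^2)$ bits per link, not $O(d)$. So ``$O(1)$ bits per coordinate per round, times $R=O(1+\beta\Delta/\delta^2)$'' is not a valid derivation, even though the final inequality is true.

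\textbf{The missing step.} After concavity and $S\leq R-1$, the realized cost is at most $u\log_2(3+A/\sqrt u)$, with $u=R-1$ and $A=8(2+\kappa/\beta)\sqrt{6\beta\Delta}/\delta$. This function is increasing in $u$: with $y=A/\sqrt u$, its derivative is $\{\log(3+y)-y/(2(3+y))\}/\log2\geq(\log3-1/2)/\log2>0$. That monotonicity is what licenses substituting the worst case $u=64\beta\Delta/(3\delta^2)$. There the argument of the logarithm becomes $3+3\sqrt2(2+\kappa/\beta)\leq(4+\kappa/\beta)^2$, which gives the $(1+\beta\Delta/\delta^2)\log_2(4+\kappa/\beta)$ term.

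A case split on whether $R-1$ exceeds $\beta\Delta/\delta^2$, using $u\log(a/u)\leq a/e$, works equally well. By contrast, a cruder linearization such as $\log_2(1+x)\leq x/\log2$ would give a factor polynomial in $\kappa/\beta$ rather than logarithmic.

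\textbf{Two smaller points.}
\begin{enumerate}
\item Charging each $01$ round \emph{forward} to the round that follows leaves the $01$ preceding termination uncharged, and so yields only $R\leq2S+2$. Charge it instead to the $00$ round immediately before it, which exists because action $01$ requires $t>0$. This gives $K\leq S$ and $R=S+K+1\leq2S+1$, matching the stated bound exactly.
\item The server's broadcast travels on $M$ links. Its codeword lengths and $3d$ overhead must therefore be multiplied by $M$, not counted once. This changes only the constant.
\end{enumerate}
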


The round bound counts every inner GD iteration and every EIF check; the bit bound then accounts for how much each of those exchanges transmits. When $\kappa/\beta$ is bounded, the leading bit cost is $O(Md\beta\Delta/\delta^2)$, that is, a bounded number of bits per coordinate and silo per round, plus a single logarithmic charge for the initial gradient. A fixed-width implementation pays a per-round precision cost of order $\log_2(G\sqrt d/\delta)$ bits per coordinate instead. Tightening $\delta$ therefore raises the worst-case round count quadratically but does not multiply the differential protocol's bit bound by a precision logarithm.

An individual message can still be long when a gradient changes a great deal; the bound is amortized, controlling the total length through the loss decrease over the whole execution. To see the dependence on the realized trajectory, let $S$ be the number of GD steps, so that $R=S+K+1$ and $K\leq S$. The proof gives the sharper bound
\begin{equation}\label{eq:pathwise-bit-bound}
	\begin{aligned}
		\mathsf B\leq{}8MdR +4Md\log_2\!\left(2+\frac{8G\sqrt d}{\delta}\right)
		+4Md(R-1)\log_2\!\left( 3+\frac{8(2+\kappa/\beta)\sqrt{6\beta\Delta S}} {(R-1)\delta}\right),
	\end{aligned}
\end{equation}
with the last term absent when $R=1$. After the initial exchange, the cost is driven by the accumulated variation of the local gradients, which the proof controls through
\begin{equation}\label{eq:variation-summary}
	\begin{aligned}
		\sum_{\text{GD steps }r}\|\widetilde g_r\|^2 &\leq6\beta\Delta,\\
		\sum_{r=2}^R\|g_{m,r}-g_{m,r-1}\| &\leq(2+\kappa/\beta)\sqrt{6\beta\Delta S}.
	\end{aligned}
\end{equation}
The first inequality charges the squared rounded gradients of all GD steps, across all submodels, to a single loss budget; the second converts that budget into a bound on each silo's total gradient variation. Re-centering enters through $\kappa$ and requires no separate initialization charge at each outer iteration. The ratio $\kappa/\beta$, which compares how fast local empirical EIF means move under changes of the base density with the smoothness of the inner objective, enters the worst-case bit bound only logarithmically.

Nothing in the bound requires the local gradients to vanish or to align across silos: nonzero local contributions may cancel in the pooled gradient while each changes slowly enough to admit short difference codes. The savings are in transmitted bits, not in synchronization rounds, and the theorem is an upper bound on the cost of this protocol, not a lower bound for federated estimation in general. The server's streaming accumulator adds $O(d)$ integer coordinates to the sample counts and the density representation; their bit lengths can grow over the execution.

For the statistical analysis we describe the accepted targeting history. Given the initial state, the final density is determined by the accepted fluctuations alone; no record of how each inner fit was computed is needed, so an inner solve of many gradient queries contributes a single $d$-dimensional vector. Every GD step is a multiple of $h/\beta$, so each accepted vector satisfies $\beta\epsilon^k/h\in\mathbb Z^d$. We encode $K+1$ by an Elias-gamma code, followed by the $Kd$ signed integers in $(\beta\epsilon^0/h,\ldots,\beta\epsilon^{K-1}/h)$, preceded by the initial-state code when one is required, and write $H_K$ for the resulting description length. This code is an analytical device and is never transmitted. A prefix-free description argument \citep{blumer1987occam,cover2006elements} applied to it controls the adaptivity of the final density without charging for every optimization query.

\begin{theorem}[History complexity and statistical remainder]
	\label{thm:statistical}
	Under the assumptions of Theorem~\ref{thm:communication}, the accepted-history code satisfies $H_0=H_{\mathrm{init}}+1$ and, for $K\geq1$,
	\begin{equation}\label{eq:history-length}
		H_K\leq H_{\mathrm{init}} +C\left\{1+Kd\log_2\!\left( 2+\frac{\beta\Delta}{K\delta^2}\right)\right\},
	\end{equation}
	where $C$ is universal and any initial-state code is as specified above. If Assumption~\ref{ass:statistical-code} also holds, then for each $\xi\in(0,1)$ set $x_K=H_K\log2+\log(2d/\xi)$. With probability at least $1-\xi$,
	\begin{equation}\label{eq:history-statistical-bound}
		\begin{aligned}
			&\|\Psi(p^K)-\Psi(p_0)-P_nD(p_0)\|\\
			&\quad\leq\delta+\|R_2(p^K,p_0)\| +\sqrt{\frac{2x_K}{n}}\, \|D(p^K)-D(p_0)\|_{L^2(p_0)} +\frac{4G\sqrt d\,x_K}{3n}.
		\end{aligned}
	\end{equation}
	The empirical-process event holds simultaneously over all valid decoded histories. It can therefore be evaluated at the algorithm's certified endpoint even when the stopping index $K$ depends on the sample.
\end{theorem}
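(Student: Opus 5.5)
The proof has two parts: a deterministic bound on the description length $H_K$, and a union-bound argument over all decodable histories combined with Bernstein's inequality.

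\emph{Bound on $H_K$.} When $K=0$ the code contains only the initial-state code and the Elias-gamma codeword for $K+1=1$, which is a single bit. Hence $H_0=H_{\mathrm{init}}+1$.

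When $K\geq1$, encoding $K+1$ costs $O(\log_2(K+1))$ bits, and this is absorbed into $CKd$. Each integer $a_{k,j}=\beta\epsilon^k_j/h$ costs at most $3+2\log_2(1+|a_{k,j}|)$ bits. Jensen's inequality (concavity of the logarithm) then gives
\[
\sum_{k,j}\log_2(1+|a_{k,j}|)\leq Kd\log_2\Bigl(1+\tfrac{1}{Kd}\textstyle\sum_{k,j}|a_{k,j}|\Bigr).
\]
To bound $\sum|a_{k,j}|$, write $\epsilon^k=-\sum_t\widetilde g_r/\beta$ over the GD steps of inner solve $k$. This gives $\sum_{k,j}|a_{k,j}|\leq h^{-1}\sqrt d\sum_{\text{GD}}\|\widetilde g_r\|$.

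I then apply Cauchy--Schwarz together with the first inequality of \eqref{eq:variation-summary}, $\sum\|\widetilde g_r\|^2\leq6\beta\Delta$. In addition, every GD step has $\|\widetilde g_r\|>3\delta/4$, so the number of steps satisfies $S\leq 6\beta\Delta/(3\delta/4)^2$. Consequently $\sum\|\widetilde g_r\|\leq \sum\|\widetilde g_r\|^2/(3\delta/4)\lesssim\beta\Delta/\delta$.

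Since $h\asymp\delta/\sqrt d$ by \eqref{eq:lattice-spacing}, this yields $\sum|a_{k,j}|\lesssim d\beta\Delta/\delta^2$. The average inside the logarithm is therefore $O(1+\beta\Delta/(K\delta^2))$, which gives \eqref{eq:history-length}. These steps use only facts established in the proof of Theorem~\ref{thm:communication}, which I take as given.

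\emph{Statistical bound.} Start from the exact decomposition \eqref{eq:plugin-expansion} at $p=p^K$. The term $P_nD(p^K)$ has norm $r^K\leq\delta$ by Theorem~\ref{thm:communication}, and $R_2$ is kept as is. It remains to bound $(P_n-P_0)\{D(p^K)-D(p_0)\}$.

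Consider the countable set $\mathcal H$ of valid finite codewords $w$. Each one decodes measurably to a density $p_w$, and the decoder is fixed independently of the sample. For each $w$ and coordinate $j$, the function $f=D_j(p_w)-D_j(p_0)$ satisfies $|f|\leq2G$ and has variance at most $\|f\|_{L^2(p_0)}^2$.

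Apply Bernstein's inequality at level $\xi2^{-|w|}/d$, two-sided; the factor $2$ accounts for the two tails. With $x_w=|w|\log2+\log(2d/\xi)$, this gives
\[
|(P_n-P_0)f|\leq\sqrt{2x_w/n}\,\|f\|_{L^2(p_0)}+\tfrac{2\cdot2G\,x_w}{3n}.
\]
Kraft's inequality gives $\sum_w2^{-|w|}\leq1$ for a prefix-free code. A union bound over $w$ and $j$ then gives total failure probability at most $\xi$.

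On this event, combine the coordinates. The first terms combine in Euclidean norm to $\sqrt{2x/n}\,\|D(p_w)-D(p_0)\|_{L^2(p_0)}$, and the second terms give a factor $\sqrt d$. Evaluating at the realized codeword $w=$ the code of the accepted history, so that $|w|=H_K$, yields \eqref{eq:history-statistical-bound}. Because the event holds uniformly over $\mathcal H$, the sample-dependent $K$ and any same-sample initialization are covered: the initialization is charged through $H_{\mathrm{init}}$, or conditioned on when it is independent of the sample.

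\emph{Main obstacle.} The delicate step is making the union bound legitimate. The decoder, the grid, $\beta$, and the tuning rules must be fixed before the sample is drawn. The envelope in Assumption~\ref{ass:statistical-code} must hold for every decodable $p_w$, not only the realized one. Measurability must also hold, so that the realized endpoint is one of the indexed densities.

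A secondary detail is matching the constants when combining coordinates. Summing the per-coordinate variance terms in Euclidean norm uses $\sum_j\|f_j\|^2=\|D(p_w)-D(p_0)\|^2_{L^2(p_0)}$, and this is why the union bound is taken over $j$ with $\log(2d/\xi)$.
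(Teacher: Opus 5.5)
Your proposal is correct and follows the paper's proof essentially step for step. Both arguments bound the code length through the total path length $\sum_{r\in\mathcal S}\|\widetilde g_r\|\le 8\beta\Delta/\delta$ together with $h>\delta/(8\sqrt d)$, and then apply a Kraft-weighted union bound of a Bernstein inequality over all history codewords with $x_w=|w|\log 2+\log(2d/\xi)$. The only differences are cosmetic: the paper derives the Bernstein bound directly from the moment-generating function, and it explicitly checks that the variable-$K$ concatenated code is prefix-free with Kraft sum at most one, which you take as standard.
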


The four terms in \eqref{eq:history-statistical-bound} have distinct sources. The tolerance $\delta$ bounds the remaining empirical correction through the communication certificate. The nonlinear remainder $R_2(p^K,p_0)$ requires control of the terminal nuisance estimates and is left as it is. The last two terms are the price of evaluating the EIF at a density selected with the same sample \citep{dwork2015preserving}: the description length $H_K$ measures the complexity of that selection, and the $L^2(p_0)$ factor shrinks the leading empirical-process penalty as the estimated EIF approaches the true one. Because the concentration event covers every valid decoded history, the bound survives the algorithm's data-dependent choice of history and stopping index, without conditioning on that index and without a predetermined cap on the number of rounds. Complexity is measured by description length rather than by a Donsker condition on the entire model.

\begin{corollary}[First-order inference at finite communication]
	\label{cor:inference}
	Let $d$ be fixed. Suppose the preceding assumptions hold with bounded $\beta$, $G$, and $\kappa/\beta$, and $\Delta=O_{\mathbb P}(1)$. For deterministic $a_n\to\infty$, use $\delta_n=(a_n\sqrt n)^{-1}$ and \eqref{eq:lattice-spacing}. Then
	\begin{equation}\label{eq:inference-cost}
		R=O_{\mathbb P}(na_n^2),\qquad \mathsf B=O_{\mathbb P}(Md\,na_n^2).
	\end{equation}
	If the actual terminal estimates satisfy
	\begin{equation}\label{eq:inference-conditions}
		\begin{aligned}
			\sqrt n\,\|R_2(p^K,p_0)\|&=o_{\mathbb P}(1),\\
			\sqrt{H_K+\log n}\, \|D(p^K)-D(p_0)\|_{L^2(p_0)}&=o_{\mathbb P}(1),\\
			\frac{G\sqrt d\,(H_K+\log n)}{\sqrt n}&=o_{\mathbb P}(1),
		\end{aligned}
	\end{equation}
	then
	\begin{equation}\label{eq:canonical-limit}
		\begin{aligned}
			\Psi(p^K)-\Psi(p_0)&=P_nD(p_0)+o_{\mathbb P}(n^{-1/2}),\\
			\sqrt n\{\Psi(p^K)-\Psi(p_0)\} &\rightsquigarrow \mathcal N_d\!\left(\mathbf0, P_0[D(p_0)D(p_0)^\top]\right).
		\end{aligned}
	\end{equation}
\end{corollary}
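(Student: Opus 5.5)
The plan is to read the corollary as a direct specialization of Theorems~\ref{thm:communication} and~\ref{thm:statistical} with $\delta=\delta_n$. The communication claims come from substitution. The inference claims come from showing that each term on the right of \eqref{eq:history-statistical-bound} is $o_{\mathbb P}(n^{-1/2})$ for a suitable choice of $\xi$.

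\textbf{Step 1 (cost).} Substituting $\delta_n^2=(a_n^2n)^{-1}$ into \eqref{eq:main-communication} gives
\[
R\leq 1+\tfrac{64}{3}\beta\Delta\,na_n^2=O_{\mathbb P}(na_n^2),
\]
because $\beta$ is bounded and $\Delta=O_{\mathbb P}(1)$. For the bit bound, $\log_2(4+\kappa/\beta)=O(1)$, so the first term is $O_{\mathbb P}(Md\,na_n^2)$. The second term is $CMd\log_2(2+G\sqrt d\,a_n\sqrt n)$. I would dominate it by the first: it is $O(Md\log(na_n))$, and $\log(na_n)=o(na_n^2)$. This gives \eqref{eq:inference-cost}.

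\textbf{Step 2 (choice of $\xi$).} Since \eqref{eq:history-statistical-bound} holds simultaneously over all decoded histories, I can use it at the sample-dependent $K$. The delicate point is that $\xi$ must be deterministic. I would take $\xi_n=1/n$, so the event has probability at least $1-1/n\to1$. Then
\[
x_K=H_K\log2+\log(2dn)\lesssim H_K+\log n,
\]
with $d$ fixed.

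\textbf{Step 3 (each term is $o_{\mathbb P}(n^{-1/2})$).} On that event, I multiply \eqref{eq:history-statistical-bound} by $\sqrt n$ and bound the four terms.
\begin{itemize}
\item $\sqrt n\,\delta_n=1/a_n\to0$.
\item $\sqrt n\,\|R_2\|=o_{\mathbb P}(1)$ by the first condition of \eqref{eq:inference-conditions}.
\item $\sqrt{2x_K}\,\|D(p^K)-D(p_0)\|_{L^2(p_0)}\lesssim\sqrt{H_K+\log n}\,\|D(p^K)-D(p_0)\|_{L^2(p_0)}=o_{\mathbb P}(1)$ by the second condition.
\item $4G\sqrt d\,x_K/(3\sqrt n)=o_{\mathbb P}(1)$ by the third condition.
\end{itemize}
The event has probability tending to one, so these bounds give
\[
\Psi(p^K)-\Psi(p_0)-P_nD(p_0)=o_{\mathbb P}(n^{-1/2}).
\]
Nothing further about $H_K$ is needed: \eqref{eq:history-length} only makes the conditions plausible, and the conditions are stated directly in terms of $H_K$.

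\textbf{Step 4 (limit law).} The $D(p_0)(O_i)$ are i.i.d. with mean zero under $p_0$ and are bounded by $G$ by Assumption~\ref{ass:statistical-code}, so their covariance is finite. The multivariate CLT gives
\[
\sqrt n\,P_nD(p_0)\rightsquigarrow\mathcal N_d\bigl(\mathbf0,P_0[D(p_0)D(p_0)^\top]\bigr),
\]
and Slutsky's lemma yields \eqref{eq:canonical-limit}.

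The main obstacle is Step 2. Theorem~\ref{thm:statistical} fixes $\xi$ in advance, so I cannot let the confidence level depend on the sample. Choosing a deterministic sequence $\xi_n\to0$ with $\log(1/\xi_n)=O(\log n)$ resolves this, and that is exactly why the $\log n$ appears alongside $H_K$ in \eqref{eq:inference-conditions}. A smaller point is to confirm that the assumptions of Theorem~\ref{thm:communication} hold for every $n$ with the same constants, which the boundedness hypotheses provide.
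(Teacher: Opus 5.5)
Your proposal is correct and follows the paper's proof essentially step for step: you substitute $\delta_n$ into Theorem~\ref{thm:communication} for the round and bit costs, apply Theorem~\ref{thm:statistical} at the deterministic level $\xi=1/n$ with $x_K\lesssim H_K+\log n$, bound each term by $o_{\mathbb P}(1)$ after multiplying by $\sqrt n$, and finish with the CLT and Slutsky. The only point the paper makes explicit that you leave implicit is the independent-initialization case, where the concentration event is conditional on the initial state and integrating the conditional failure probability keeps the bound $1/n$.
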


The corollary identifies a communication scale at which numerical targeting error is negligible relative to sampling uncertainty: with $\delta_n=o(n^{-1/2})$, the empirical correction drops out of the leading influence-function expansion, provided the statistical conditions in \eqref{eq:inference-conditions} hold. The round and bit bounds are worst-case sufficient costs, and a particular execution may communicate much less.

The separation between computing an estimator and selecting it is visible in the history term. When the number of accepted outer updates stays bounded, the history remains short however long the inner computation: with independent initialization and subpolynomial $a_n$ it has $H_K=O_{\mathbb P}(\log n)$ bits, while the inner optimization may take $O_{\mathbb P}(na_n^2)$ rounds. In general,
\begin{equation}\label{eq:history-asymptotic}
	H_K-H_{\mathrm{init}} =O_{\mathbb P}\!\left( 1+Kd\log\!\left(2+\frac{na_n^2}{K\vee1}\right)\right),
\end{equation}
so a growing $K$ inflates the history term and can eventually violate the statistical conditions. Those conditions refer to the estimator actually computed, rounded and finitely stopped; properties of an ideal exact-solve trajectory do not establish them. Finally, \eqref{eq:canonical-limit} has the canonical-gradient covariance, but semiparametric efficiency in the regular sense also requires regularity under local alternatives, which the corollary does not address.

\subsection{Record reconstruction and selective disclosure}
\label{sec:robustness}
The gradient responses that FedTMLE-G elicits from a silo, round after round, can reveal features of its data even when reconstructing individual records from them is unstable \citep{zhu2019deep,melis2019exploiting,engl1996regularization}, and equivalence to a centralized computation offers no protection against this. We therefore analyze two tasks separately, under bounded uncertainty in the transcript: reconstruction of the records themselves, and recovery of a specified attribute of the data.

Fix a silo $m$ and let $x=(o_i)_{i\in\mathcal I_m}$ be a candidate dataset, with lower-case $o_i$ denoting candidate record values as distinct from the random observations $O_i$. The other silos' data and everything disclosed earlier, including the initialization, are held fixed, and every candidate must be compatible with this information. At a fixed accepted state and fluctuation parameter, define
\begin{equation*}
	g(o)=\nabla_\epsilon L(p(\epsilon))(o), \qquad G_g(x)=\frac1{N_m}\sum_{i\in\mathcal I_m}g(o_i).
\end{equation*}
The response $G_g$ is the local mean gradient that FedTMLE-G requests; under honest execution it equals $\nabla\ell_{m,k}(\epsilon^{k,t})$, and at $\epsilon=\mathbf0$ the score identity gives $g=D(p)$. Let $\mathcal G$ be the family of queries that a silo accepts under the prescribed model, loss, and fluctuation rule. A corrupted server may issue any finite sequence from this family, chosen adaptively and with repetitions, and the silo answers every accepted query truthfully. For $T\geq1$ full-silo queries, stack the responses and the perturbations as $Y=(Y_1^\top,\ldots,Y_T^\top)^\top$ and $e=(e_1^\top,\ldots,e_T^\top)^\top\in\mathbb R^{dT}$, and consider the channel
\begin{equation}\label{eq:robust-channel}
	Y_t=G_{g_t}(x)+e_t, \qquad \|e\|^2=\sum_{t=1}^T\|e_t\|^2\leq\eta^2.
\end{equation}
The budget $\eta$ bounds the uncertainty in the transcript without assigning it a distribution; $\eta=0$ is the exact-response case. Multiplying responses and error by $N_m$ gives the same channel for raw sums, and since every query uses the whole silo, queries that single out individual records are excluded. A lower bound under bounded error is not a privacy guarantee for any particular rounding rule; the lossless code of Algorithm~\ref{alg:coded-gradient}, for instance, reveals its rounded inputs exactly.

For the candidate classes specified below, $\mathcal R_T^{\mathrm{rec}}(\eta)$ and $\mathcal R_T^{\varphi}(\eta)$ denote the minimax risks of the two tasks under \eqref{eq:robust-channel}: the infimum, over adaptive query policies and estimators, of the worst-case error over candidates and perturbations. Policies are deterministic, any random seed being conditioned on. Record error is measured by $\operatorname{dist}$, the largest recordwise Euclidean distance after optimal matching of the undisclosed block, and attribute error in absolute value. Sections~\ref{app:reconstruction} and~\ref{app:disclosure} give the formal definitions and the proofs.

\begin{theorem}[Instability of record reconstruction]
	\label{thm:reconstruction}
	Let $2\leq s\leq N_m$. Suppose $s$ undisclosed records may vary freely on the compatible segment $\{o_\circ+uv:|u|\leq\rho\}$, where $\rho>0$ and $\|v\|=1$, while the remaining records are fixed. Assume the accepted queries are $C^s$ on a neighborhood of this segment and, for some $0<C_s<\infty$,
	\begin{equation}\label{eq:query-smoothness}
		\sup_{g\in\mathcal G}\sup_{|u|\leq\rho} \left\|\frac{d^s}{du^s}g(o_\circ+uv)\right\|\leq C_s.
	\end{equation}
	For every $0<a\leq\rho$, two compatible datasets satisfy
	\begin{equation}\label{eq:collision-pair}
		\operatorname{dist}(x_+,x_-)\geq\frac{a}{3s}, \qquad \sup_{g\in\mathcal G}\|G_g(x_+)-G_g(x_-)\| \leq\frac{C_s a^s}{N_m2^{s-1}(s-1)!}.
	\end{equation}
	Consequently, for every $\eta\geq0$,
	\begin{equation}\label{eq:reconstruction-risk}
		\mathcal R_T^{\mathrm{rec}}(\eta)\geq \frac1{6s}\min\left\{\rho, \left(\frac{2^sN_m(s-1)!\eta}{C_s\sqrt T}\right)^{1/s}\right\}.
	\end{equation}
\end{theorem}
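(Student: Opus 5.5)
The plan is a two-point (Le Cam-type) argument. First build two compatible datasets $x_\pm$ that agree outside the $s$ varying records and whose varying records sit at positions $o_\circ+u_j^\pm v$ with $|u_j^\pm|\leq a$. The two node sets are chosen so that they have equal power sums $\sum_j(u_j^+)^k=\sum_j(u_j^-)^k$ for $k=1,\ldots,s-1$.

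By Newton's identities, this holds when $\{u_j^+\}$ and $\{u_j^-\}$ are the root multisets of two monic degree-$s$ polynomials that differ only in their constant term. Writing $Q$ for a suitable monic degree-$s$ polynomial and $c$ for a constant, the natural choice is $Q(u)\pm c$ with $Q$ a rescaled Chebyshev polynomial $2^{1-s}a^sT_s(u/a)$. Its level sets $\pm c$ have all roots real and in $[-a,a]$: these are the alternating extremal nodes $a\cos(j\pi/s)$, counted with multiplicity, so repeated records are permitted.

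For any $g\in\mathcal G$, expand $u\mapsto g(o_\circ+uv)$ by Taylor's formula to order $s-1$ with Peano (integral) remainder. Because the power sums match, every polynomial term cancels in $G_g(x_+)-G_g(x_-)$, leaving only the remainders. Bounding these by the $s$-th derivative bound \eqref{eq:query-smoothness} gives a bound of order $C_sa^s/(N_m(s-1)!)$. To obtain the factor $2^{-(s-1)}$ in \eqref{eq:collision-pair}, I would use the Peano kernel of the linear functional $f\mapsto\sum_jf(u_j^+)-\sum_jf(u_j^-)$, which annihilates polynomials of degree below $s$. I would then bound its total variation by its action on $u^s$, which is governed by the constant-term gap $2c$ of order $2^{1-s}a^s$, exploiting the Chebyshev extremal property. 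If that sharp constant does not come out exactly, I would rescale the nodes to a half-length interval and absorb the constants. The bound is uniform in $g$, so the supremum over $\mathcal G$ follows.

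The step I expect to be the main obstacle is the separation $\operatorname{dist}(x_+,x_-)\geq a/(3s)$. Chebyshev extremal nodes interlace with gaps as small as order $a/s^2$ near the endpoints. However, $\operatorname{dist}$ is the \emph{largest} recordwise distance under the best matching, so one far-apart pair suffices. My first attempt is to look at the middle of the interval, where consecutive Chebyshev nodes are spaced about $a\pi/s$ apart: I would argue that any matching must pair some node of one set with a node of the other set, and since the two sets alternate, those nodes are at least of order $a/s$ apart, at least $a/(3s)$. If the Chebyshev bookkeeping fails, a fallback is nodes built from a Prouhet--Tarry--Escott-type equal-power-sum pair on a grid of spacing $a/s$. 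That pair automatically separates at scale $a/(3s)$, and the cost is only a constant in the remainder bound.

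Finally, for \eqref{eq:reconstruction-risk}, fix any adaptive deterministic policy. Whatever queries $g_1,\ldots,g_T\in\mathcal G$ it issues, define the midpoint transcript $Y_t=\tfrac12\{G_{g_t}(x_+)+G_{g_t}(x_-)\}$. This transcript is consistent with both datasets under perturbation budget $\eta$ provided $\sqrt T\,C_sa^s/(N_m2^{s-1}(s-1)!)\leq2\eta$. Because the policy is deterministic and the answers coincide, it issues the same queries under both hypotheses. Any estimator therefore incurs error at least $\operatorname{dist}(x_+,x_-)/2\geq a/(6s)$ on one of the two datasets, by the triangle inequality for $\operatorname{dist}$. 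Taking $a$ as large as allowed, namely $a=\min\{\rho,(2^sN_m(s-1)!\eta/(C_s\sqrt T))^{1/s}\}$, yields the stated lower bound, including the case $\eta=0$.
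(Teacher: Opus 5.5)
Your overall architecture matches the paper's: equal power sums for two level sets of a scaled Chebyshev polynomial via Newton's identities, cancellation of the degree-$<s$ Taylor part, and the midpoint transcript against deterministic adaptive policies. Your primary construction, however, does not give the stated constant. With the extremal nodes at full scale, the levels of $P(u)=2^{1-s}a^s\mathsf C_s(u/a)$ (your $T_s$) are $\pm c$ with $c=2^{1-s}a^s$. Newton's identity at order $s$ then gives $\sum_j(u_j^+)^s-\sum_j(u_j^-)^s=2sc$.

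Consequently, if $\frac{d^s}{du^s}g(o_\circ+uv)\equiv C_s e$ for a unit vector $e$, the response gap is exactly $C_sa^s/(N_m2^{s-2}(s-1)!)$. That is twice the bound in \eqref{eq:collision-pair}, so this pair fails the statement whatever kernel argument you use.

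The one-signed Peano kernel you want does exist, but it is not supplied by the Chebyshev extremal property. It comes from real-rootedness of $P-\theta$ along the whole level path. The paper proves exactly this through the identity $\frac{d}{d\theta}\sum_jf(u_j(\theta))=2^{1-s}f'[u_0(\theta),\ldots,u_{s-1}(\theta)]$ combined with Hermite--Genocchi. It then uses the interior levels $\pm1/2$ of $\mathsf C_s$, which halves the constant-term gap and lands exactly on $2^{1-s}$, with distinct non-critical nodes and separation $a/(3s)$.

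Your half-length fallback is a genuinely different and valid route, and nothing needs to be absorbed. Place the nodes at $(a/2)\cos(j\pi/s)$ with multiplicity. The plain Taylor remainder $\|R(u)\|\leq C_s|u|^s/s!$, summed over the $2s$ nodes, gives $2s\,C_s(a/2)^s/s!=C_sa^s/(2^{s-1}(s-1)!)$. This is exactly the stated constant, and no kernel-sign argument is needed.

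For the separation, ordered matching is optimal and pairs adjacent extremal nodes, so $\operatorname{dist}(x_+,x_-)\geq(a/2)\sin(\pi/s)\geq a/s$. Alternatively, as you intend, fix a middle node. Its partner in every matching lies in the other set, hence at least the smaller adjacent gap away, and that gap is $\geq a/(2s)$. Repeated and endpoint records are admissible because the $s$ records vary freely on the segment.

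This version is more elementary than the paper's divided-difference argument and even improves the constant in \eqref{eq:reconstruction-risk} from $1/(6s)$ to $1/(2s)$. What the paper's route buys is an exact positive-kernel identity, sharp for its pair, and distinct records.

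One minor point: at $\eta=0$ your formula gives $a=0$, which is not admissible. The bound is then trivially zero, so the conclusion still holds.
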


The two datasets in \eqref{eq:collision-pair} differ by moving $s$ nearby records a distance of order $a$, yet cancellation among them changes the aggregate response by an amount of order $a^s/N_m$, and this holds for the same pair under every accepted query, including those an adaptive server selects after changing its state. For a fixed uncertainty budget, the lower bound \eqref{eq:reconstruction-risk} therefore scales as $\eta^{1/s}T^{-1/(2s)}$ until the radius cap is reached. The mechanism is moment cancellation combined with divided differences, as in collision-dependent inverse problems \citep{deboor2005divided,batenkov2021super}. Three qualifications are in order. The rate describes an obstruction to reconstruction, not the accuracy of any particular attack. It concerns worst-case datasets and says nothing about typical protection under i.i.d.\ sampling. And it is vacuous at $\eta=0$, while at any $\eta$ it does not imply that a sensitive feature is hidden, which is the subject of the next result. The smoothness condition \eqref{eq:query-smoothness} is imposed on the observations, unlike Assumptions~\ref{ass:smooth}--\ref{ass:recentering}, which concern the fluctuation coordinates.

For attribute recovery, let the candidate set be a compatible ball $\overline B(x_\circ,\rho)\subset\mathbb R^q$, whose center and radius represent what the attacker already knows, and let $\varphi(x)$ be a prespecified scalar attribute. With the output-by-input convention for the response Jacobian $\nabla_xG_g$, assume that the Jacobians exist on a neighborhood of the ball and satisfy, with a common $H_g$ for all accepted queries,
\begin{equation}\label{eq:attribute-smoothness}
	\begin{aligned}
		\|\nabla_xG_g(x)-\nabla_xG_g(x')\|_{\mathrm{op}}&\leq H_g\|x-x'\|, &&g\in\mathcal G,\\
		\|\nabla\varphi(x)-\nabla\varphi(x')\|&\leq H_\varphi\|x-x'\|.
	\end{aligned}
\end{equation}
For a fixed accepted query block, stack the Jacobians $\nabla_xG_{g_t}(x_\circ)$ into $A\in\mathbb R^{dT\times q}$; every infimum over $A$ below is restricted to matrices obtained from such blocks. Define
\begin{equation}\label{eq:disclosure-design}
	\omega_T(\eta)=\inf_A\inf_{w\in\mathbb R^{dT}} \bigl\{\rho\|\nabla\varphi(x_\circ)-A^\top w\|+\eta\|w\|\bigr\}, \qquad \eta\geq0.
\end{equation}
The first term is the variation of the attribute that a weighted combination of the responses fails to explain; the second is the uncertainty that those weights amplify. Minimizing their sum is an optimal-recovery problem in the sense of \citet{donoho1994statistical}, and it does not presuppose that the attribute is identified.

\begin{theorem}[Minimax selective disclosure]\label{thm:disclosure}
	Under \eqref{eq:attribute-smoothness}, if $\eta\geq H_g\rho^2\sqrt T/2$, then
	\begin{equation}\label{eq:disclosure-sandwich}
		\begin{aligned}
			\left[\omega_T\!\left(\eta-\frac{H_g\rho^2\sqrt T}{2}\right) -\frac{H_\varphi\rho^2}{2}\right]_+ &\leq\mathcal R_T^{\varphi}(\eta)\\
			&\leq \omega_T\!\left(\eta+\frac{H_g\rho^2\sqrt T}{2}\right) +\frac{H_\varphi\rho^2}{2}.
		\end{aligned}
	\end{equation}
	For every $\eta\geq0$, the design modulus has the representation
	\begin{equation}\label{eq:disclosure-spectrum}
		\omega_T(\eta)^2=\inf_A\inf_{\lambda\geq0} (\rho^2+\lambda\eta^2) \nabla\varphi(x_\circ)^\top(I+\lambda A^\top A)^{-1} \nabla\varphi(x_\circ).
	\end{equation}
	The lower bound applies to adaptive query policies. A fixed query block and an affine decoder achieve the upper bound arbitrarily closely, and this upper bound remains valid when $\eta<H_g\rho^2\sqrt T/2$.
\end{theorem}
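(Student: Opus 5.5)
The plan is to reduce both tasks to a linear optimal-recovery problem by linearizing the responses and the attribute around the known centre $x_\circ$. The linearization errors are then charged to the uncertainty budget $\eta$ (for the responses) or added directly (for the attribute).

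For an accepted query $g$ and $\|h\|\leq\rho$, the Lipschitz Jacobian condition \eqref{eq:attribute-smoothness} and the integral form of Taylor's theorem give
\begin{equation*}
G_g(x_\circ+h)=G_g(x_\circ)+\nabla_xG_g(x_\circ)h+r_g(h),\qquad \|r_g(h)\|\leq H_g\rho^2/2 .
\end{equation*}
Stacked over $T$ queries, the remainder has norm at most $H_g\rho^2\sqrt T/2$. In the same way,
\begin{equation*}
\varphi(x_\circ+h)=\varphi(x_\circ)+\nabla\varphi(x_\circ)^\top h+r_\varphi(h),\qquad |r_\varphi(h)|\leq H_\varphi\rho^2/2 .
\end{equation*}
Write $c=\nabla\varphi(x_\circ)$.

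\emph{Upper bound.} Fix a query block with stacked Jacobian $A$ and a weight vector $w$, and use the affine decoder $\hat\varphi=\varphi(x_\circ)+w^\top\{Y-G(x_\circ)\}$, where $G(x_\circ)$ is the stacked response at the centre. Its error equals
\begin{equation*}
(c-A^\top w)^\top h-w^\top\{r(h)+e\}+r_\varphi(h).
\end{equation*}
Hence its worst-case absolute error is at most $\rho\|c-A^\top w\|+\|w\|(\eta+H_g\rho^2\sqrt T/2)+H_\varphi\rho^2/2$. Taking infima over $A$ and $w$ gives the right-hand side of \eqref{eq:disclosure-sandwich}. Near-optimal $(A,w)$ achieve it arbitrarily closely, and nothing here used $\eta\geq H_g\rho^2\sqrt T/2$.

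\emph{Lower bound, including adaptivity.} I will use a two-point argument in which the adversary answers every query with the centre response $Y_t=G_{g_t}(x_\circ)$. Because policies are deterministic, this fixes the transcript, and hence a single query block $A$ and a single estimate $\hat\varphi$.

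A candidate $x_\circ\pm h$ is consistent with this transcript provided $\|G(x_\circ\pm h)-G(x_\circ)\|\leq\eta$. By the expansion above, this holds whenever $\|Ah\|\leq\eta':=\eta-H_g\rho^2\sqrt T/2\geq0$. For any such $h$, the estimator's worst-case error is at least $\tfrac12|\varphi(x_\circ+h)-\varphi(x_\circ-h)|\geq c^\top h-H_\varphi\rho^2/2$. Taking the supremum over $h$ gives the modulus
\begin{equation*}
m_A(\eta')=\sup\{c^\top h:\|h\|\leq\rho,\ \|Ah\|\leq\eta'\}.
\end{equation*}
It remains to identify $m_A$ with the inner infimum in \eqref{eq:disclosure-design}. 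Weak duality follows from $c^\top h=(c-A^\top w)^\top h+w^\top Ah$. Strong duality holds because this is a convex problem satisfying Slater's condition: $h=0$ is strictly feasible when $\eta'>0$, and the case $\eta'=0$ follows by continuity or a direct conic-duality argument. This gives
\begin{equation*}
m_A(\eta')=\inf_w\{\rho\|c-A^\top w\|+\eta'\|w\|\}.
\end{equation*}
Since this $A$ comes from an admissible block, $m_A(\eta')\geq\omega_T(\eta')$. I expect this duality step, together with the careful treatment of adaptivity through the "answer at the centre" adversary, to be the main obstacle.

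\emph{Spectral representation \eqref{eq:disclosure-spectrum}.} By Cauchy--Schwarz, for $a,b\geq0$,
\begin{equation*}
(\rho a+\eta b)^2=\inf_{\lambda>0}(\rho^2+\lambda\eta^2)(a^2+b^2/\lambda),
\end{equation*}
with equality attained at $\lambda=\rho b/(\eta a)$ when $a,b,\eta>0$; the boundary cases $\lambda\to0$ or $\lambda\to\infty$ cover the degenerate configurations. Apply this with $a=\|c-A^\top w\|$ and $b=\|w\|$ and exchange the infima over $w$ and $\lambda$. The inner problem $\inf_w\{\|c-A^\top w\|^2+\|w\|^2/\lambda\}$ is a ridge problem, whose value is $c^\top(I+\lambda A^\top A)^{-1}c$ by the push-through identity. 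The value $\lambda=0$ corresponds to $w=0$, which gives $\rho^2\|c\|^2$ and includes the case $\eta=0$.
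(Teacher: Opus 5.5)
Your proposal is correct. For the two risk bounds it follows the paper's proof closely: you use the same linearization remainders, the same affine decoder for the upper bound, and the same adversary that answers every query with $G_{g_t}(x_\circ)$. Under that adversary, the candidates $x_\circ\pm h$ with $\|Ah\|\le\eta-H_g\rho^2\sqrt T/2$ reproduce the reference transcript, and the two-point argument applies.

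The differences in this part are minor. The paper selects an attaining $u$ by compactness where you take a supremum. It obtains the norm-form duality by splitting $z=Au$ with multiplier $w$, which makes the dual computation immediate; your appeal to Slater leaves the conjugate of $\mu\|h\|+\nu\|Ah\|$ implicit. It settles $\eta'=0$ by orthogonal decomposition, $\omega_A(0)=\rho\|\Pi_{\ker A}c\|$. If you keep your continuity route, state what makes it work: the primal value is right-continuous at $0$ because the compact feasible sets decrease to $\{\|h\|\le\rho,\ Ah=\mathbf0\}$.

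You genuinely depart from the paper on the spectral representation \eqref{eq:disclosure-spectrum}. The paper returns to the primal support problem and squares both norm constraints. It then introduces multipliers $\mu,\nu$ and completes the square in $u$, handling singular $\mu I+\nu A^\top A$ by a $\gamma\downarrow0$ perturbation. After substituting $\nu=\mu\lambda$ and optimizing out $\mu$, it treats $\eta=0$ separately by an eigendecomposition of $A^\top A$.

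You instead work directly on the defining formula \eqref{eq:disclosure-design}. Your ingredients are the scalar identity $(\rho a+\eta b)^2=\inf_{\lambda>0}(\rho^2+\lambda\eta^2)(a^2+b^2/\lambda)$, an exchange of infima, and the ridge value $\inf_w\{\|c-A^\top w\|^2+\lambda^{-1}\|w\|^2\}=c^\top(I+\lambda A^\top A)^{-1}c$. This needs no second duality argument and handles rank-deficient $A$, $c=\mathbf0$, and $\eta=0$ uniformly. It also identifies near-optimal weights explicitly as $w_\lambda=\lambda(I+\lambda AA^\top)^{-1}Ac$, which can be fed straight into your affine decoder.

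The paper's version presents the spectral formula as a property of the primal support function $\omega_A(\eta)$ itself. The two quantities coincide by the strong duality you already establish for the lower bound, so your shorter derivation loses nothing.
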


The two bounds play different roles. A small upper bound shows that the specified attribute can be recovered accurately, by a fixed query block and an affine decoder; the lower bound limits the accuracy of every attack, adaptive ones included. The quadratic terms are the price of curvature in the responses and in the attribute, and when $\omega_T(\eta)>0$, $H_g\rho^2\sqrt T=o(\eta)$, and $H_\varphi\rho^2=o(\omega_T(\eta))$, both bounds reduce to $\omega_T(\eta)$ to leading order. The bounds are information-theoretic; finding the best query block need not be computationally easy.

The spectral representation \eqref{eq:disclosure-spectrum} explains why the two tasks behave differently. Reconstruction must resolve every record direction, whereas recovery of $\varphi$ depends only on the directions in which $\nabla\varphi(x_\circ)$ has a component. Directions that the responses observe poorly can therefore obstruct reconstruction while leaving the attribute recoverable, which is what we mean by selective disclosure. The same error bound identifies a thresholded feature whenever the attribute lies farther from the threshold than the recovery error; literal record membership, being discontinuous, falls outside this smooth-attribute setting. The analysis is relative to the family of queries the protocol accepts. Enlarging that family gives a corrupted server more options and cannot increase its optimal recovery error, and inconsistent states or manipulated participation, which expose response differences through repeated aggregates \citep{lam2021gradient,pasquini2022eluding}, must be modeled as additional effective queries in $\mathcal G$ that satisfy the same smoothness bounds.

\subsection{Personalized targeting with client retirement}
\label{sec:personalized}
A client can finish targeting before the federation does. If its exact local fit is zero, then, with the targeting state retained and the rule that selects zero whenever zero is a minimizer, every subsequent autonomous TMLE update of its own would be zero as well; yet accepting the other clients' next averaged update could move it away from this locally completed density. FedTMLE-LR (Algorithm~\ref{alg:local-retirement}) lets such a client freeze its density, targeting state, and plug-in estimate and leave, while the others continue. The retained estimate incorporates all shared updates accepted before retirement, and the retirement certificate is evaluated on the client's own observations.

This is a form of personalized federated learning, in which clients may return different models \citep{li2021ditto}; here the personalization is driven by completion of the local targeting problem, which determines when each client leaves. After retirement a client performs no further local solves and exchanges no further messages, and later shared updates neither alter its frozen output nor need to preserve its local estimating equation.

Fix the base weights
\begin{equation}\label{eq:retirement-weights}
	\alpha_m=1/M\quad\text{or}\quad\alpha_m=N_m/n.
\end{equation}
Let $\tau_m$ be the first iteration at which client $m$'s local fit is zero, with $\tau_m=\infty$ if this never happens in the uncapped sequence, and let $\mathcal A_k$ be the set of clients still active after the retirement check of round $k$. Client $m$'s current or retained density is $p_m^k:=p^{\min\{k,\tau_m\}}$. As clients leave, the server renormalizes the fixed base weights over the active set. Each weighting mode generates its own trajectory, and we suppress a mode index. When the budget $K$ is reached, the algorithm returns the unfinished outputs with an explicit flag; for the analysis after all clients have retired, the shared density is extended as a constant, with no further optimization or communication.

\begin{algorithm}[!htbp]
	\caption{FedTMLE-LR: local averaging with retirement}
	\label{alg:local-retirement}
	\small
	\begin{algorithmic}[1]
		\Require $\{O_i:i\in\mathcal I_m\}_{m=1}^M$, $p^0$, budget $K$; weights from \eqref{eq:retirement-weights}.
		\State $\mathcal A_{-1}\gets\{1,\ldots,M\}$; $\tau_m\gets\infty$ for every $m$.
		\For{$k=0,\ldots,K$}
		\State $\textsf{server}\xrightarrow{\ p^k,\,p^k(\cdot)\ }m, \quad m\in\mathcal A_{k-1}$.
		\For{$m\in\mathcal A_{k-1}$ \textbf{in parallel}}
		\State $\displaystyle\hat\epsilon_m^k\in \argmin_{\epsilon\in\mathcal E^k}\ell_{m,k}(\epsilon)$, selecting zero whenever it is a minimizer.
		\If{$\hat\epsilon_m^k=\mathbf0$}
		\State $\tau_m\gets k$; retain $p^k$, its targeting state, and $\Psi(p^k)$ locally.
		\EndIf
		\State $m\xrightarrow{\ \hat\epsilon_m^k\ }\textsf{server}$; retire after this upload if $\tau_m=k$.
		\EndFor
		\State $\mathcal A_k\gets \{m\in\mathcal A_{k-1}:\hat\epsilon_m^k\ne\mathbf0\}$.
		\If{$\mathcal A_k=\varnothing\ \lor\ k=K$}
		\State \Return $\{(p_m^k,\Psi(p_m^k),\tau_m\leq k)\}_{m=1}^M$.
		\EndIf
		\State $\displaystyle\epsilon^k\gets \frac{\sum_{m\in\mathcal A_k}\alpha_m\hat\epsilon_m^k} {\sum_{m\in\mathcal A_k}\alpha_m}$; \quad $p^{k+1}\gets p^k(\epsilon^k)$.
		\EndFor
	\end{algorithmic}
\end{algorithm}

\begin{definition}[Local TMLE stability]\label{def:local-stability}
	For a density $p$ and its retained fluctuation rule, define the local targeting gap by
	\begin{equation*}
		\Delta_m(p):=\ell_m(p)-\min_\epsilon\ell_m(p(\epsilon)),
	\end{equation*}
	the minimum being over admissible fluctuations. The density is locally TMLE-stable for client $m$ if $\Delta_m(p)=0$.
\end{definition}

The gap is the loss improvement available from one exact local TMLE update. When it is zero, the selection rule returns zero, autonomous local TMLE stays at the same density, and \eqref{eq:local-score} gives a zero local empirical EIF mean; the converse fails in a nonconvex problem, where a zero gradient need not minimize the loss. A zero gap is therefore a stronger certificate than the first-order equation alone: it certifies that the local minimization is complete. Every retired client's gap is zero at its retained density.

The analysis of averaging starts by comparing the local empirical gradients and Hessians across clients. Under the common-law assumption they differ only through sampling, and larger client samples make them closer. For $\xi\in(0,1)$, write $\zeta_n(\xi):=\sqrt{8\log(4M^2\mathcal N_n/\xi)}$.

\begin{lemma}[Uniform cross-silo derivative comparison]
	\label{lem:local-derivatives}
	Under Assumptions~\ref{ass:local-regularity}--\ref{ass:local-complexity}, with probability at least $1-\xi$, for every $m\ne j$ and all states and parameters covered by Assumption~\ref{ass:local-complexity},
	\begin{equation}\label{eq:local-derivative-comparison}
		\begin{aligned}
			\|\nabla_\epsilon\{\ell_m(p(\epsilon))-\ell_j(p(\epsilon))\}\| &\leq G_1\zeta_n(\xi)\sqrt{N_m^{-1}+N_j^{-1}},\\
			\|\nabla_\epsilon^2\{\ell_m(p(\epsilon))-\ell_j(p(\epsilon))\}\|_{\mathrm{op}} &\leq\beta\zeta_n(\xi)\sqrt{N_m^{-1}+N_j^{-1}}.
		\end{aligned}
	\end{equation}
	The same event covers both weighting modes and their data-dependent iterates and retirement times.
\end{lemma}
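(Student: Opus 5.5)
The plan is a union bound over a finite net combined with Hoeffding's inequality for bounded i.i.d.\ averages, applied conditionally on the partition. By Assumption~\ref{ass:local-complexity}, the partition is independent of the observations, so after conditioning on it the local samples $\{O_i:i\in\mathcal I_m\}$ and $\{O_i:i\in\mathcal I_j\}$ are disjoint i.i.d.\ blocks from $p_0$.

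Fix a representative function $f$ from the internal $1/n$-cover. It has the form $v^\top\nabla_\epsilon L(p(\epsilon))(\cdot)/G_1$ or $v^\top\nabla^2_\epsilon L(p(\epsilon))(\cdot)v/\beta$ and takes values in $[-1,1]$ on the full set, by \eqref{eq:personalized-regularity}. Because the representative is fixed before sampling, the difference
\begin{equation*}
N_m^{-1}\sum_{i\in\mathcal I_m}f(O_i)-N_j^{-1}\sum_{i\in\mathcal I_j}f(O_i)
\end{equation*}
has mean zero under the common law. It is a sum of independent terms, each confined to an interval of length $2/N_m$ or $2/N_j$. Hoeffding's inequality therefore gives a tail $2\exp\{-t^2/(2(N_m^{-1}+N_j^{-1}))\}$. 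I will set $t$ so that this equals $\xi/(2M^2\mathcal N_n)$ and take a union bound over ordered pairs $(m,j)$, the $\mathcal N_n$ representatives, and the two derivative families. This produces a deviation of order $\sqrt{2\log(4M^2\mathcal N_n/\xi)}\sqrt{N_m^{-1}+N_j^{-1}}$, which is half of $\zeta_n(\xi)\sqrt{N_m^{-1}+N_j^{-1}}$. The remaining factor in $\zeta_n=\sqrt{8\log(\cdot)}$ absorbs the discretization error below.

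Next I transfer the bound from the net to every state, parameter and unit vector. For an arbitrary covered function, the nearby representative lies within $1/n$ in uniform norm, so each empirical mean moves by at most $1/n$ and the difference by at most $2/n$. Since $N_m,N_j\le n$, this satisfies $2/n\le 2\sqrt{N_m^{-1}+N_j^{-1}}/\sqrt2$, which is dominated by the slack in $\zeta_n$ because $\log(4M^2\mathcal N_n/\xi)\ge\log 4$.

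It then remains to pass from scalar functionals to the norms. For the gradient I take the supremum over $\|v\|=1$ of $v^\top(\cdot)$, which recovers the Euclidean norm. For the Hessian, symmetry of the second derivative under the $C^3$ assumption gives that $\sup_{\|v\|=1}|v^\top Hv|=\|H\|_{\mathrm{op}}$. Both statements hold on a single event, with no further net over $v$, since the directions are already part of the covered family. Multiplying back by $G_1$ and $\beta$ then yields \eqref{eq:local-derivative-comparison}.

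The main obstacle, and the reason the uniformity matters, is that the densities $p^k$, the retirement times, and both weighting trajectories are chosen using the same samples. The argument handles this by never applying concentration at a random state. The event is defined entirely through the fixed working family of Assumption~\ref{ass:local-complexity}, which contains every possible iterate, any same-sample initial fit, and every retained quantity under both modes. On that event the bound holds simultaneously for all members of the family, so it holds in particular at the realized data-dependent ones. The only point needing care is measurability of the suprema, and this is handled by the cover reduction: the event is an intersection of finitely many measurable events about the representatives.
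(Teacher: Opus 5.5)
Your proposal is correct and follows the paper's proof essentially step for step. Both condition on the partition, apply Hoeffding to each cover member's two-block difference at threshold $\zeta_n(\xi)\sqrt{N_m^{-1}+N_j^{-1}}/2$, take a union bound over ordered client pairs and representatives, absorb the $2/n$ covering error into the remaining half of $\zeta_n(\xi)$, and recover the norms through the supremum over unit directions, using symmetry for the Hessian. The remaining differences are cosmetic: your separate factor of two for the derivative families still leaves failure probability at most $(M-1)\xi/M\le\xi$, you use $N_m,N_j\le n$ where the paper uses $N_m^{-1}+N_j^{-1}\ge 4/n$, and the paper explicitly intersects with the probability-one event that all observations lie in the common full-measure set, which your transfer step implicitly needs.
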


The event is uniform over the state family fixed in Assumption~\ref{ass:local-complexity}, which is why the bounds can be applied at the data-dependent densities and retirement times that the algorithm produces. Nearby derivatives do not make the local losses or their minimizers coincide, however, and the remaining question is what averaging distinct minimizers costs.

That cost is expressed through the pairwise disagreement moments
\begin{equation*}
	V_r^k:= \frac{\sum_{m,j\in\mathcal A_k}\alpha_m\alpha_j \|\hat\epsilon_m^k-\hat\epsilon_j^k\|^r} {2\sum_{m\in\mathcal A_k}\alpha_m}, \qquad r\in\{2,4\},
\end{equation*}
where $r$ is the moment order, not the round index of Section~\ref{sec:theory-gradient}. The second moment equals the base-weighted sum of squared distances between the local fits and the server's update; the fourth is more sensitive to widely separated fits. The sampling coefficient is
\begin{equation}\label{eq:retirement-sampling-modes}
	\chi_k:=
	\begin{cases}
		\displaystyle\frac{|\mathcal A_k|-1}{M|\mathcal A_k|} \sum_{m\in\mathcal A_k}N_m^{-1}, &\alpha_m=1/M,\\[3mm]
		\displaystyle\frac{|\mathcal A_k|-1}{n}, &\alpha_m=N_m/n.
	\end{cases}
\end{equation}
Both quantities are set to zero when the active set is empty, and both vanish when a single client remains. The server can compute the disagreement moments from the uploaded fits and the sampling coefficient from the participating sample sizes.

\begin{lemma}[Fourth-order averaging error]\label{lem:local-fourth-order}
	Under Assumption~\ref{ass:local-regularity}, on the event of Lemma~\ref{lem:local-derivatives}, every iteration satisfies
	\begin{equation}\label{eq:local-fourth-order-error}
		\begin{aligned}
			0\leq{}&\sum_{m\in\mathcal A_k}\alpha_m \{\ell_{m,k}(\epsilon^k)-\ell_{m,k}(\hat\epsilon_m^k)\}\\
			\leq{}&\frac{2G_1\zeta_n(\xi)}3\sqrt{\chi_kV_2^k} +\frac{\beta\zeta_n(\xi)}6\sqrt{\chi_kV_4^k} +\frac{\beta_4}{18}V_4^k.
		\end{aligned}
	\end{equation}
	The same left-hand side is at most $\beta V_2^k/2$, deterministically.
\end{lemma}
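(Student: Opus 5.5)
The nonnegativity is immediate: each $\hat\epsilon_m^k$ minimizes $\ell_{m,k}$ over $\mathcal E^k$, and $\epsilon^k$ is admissible by Assumption~\ref{ass:local-regularity}. Hence every summand $\ell_{m,k}(\epsilon^k)-\ell_{m,k}(\hat\epsilon_m^k)$ is nonnegative. Throughout I write $w_m=\alpha_m/\sum_{j\in\mathcal A_k}\alpha_j$ and $d_m=\epsilon^k-\hat\epsilon_m^k$. I would first record the identity
\begin{equation*}
\sum_{m\in\mathcal A_k}\alpha_m\|d_m\|^2=V_2^k,
\end{equation*}
which follows from the variance decomposition $\sum_m w_m\|\hat\epsilon_m^k-\epsilon^k\|^2=\tfrac12\sum_{m,j}w_mw_j\|\hat\epsilon_m^k-\hat\epsilon_j^k\|^2$. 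The analogous bound $\sum_m\alpha_m\|d_m\|^4\leq V_4^k$ follows by Jensen's inequality, since $d_m=\sum_j w_j(\hat\epsilon_j^k-\hat\epsilon_m^k)$. For the deterministic bound, $\hat\epsilon_m^k$ is an interior minimizer, so $\nabla\ell_{m,k}(\hat\epsilon_m^k)=\mathbf0$. A second-order Taylor expansion along the admissible segment, using the pointwise Hessian bound $\beta$ from \eqref{eq:personalized-regularity}, gives a summand of at most $\beta\|d_m\|^2/2$. Summing gives $\beta V_2^k/2$.

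For the refined bound I would avoid the second-order expansion at $\hat\epsilon_m^k$, whose quadratic term does not cancel. Instead I would use a trapezoid-type identity along the segment from $\hat\epsilon_m^k$ to $\epsilon^k$. Writing $f=\ell_{m,k}$,
\begin{equation*}
f(\epsilon^k)-f(\hat\epsilon_m^k)=\tfrac12\{\nabla f(\epsilon^k)+\nabla f(\hat\epsilon_m^k)\}^\top d_m-\tfrac1{12}\nabla^3f(\cdot)[d_m,d_m,d_m]+\mathrm{Rem}_m,
\end{equation*}
where the third derivative is evaluated at a fixed reference point on the segment. The remainder satisfies $|\mathrm{Rem}_m|\lesssim\beta_4\|d_m\|^4$ by the Lipschitz bound on $\nabla^3_\epsilon L$. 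Since $\nabla f(\hat\epsilon_m^k)=\mathbf0$, the leading term is $\tfrac12\nabla\ell_{m,k}(\epsilon^k)^\top d_m$. I would now decompose $\nabla\ell_{m,k}(\epsilon^k)$ as a common gradient $\bar g=\sum_j w_j\nabla\ell_{j,k}(\epsilon^k)$ plus the deviation $\nabla\ell_{m,k}(\epsilon^k)-\bar g$. The common part cancels exactly in the weighted sum because $\sum_m w_md_m=\mathbf0$. The deviation is a weighted combination of pairwise differences $\nabla(\ell_{m,k}-\ell_{j,k})(\epsilon^k)$, and each of these is bounded by Lemma~\ref{lem:local-derivatives} as $G_1\zeta_n(\xi)\sqrt{N_m^{-1}+N_j^{-1}}$. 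Cauchy--Schwarz across the weighted sum then separates $\sum\alpha_m\|d_m\|^2=V_2^k$ from the sampling factor. I expect this factor to evaluate, for each weighting mode in \eqref{eq:retirement-weights}, to the coefficient $\chi_k$ of \eqref{eq:retirement-sampling-modes} after checking $\sum_{m,j}w_mw_j(N_m^{-1}+N_j^{-1})$ with $m\neq j$. That yields a term of the form $c\,G_1\zeta_n\sqrt{\chi_kV_2^k}$.

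The cubic term is handled the same way at one order higher. Split $\nabla^3\ell_{m,k}$ into a common third derivative plus a deviation. For the common part, moving each evaluation point to $\epsilon^k$ costs $\beta_4\|d_m\|^4$, and the term $\sum_mw_m\bar T[d_m^3]$ is then absorbed. The deviation should be controlled through the Hessian comparison of Lemma~\ref{lem:local-derivatives}: I would rewrite the cubic Taylor contribution as a difference of Hessian quadratic forms at the two endpoints, avoiding any separate third-derivative concentration. This gives $\beta\zeta_n\sqrt{\chi_kV_4^k}$ after Cauchy--Schwarz with $\sum\alpha_m\|d_m\|^4\leq V_4^k$. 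The remaining $\beta_4$ contributions combine into $\beta_4V_4^k/18$.

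I expect the main obstacle to be the cubic term. The identity must be arranged so that the common third-order part cancels or is charged only to $\beta_4$, and so that only the gradient and Hessian comparisons of Lemma~\ref{lem:local-derivatives} are needed. The stated constants $2/3$, $1/6$ and $1/18$ suggest a specific quadrature weighting; I would try a Simpson-type rule with a midpoint evaluation first, and fall back to the trapezoid form above, accepting that it may give slightly different constants.
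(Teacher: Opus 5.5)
Your nonnegativity argument, the identity $\sum_{m\in\mathcal A_k}\alpha_m\|d_m\|^2=V_2^k$, the deterministic $\beta V_2^k/2$ bound, and the centered treatment of the gradient term all match the paper. The gradient term goes through pairwise differences, Cauchy--Schwarz, and Lemma~\ref{lem:local-derivatives}, giving $\sum_m\alpha_m\nabla\ell_{m,k}(\epsilon^k)^\top d_m\le G_1\zeta_n(\xi)\sqrt{\chi_kV_2^k}$. One small slip: Jensen gives only $\sum_m\alpha_m\|d_m\|^4\le2V_4^k$, which is the bound the paper uses. The sharper $\le V_4^k$ is true, but it comes from expanding the pairwise fourth powers about the mean, not from Jensen.

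The genuine gap is the cubic term, which you flag but do not resolve. Its common part $\sum_m w_m\bar T[d_m,d_m,d_m]$ is a fixed tensor applied to the third moment of the fits. The relation $\sum_m w_md_m=\mathbf0$ cancels only first-order terms; scalar fits $0,0,1$ with equal weights give $\sum_mw_md_m^3=-2/27$. The hypotheses control $\nabla^3\ell_{m,k}$ only through $\beta_4$ and the Hessian bound $\beta$, which yields at best a term of order $\beta V_2^k$, i.e. the quadratic bound again. Lemma~\ref{lem:local-derivatives} compares no third derivatives.

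Rewriting the cubic term as a difference of Hessian forms leaves two pieces. The first, $-\tfrac1{12}\sum_m\alpha_m d_m^\top\nabla^2\ell_{m,k}(\epsilon^k)d_m$, has a common part with no sign, since curvature at $\epsilon^k$ is unrestricted. The second, $+\tfrac1{12}\sum_m\alpha_md_m^\top\nabla^2\ell_{m,k}(\hat\epsilon_m^k)d_m\geq0$, is bounded by your tools only by $\beta V_2^k/12$. A Simpson rule does not help, because its midpoints differ across clients and the cross-client comparison needs a common evaluation point. What makes these terms fourth order is that \emph{every other} client's loss is stationary with positive semidefinite Hessian at its own minimizer. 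Your argument uses stationarity only of client $m$ at $\hat\epsilon_m^k$.

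The paper supplies the missing ingredient in two steps. First, it eliminates curvature at $\epsilon^k$. Combining your Hessian-difference form of the trapezoid rule with the trapezoid rule for $\phi'$ (to remove $\phi''(1)$) gives the Hermite identity $\phi(1)-\phi(0)=\tfrac13\phi'(1)+\tfrac23\phi'(0)+\tfrac16\phi''(0)-\tfrac16\int_0^1t(1-t)^2\phi^{(4)}(t)\,dt$. It is applied to $\phi(t)=\ell_{m,k}(\hat\epsilon_m^k+tu_m)$ with $u_m=\epsilon^k-\hat\epsilon_m^k$. This identity has no third-derivative term, and its only curvature sits at $\hat\epsilon_m^k$.

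Second, it bounds that curvature pairwise. On the segment $v=\hat\epsilon_j^k-\hat\epsilon_m^k$, apply the trapezoid rule for $\phi'$ to both $\ell_{m,k}$ and $\ell_{j,k}$ and add. Each gradient vanishes at its own endpoint, so all remaining cross-client differences sit at common points, and
\begin{equation*}
v^\top\{\nabla^2\ell_{m,k}(\hat\epsilon_m^k)+\nabla^2\ell_{j,k}(\hat\epsilon_j^k)\}v\le\zeta_n(\xi)\sqrt{N_m^{-1}+N_j^{-1}}\,(2G_1\|v\|+\beta\|v\|^2)+\frac{\beta_4}{6}\|v\|^4.
\end{equation*}
Because $u_m$ is the $w$-weighted average of the directions $\hat\epsilon_j^k-\hat\epsilon_m^k$ and $\nabla^2\ell_{m,k}(\hat\epsilon_m^k)\succeq0$, the quadratic form at $u_m$ is at most the weighted average of the pairwise forms. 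Symmetrizing over $(m,j)$ and applying Cauchy--Schwarz then gives $\sum_m\alpha_mu_m^\top\nabla^2\ell_{m,k}(\hat\epsilon_m^k)u_m\le2G_1\zeta_n(\xi)\sqrt{\chi_kV_2^k}+\beta\zeta_n(\xi)\sqrt{\chi_kV_4^k}+\beta_4V_4^k/6$.

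The stated constants follow. The $G_1$ coefficient is $\tfrac23=\tfrac13+\tfrac16\cdot2$, and the $\beta$ coefficient is $\tfrac16$. The $\beta_4$ coefficient is $\tfrac1{18}=\tfrac1{36}+\tfrac1{36}$, where the second $\tfrac1{36}$ comes from the Hermite remainder together with $\sum_m\alpha_m\|u_m\|^4\le2V_4^k$.
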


Each term on the left is the increase in a client's loss when its own minimizer is replaced by the server's average, and their weighted sum is the improvement forfeited through averaging. Smoothness alone gives the quadratic bound. The fourth-order bound uses in addition the stationarity and nonnegative curvature of each local loss at its own selected minimum, together with the derivative comparisons of Lemma~\ref{lem:local-derivatives}: the comparisons produce the sampling terms, and the Lipschitz third derivative produces the fourth-order remainder. The penalty is small when the local fits are close and their derivatives similar. No sign of curvature is required between the minima, and whichever of the two bounds is smaller may be used.

\begin{theorem}[Personalized targeting bound with retirement]
	\label{thm:personalized-convergence}
	Under Assumptions~\ref{ass:smooth}, \ref{ass:local-regularity}, and \ref{ass:local-complexity}, with probability at least $1-\xi$, simultaneously for every $K\geq1$ and either weighting mode,
	\begin{equation}\label{eq:personalized-convergence-bound}
		\begin{aligned}
			&\frac1{2\beta K}\sum_{k=0}^{K-1}\sum_{m=1}^M\alpha_m \left\|\frac1{N_m}\sum_{i\in\mathcal I_m} D(p_m^k)(O_i)\right\|^2 \leq\frac1K\sum_{k=0}^{K-1}\sum_{m=1}^M \alpha_m\Delta_m(p_m^k)\\
			&\quad\leq\frac1K\sum_{m=1}^M\alpha_m \{\ell_m(p^0)-\underline\ell_m\}\\
			&\qquad\quad+\frac1K\sum_{k=0}^{K-1}\left\{ \frac{2G_1\zeta_n(\xi)}3\sqrt{\chi_kV_2^k} +\frac{\beta\zeta_n(\xi)}6\sqrt{\chi_kV_4^k} +\frac{\beta_4}{18}V_4^k\right\}.
		\end{aligned}
	\end{equation}
	For every retired client, both its local gap and its local empirical EIF mean are zero at $p_m^k$ for all $k\geq\tau_m$.
\end{theorem}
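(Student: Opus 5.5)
The argument is a Lyapunov/telescoping argument on the potential $\Phi^k:=\sum_{m=1}^M\alpha_m\ell_m(p_m^k)$, where $p_m^k$ is each client's current or retained density. It is combined with the per-step averaging penalty of Lemma~\ref{lem:local-fourth-order} and with a descent-lemma lower bound on each local gap. Throughout we work on the single event of Lemma~\ref{lem:local-derivatives}. That event has probability at least $1-\xi$ and is uniform over the fixed state family, so it covers both weighting modes, every $K$, and all data-dependent iterates and retirement times at once. Any bound proved deterministically on this event therefore holds simultaneously for all $K\geq1$, which gives the claimed simultaneity.

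\emph{Step 1: first inequality (gradient versus gap).} Fix $m$ and $k$, and write $p=p_m^k$ with its retained fluctuation. By \eqref{eq:local-score}, $\nabla_\epsilon\ell_m(p(\epsilon))|_{\epsilon=\mathbf0}=N_m^{-1}\sum_{i\in\mathcal I_m}D(p)(O_i)=:g$. Assumption~\ref{ass:local-regularity} makes the segment from $\mathbf0$ to $-g/\beta$ admissible, and Assumption~\ref{ass:smooth} gives $\beta$-smoothness along it. The descent lemma then yields $\ell_m(p(-g/\beta))\leq\ell_m(p)-\|g\|^2/(2\beta)$, hence $\|g\|^2/(2\beta)\leq\Delta_m(p)$. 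For a retired client Assumption~\ref{ass:local-regularity} retains the fluctuation rule, so the same argument applies verbatim. Weighting by $\alpha_m$ and averaging over $k$ gives the left inequality.

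\emph{Step 2: one-step decrease of the potential.} For a client retired before round $k$ we have $p_m^{k+1}=p_m^k$. Its gap is zero because its exact fit was zero and zero is then a minimizer, so its contribution to $\Phi^k-\Phi^{k+1}$ equals its gap, namely $0$. A client retiring at round $k$ likewise has zero gap and keeps $p^k$. For an active client $m\in\mathcal A_k$ we have $\Delta_m(p^k)=\ell_{m,k}(\mathbf0)-\ell_{m,k}(\hat\epsilon_m^k)$ and $\ell_m(p_m^{k+1})=\ell_{m,k}(\epsilon^k)$. Summing these gives
\[
\Phi^k-\Phi^{k+1}=\sum_m\alpha_m\Delta_m(p_m^k)-\sum_{m\in\mathcal A_k}\alpha_m\{\ell_{m,k}(\epsilon^k)-\ell_{m,k}(\hat\epsilon_m^k)\}.
\]
Lemma~\ref{lem:local-fourth-order} bounds the last sum by the bracketed penalty with $\chi_k$ and $V_r^k$.

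\emph{Main obstacle: the normalization.} The algorithm averages with weights renormalized over $\mathcal A_k$, whereas the potential uses the unnormalized $\alpha_m$. I expect this to be consistent with the way $V_r^k$ carries the factor $1/(2\sum_{m\in\mathcal A_k}\alpha_m)$, and with the way Lemma~\ref{lem:local-fourth-order} is stated with unnormalized $\alpha_m$ on the left. Under that reading the step goes through directly. If it does not, I would rescale both sides by $\sum_{\mathcal A_k}\alpha_m\leq1$, which can only help.

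\emph{Step 3: telescoping and the retirement claim.} Summing the identity of Step 2 over $k=0,\dots,K-1$ gives
\[
\sum_k\sum_m\alpha_m\Delta_m(p_m^k)\leq\Phi^0-\Phi^K+\sum_k\text{penalty}_k.
\]
We then use $\Phi^0=\sum_m\alpha_m\ell_m(p^0)$ and $\Phi^K\geq\sum_m\alpha_m\underline\ell_m$, the latter by the lower bounds of Assumption~\ref{ass:local-regularity} on current and retained densities. Dividing by $K$ gives the second inequality. Rounds after all clients have retired contribute zero gaps and zero penalties, consistent with the constant extension. Finally, for $k\geq\tau_m$ we have $p_m^k=p^{\tau_m}$, and the selected zero fit there means zero is a global minimizer, so the gap is $0$. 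Interiority of $\mathbf0$ together with \eqref{eq:local-score} then gives a vanishing local empirical EIF mean, exactly as in Step 1.
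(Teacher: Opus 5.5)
Your proof is correct and follows the paper's argument. It uses the same descent-lemma bound from the origin to $-g/\beta$ for the first inequality, and the same split of each active client's gap into the shared-update loss decrease plus the averaging loss bounded by Lemma~\ref{lem:local-fourth-order}. Your telescoping of $\Phi^k$ is exactly the paper's clientwise telescoping over participation intervals, and the retirement certificate is argued in the same way.

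Two small remarks. Your normalization concern does not arise: that lemma is stated with the unnormalized $\alpha_m$ on the left and the renormalized average $\epsilon^k$, so it applies directly. Also, the $\beta$-smoothness you need on that segment comes most directly from the Hessian bound in Assumption~\ref{ass:local-regularity}, whose working region explicitly contains the segment.
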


The bound has the form of a nonconvex convergence guarantee: the average remaining local improvement is at most the initial loss budget, which contributes $O(K^{-1})$, plus the accumulated averaging error divided by the number of iterations. Because each client's loss is telescoped only over its own participation interval and stays constant after retirement, changes in the active set introduce no additional term. And because the EIF means are evaluated at each client's own density before being squared and summed, opposing residuals cannot cancel in the criterion, in contrast to the pooled residual of Section~\ref{sec:local-averaging}.

The two weighting modes trade influence against sensitivity. Uniform weights give every client the same influence in the criterion, but the inverse sample sizes in $\chi_k$ make the penalty sensitive to small clients; sample-size weights reduce the contribution of small clients to $\chi_k$ and, at the same time, their weight in the criterion. For the same active set, the sample-size coefficient, normalized by the remaining base weights, is never larger than the uniform one; the ratio of the two is the arithmetic-to-harmonic mean ratio of the active sample sizes. The comparison holds the active set fixed, and the two modes generally follow different trajectories. Section~\ref{app:personalized-convergence} derives the comparison and the participation-duration identities for $\sum_k\chi_k$.

\begin{figure}[!bp]
	\centering \includegraphics[width=\textwidth]{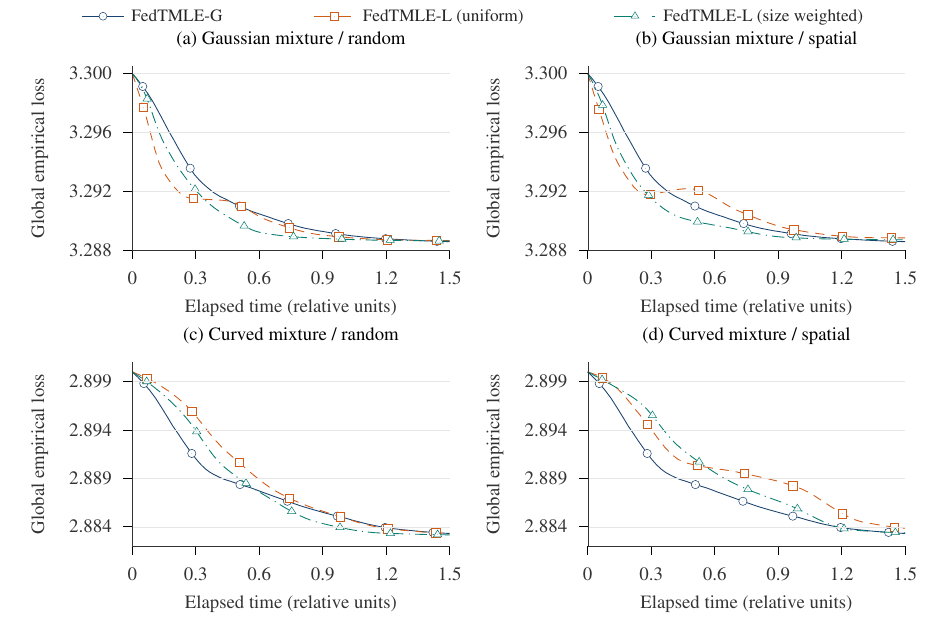} \caption{Global empirical loss against elapsed targeting time with $M=10$. Rows distinguish the Gaussian and curved mixtures; columns distinguish random and spatial partitions. Time uses the same reference duration in all panels and for all methods.}
	\label{fig:experiment-loss}
\end{figure}

Two consequences follow. If the accumulated averaging bound is $o(K)$, the average personalized gap vanishes; if the per-round bounds are summable, every client's gap tends to zero. Both are statements about the realized disagreement, on which the theorem imposes no decay law. Once a single client remains, the averaging penalty is exactly zero and the method reduces to ordinary local TMLE, whereas with several active clients cancellation among nonzero fits can still prevent retirement. The theorem thus certifies the outputs of the clients that leave and bounds the gaps of those that stay. Whether every client retires in finitely many rounds, and whether the retained estimates are doubly robust without further conditions, are questions we leave to future work.

\section{Experiments}\label{sec:experiments}

We compare FedTMLE-G and the two weighting rules of FedTMLE-L through a three-dimensional target defined by entropy-regularized optimal transport \citep{cuturi2013sinkhorn,peyre2019computational,genevay2016stochastic}. An observation $o\in\mathbb R^2$ is softly allocated to four fixed destinations $z_1,\ldots,z_4$, each with capacity $1/4$. With unit regularization and $\theta_4=0$, define the relative allocation offsets by
\begin{equation}\label{eq:experiment-target}
\Psi(p)=\argmin_{\theta\in\mathbb R^3} \left\{\int\log\!\left[\sum_{j=1}^4 e^{\theta_j-\|o-z_j\|_2^2/2}\right]p(o)\,do -\frac14\sum_{j=1}^4\theta_j\right\}.
\end{equation}
The capacity constraints couple the coordinates, and the EIF adjusts allocation imbalance by the curvature of this optimization problem. The four destinations remain fixed as the number of clients changes.

The comparison uses $n=10{,}000$ observations from an anisotropic Gaussian mixture or a curved mixture with location-dependent spread. Each dataset has random and moderately spatially heterogeneous partitions with the same unequal client sizes, whose largest-to-smallest ratio is approximately $5:1$. Initialization, fluctuation coordinates, and numerical accuracy criteria are shared. The two metrics are the global empirical loss $P_nL(p^k)$, where $L(p)(o)=-\log p(o)$, and the pooled EIF residual $r^k=\|n^{-1}\sum_iD(p^k)(O_i)\|_2$. Evaluation always gives each observation weight $1/n$. Section~\ref{app:experiments} specifies the DGPs, partitions, and timing conventions.

\begin{figure}[!htbp]
	\centering \includegraphics[width=\textwidth]{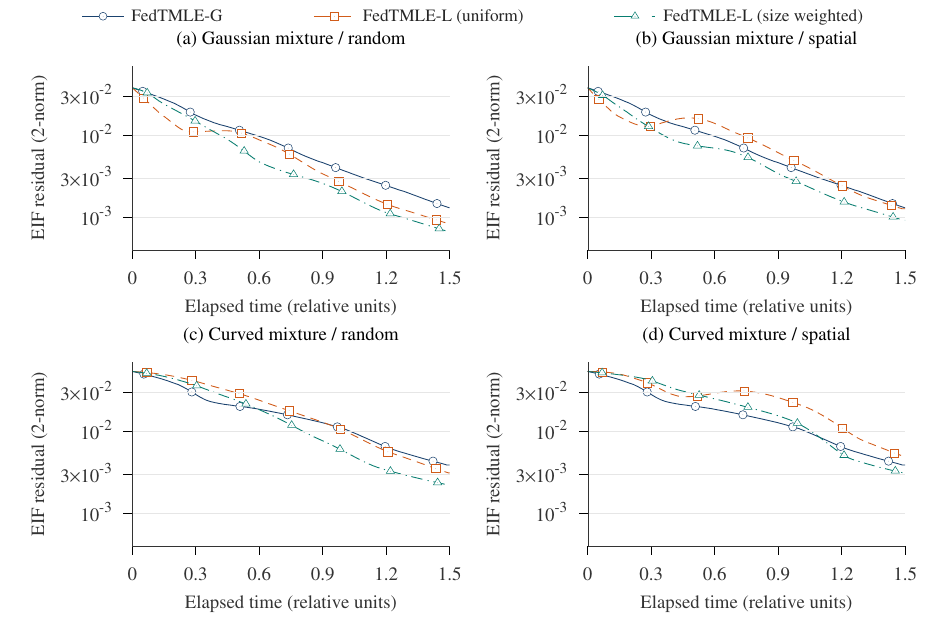} \caption{Pooled EIF residual $r^k$ against elapsed targeting time with $M=10$, on a logarithmic vertical scale. Checkpoints and the time normalization are the same as in Figure~\ref{fig:experiment-loss}.}
	\label{fig:experiment-eif}
\end{figure}
\begin{figure}[!htbp]
	\centering \includegraphics[width=\textwidth]{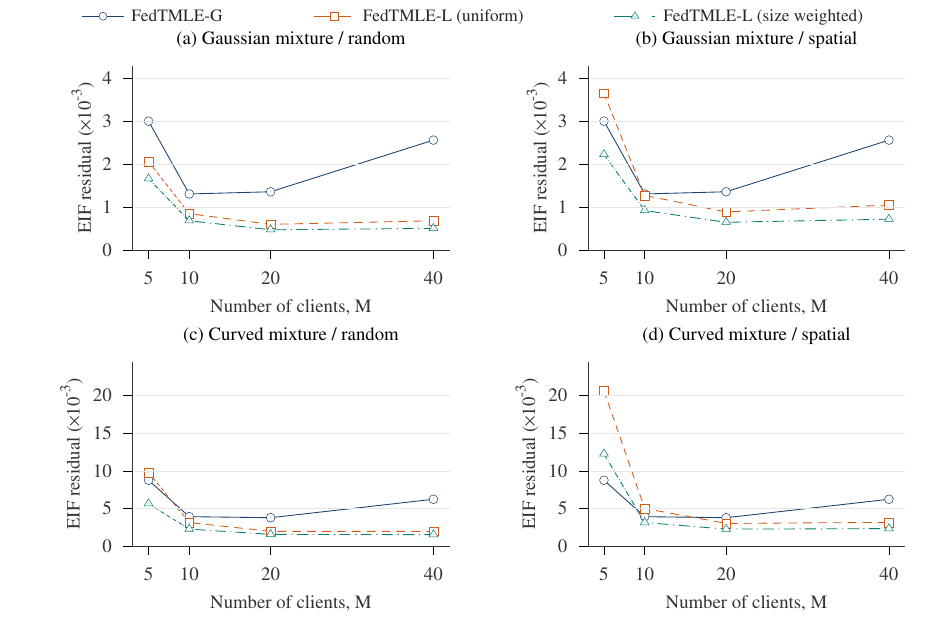} \caption{Pooled EIF residual at a fixed elapsed-time budget of $1.5$ reference units, as the number of clients varies. Total sample size is fixed at $n=10{,}000$; each point uses the last completed shared density within the budget.}
	\label{fig:experiment-clients}
\end{figure}

\begin{figure}[!htbp]
	\centering \includegraphics[width=\textwidth]{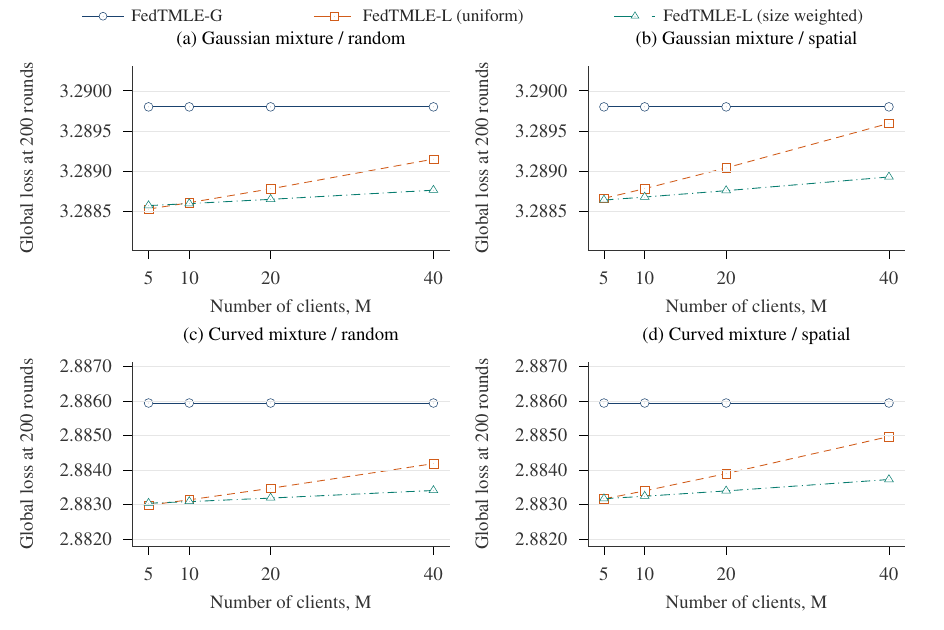} \caption{Global empirical loss after 200 communication rounds, with $n=10{,}000$ fixed. Repeated gradient queries, additional pooled step-selection queries, and required EIF checks all count toward the budget.}
	\label{fig:experiment-budget}
\end{figure}

Figures~\ref{fig:experiment-loss}-\ref{fig:experiment-eif} compare the trajectories at $M=10$ against elapsed targeting time, using one common reference duration for all panels and methods. Comparable final losses are reached through different transient behavior. Local averaging makes faster initial progress on the Gaussian mixture, whereas FedTMLE-G leads early on the curved mixture before the local methods catch up. Spatial partitioning introduces a clearer middle-stage slowdown for local averaging, including a small pooled-loss rebound under uniform weights. The EIF curves retain differences in targeting progress even when the losses are close.

Timing balances repeated gradient synchronization against complete local solves, whose cost depends on conditioning and the slowest client. Unequal client sizes distinguish the averaging rules even under random partitioning. Spatial heterogeneity changes both local gradients and curvature, so sample-size weighting does not recover the pooled minimizer in general. These effects allow temporary reversals in the methods' ordering.

Figures~\ref{fig:experiment-clients}-\ref{fig:experiment-budget} vary $M\in\{5,10,20,40\}$ at fixed total sample size. The fixed-time residuals reflect a balance between parallel local work, synchronization, and smaller local samples. At 200 rounds, uniform weighting becomes less favorable as more clients divide the sample, particularly with spatial heterogeneity. FedTMLE-G retains higher loss under this communication budget: each update requires several pooled gradient and step-selection queries, while FedTMLE-L communicates after a complete local solve. FedTMLE-G's full-batch trajectory remains invariant to repartitioning at a matched round budget. This budget-specific difference does not imply uniformly slower targeting in time, which also includes local computation and shared target and density work.

\section{Conclusion}\label{sec:conclusion} We have developed a cross-silo formulation of iterative TMLE that keeps observations local and preserves the plug-in form of the estimator. FedTMLE-G distributes gradient evaluation while retaining a matched centralized trajectory, whereas FedTMLE-L exchanges locally fitted fluctuation parameters, avoiding repeated synchronization within each inner solve but changing the targeting path. Our analysis makes the consequences of this choice explicit. For FedTMLE-G, differential coding exploits gradient variation to control the bit cost of an EIF certificate without changing the rounded computation, while a description of the accepted targeting history controls statistical adaptivity separately from the full gradient transcript. For local averaging, FedTMLE-LR allows clients to retain their own locally TMLE-stable estimates upon retirement. Its targeting bound quantifies the improvement lost through averaging in terms of client weights, sample sizes, and disagreement between local fits. Together, these results distinguish faithful pooled computation from personalized completion, and numerical targeting accuracy from the additional statistical conditions needed for inference.

Our disclosure analysis further shows why keeping data local is not itself a privacy guarantee: instability of record reconstruction can coexist with accurate recovery of a specified attribute. Future work should extend the personalized analysis to genuinely different silo populations and account for inexact local solves and tolerance-based retirement. Adapting local computation and synchronization to observed disagreement offers another direction for balancing communication savings against targeting progress. Finally, integrating query validation, secure aggregation, and calibrated privacy noise would allow communication cost, disclosure protection, and inferential accuracy to be studied jointly.

\bibliographystyle{imsart-nameyear} \bibliography{ref}

\newpage \providecommand{\theHaddressref}{} \renewcommand{\theHaddressref}{supp.address.\arabic{addressref}} \setcounter{addressref}{0}
\csundef{author@1@ead@marks} \csundef{author@2@ead@marks} \csundef{author@3@ead@marks} \thankslabel[a]{se1} \thankslabel[b]{se2} \thankslabel[c]{se3}
\begin{frontmatter}
\title{Supplement to ``Federated Targeted Maximum Likelihood Estimation''} \runtitle{Supplement: Federated TMLE} \runauthor{D. Li, F. Wang and K. Gan}

\begin{aug}
\author[SA]{\fnms{Diyang}~\snm{Li}\ead[label=se1]{diyang01@cs.cornell.edu}}, \author[SA,SB]{\fnms{Fei}~\snm{Wang}\ead[label=se2]{few2001@med.cornell.edu}} \and \author[SA]{\fnms{Kyra}~\snm{Gan}\ead[label=se3]{kyragan@cornell.edu}}

\address[SA]{Cornell University\printead[presep={,\ }]{se1,se3}} \address[SB]{Weill Cornell Medicine\printead[presep={,\ }]{se2}}
\end{aug}
\end{frontmatter}

\setcounter{section}{0} \setcounter{equation}{0} \setcounter{figure}{0} \renewcommand{\thesection}{S.\arabic{section}} \renewcommand{\theequation}{S.\arabic{equation}} \renewcommand{\thefigure}{S.\arabic{figure}}

\renewcommand{\theHsection}{supp.\arabic{section}} \renewcommand{\theHequation}{supp.\arabic{equation}} \renewcommand{\theHfigure}{supp.\arabic{figure}}

\section{Related work}\label{app:related-work}

To our knowledge, this paper introduces the first general-purpose federated algorithms for standard iterative TMLE. The contribution is to federate targeting itself: participating silos repeatedly fit a shared, evolving fluctuation model without pooling observations. FedTMLE-G preserves a matched centralized trajectory through gradient aggregation, whereas FedTMLE-L exchanges local fluctuation solutions. This algorithmic extension of TMLE \citep{vanderlaan2006targeted,li2025optimization} applies across target functionals rather than prescribing a particular causal estimand.

Adjacent federated causal methods address different computational tasks. \citet{han2023multiply} combine multiply robust nuisance estimation with adaptive weighting of site-specific treatment-effect estimates under covariate shift and covariate mismatch. Here, ``targeted'' identifies the population of interest, not an iterative TMLE update. \citet{liu2026privacy} construct influence-function-based survival estimators and select source weights under distribution shift; TMLE is discussed as an extension. \citet{tong2025disc2ohd} approximate pooled nuisance fits through surrogate likelihoods for doubly robust estimation. These methods address transportability or nuisance estimation, rather than the successive fluctuation problems defining standard TMLE. The broader intersection of federated and targeted learning is also addressed in the dissertation of \citet{phillips2023targeted}.

\citet{beclin2026privacy} develop a specialized federated TMLE for survival mediation using backward-recursive hazard targeting followed by regression targeting. Their shared-nuisance formulation also distributes fluctuation fitting. The construction exploits the mediation functional's sequential structure to avoid repeated global targeting iterations; it does not provide a general federated algorithm for standard iterative TMLE. Our algorithms instead repeatedly rebuild and optimize the targeting submodel without relying on that decomposition. The distinction is algorithmic, not simply a difference in application domain.

Federated averaging \citep{mcmahan2017communication} provides a communication pattern, not a TMLE targeting rule. \citet{yang2023federated} develop multi-gradient methods for competing objectives and Pareto stationarity, rather than influence-function calibration of a plug-in estimator. In FedTMLE, local fits concern a changing submodel, and averaging need not preserve centralized targeting. FedTMLE-LR further allows locally completed clients to retain their own estimates. The estimand-specific causal estimators and multi-objective methods above are therefore neighboring constructions, not interchangeable baselines for the general iterative targeting problem addressed here.

\section{Proofs of all results}\label{app:proofs} The communication proof holds the observed sample fixed. Probabilities in the statistical proof refer to the original i.i.d. targeting sample, with conditioning on an independent initialization only where explicitly stated. Throughout, the score identity is used only at a submodel's origin. Sections~\ref{app:reconstruction}--\ref{app:disclosure} analyze deterministic uncertainty in the transcript with public information held fixed.
\begingroup
\allowdisplaybreaks[4]

\subsection{Proof of Theorem~\ref{thm:communication}}\label{app:communication}
\begin{proof}

Let $(k_r,t_r)$ denote the state queried in round $r$, and let $\mathcal S$ denote the rounds with action $00$. At this stage, neither finiteness of $\mathcal S$ nor termination is assumed. To decode signed differences, map $a\in\mathbb Z$ to a positive integer using the bijection
\begin{equation*}
\iota(a)=
\begin{cases}
2a+1,&a\geq0,\\
-2a,&a<0.
\end{cases}
\qquad \iota^{-1}(v)=
\begin{cases}
(v-1)/2,&v\text{ odd},\\
-v/2,&v\text{ even}.
\end{cases}
\end{equation*}
For a positive integer $v$, its gamma code \citep{elias1975universal} consists of $\lfloor\log_2v\rfloor$ zero bits followed by its binary representation. The initial zeros determine how many subsequent binary digits to read; therefore the code is self-delimiting and prefix-free. Applying it to $\iota(a)$ defines $\operatorname{enc}(a)$, with length
\begin{align}
|\operatorname{enc}(a)| &=2\lfloor\log_2\iota(a)\rfloor+1\notag\\*
&\leq1+2\log_2(2|a|+1)\notag\\
&\leq3+2\log_2(1+|a|),\notag\\*
|\operatorname{enc}(v)| &=\sum_{j=1}^d|\operatorname{enc}(v_j)| \leq3d+2\sum_{j=1}^d\log_2(1+|v_j|), \qquad v\in\mathbb Z^d.
\label{eq:proof-code-length}
\end{align}
Knowing the dimension $d$, the decoder reads exactly that many codewords per vector message. After $j$ initial zeros, the decoder reads $j+1$ bits that start with one and specify an integer in $[2^j,2^{j+1}-1]$. Applying $\iota^{-1}$ recovers the signed value, and the next coordinate starts immediately after these $2j+1$ bits. This parsing rule requires neither an integer-magnitude bound nor a separate message-length field and applies to the initial absolute value as well as all subsequent differences.

All integer memories are initialized to zero. Assume inductively that the local memories, server accumulator, and reconstructed density agree with the absolute-message implementation before round $r$. Decoding and adding the local differences gives
\begin{align*}
s_{r-1}+\sum_{m=1}^M N_m(z_{m,r}-z_{m,r-1}) &=\sum_{m=1}^M N_mz_{m,r-1} +\sum_{m=1}^M N_m(z_{m,r}-z_{m,r-1})\\*
&=\sum_{m=1}^M N_mz_{m,r}=s_r,\\
q_{r-1}+(q_r-q_{r-1})&=q_r,\\*
h\{q_{r-1}+(q_r-q_{r-1})\}&=h q_r=\widetilde g_r.
\end{align*}
Both implementations choose the same action because they use the same decoded norm and current inner index. Under action $00$, they reconstruct the next parameter as
\begin{align*}
\epsilon^{k_r,t_r+1} &=\epsilon^{k_r,t_r} -\frac h\beta\{q_{r-1}+(q_r-q_{r-1})\}\\*
&=\epsilon^{k_r,t_r}-\frac h\beta q_r =\epsilon^{k_r,t_r}-\frac1\beta\widetilde g_r.
\end{align*}
Under action $01$, both computations accept the current parameter and apply the shared map $p^{k_r+1}=p^{k_r}(\epsilon^{k_r})$. Under action $10$, they return the same density and certificate. The norm tests therefore agree without a separate transmission of an approximation to $\epsilon^{k_r,t_r}$. The shared reconstruction rule completes the induction through both inner steps and outer updates. Integer memories can be retained through re-centering because they store the previously encoded gradients; they do not specify a parameter in the new submodel.

For the fixed-width comparison, put $J=\lceil G/h\rceil$ when the stated global gradient bound is available. Every local integer then belongs to $\{-J,\ldots,J\}$. Their weighted mean belongs to $[-J,J]$, and rounding it to a nearest integer remains in that interval. Thus the same $b$ bits encode both local and server coordinates. An offset by $J$ maps each coordinate into $\{0,\ldots,2J\}$, whose cardinality is $2J+1\leq2^b$. Counting separately on every directed link,
\begin{align*}
\mathsf B_{\mathrm{fixed}} &=\sum_{r=1}^R\left\{ \sum_{m=1}^M\sum_{j=1}^d b +\sum_{m=1}^M\left(\sum_{j=1}^d b+2\right)\right\}\\*
&=\sum_{r=1}^R\{Mdb+M(db+2)\} =2MdbR+2MR.
\end{align*}
Only the fixed-width comparison uses the global magnitude bound. The differential decoder remains well defined without it. Exact decoding recovers the rounded gradients, whose error relative to the unrounded gradients remains to be bounded. At the state queried in round $r$, write $g_r=\sum_m(N_m/n)g_{m,r}=\nabla\ell_{k_r}(\epsilon^{k_r,t_r})$. The two nearest-integer rounding operations imply
\begin{align}
\|h z_{m,r}-g_{m,r}\|_\infty&\leq h/2,\notag\\*
\|h q_r-hs_r/n\|_\infty&\leq h/2,\notag\\
\|\widetilde g_r-g_r\|_\infty &=\left\|h(q_r-s_r/n) +\sum_{m=1}^M\frac{N_m}{n}(hz_{m,r}-g_{m,r})\right\|_\infty\notag\\
&\leq\|h(q_r-s_r/n)\|_\infty +\sum_{m=1}^M\frac{N_m}{n} \|hz_{m,r}-g_{m,r}\|_\infty\notag\\
&\leq\frac h2+\frac h2\sum_{m=1}^M\frac{N_m}{n}=h,\notag\\*
\|\widetilde g_r-g_r\| &\leq\sqrt d\,\|\widetilde g_r-g_r\|_\infty \leq h\sqrt d\leq\delta/4.
\label{eq:proof-rounding-error}
\end{align}
The error in \eqref{eq:rounding-guarantee} is thus bounded at each round and does not accumulate with the number of transmitted differences. Assumption~\ref{ass:smooth} also passes to the pooled objective under sample-size weighting,
\begin{align}
\|\nabla\ell_k(u)-\nabla\ell_k(v)\| &=\left\|\sum_{m=1}^M\frac{N_m}{n} \{\nabla\ell_{m,k}(u)-\nabla\ell_{m,k}(v)\}\right\|\notag\\
&\leq\sum_{m=1}^M\frac{N_m}{n} \|\nabla\ell_{m,k}(u)-\nabla\ell_{m,k}(v)\|\notag\\*
&\leq\sum_{m=1}^M\frac{N_m}{n}\beta\|u-v\| =\beta\|u-v\|.
\label{eq:proof-pooled-smoothness}
\end{align}
Both calculations allow heterogeneous local objectives. Fix a round $r\in\mathcal S$ and temporarily write $\epsilon=\epsilon^{k_r,t_r}$. The proposed step segment is feasible, so the fundamental theorem of calculus and \eqref{eq:proof-pooled-smoothness} yield
\begin{align*}
\ell_{k_r}(\epsilon-\widetilde g_r/\beta)-\ell_{k_r}(\epsilon) &=-\frac1\beta\int_0^1 \left\langle\nabla\ell_{k_r} (\epsilon-u\widetilde g_r/\beta),\widetilde g_r\right\rangle du\\
&=-\frac1\beta\langle g_r,\widetilde g_r\rangle -\frac1\beta\int_0^1 \left\langle\nabla\ell_{k_r}(\epsilon-u\widetilde g_r/\beta) -g_r,\widetilde g_r\right\rangle du\\
&\leq-\frac1\beta\langle g_r,\widetilde g_r\rangle +\frac1\beta\int_0^1 \beta\frac{u\|\widetilde g_r\|}{\beta} \|\widetilde g_r\|\,du\\
&=-\frac{\|\widetilde g_r\|^2}{2\beta} +\frac1\beta\langle\widetilde g_r-g_r,\widetilde g_r\rangle\\
&\leq-\frac{\|\widetilde g_r\|^2}{2\beta} +\frac{h\sqrt d}{\beta}\|\widetilde g_r\|\\*
&=-\frac{\|\widetilde g_r\|^2}{2\beta} \left(1-\frac{2h\sqrt d}{\|\widetilde g_r\|}\right) \leq-\frac{\|\widetilde g_r\|^2}{6\beta} <-\frac{3\delta^2}{32\beta}.
\end{align*}
The last two inequalities use the action threshold $\|\widetilde g_r\|>3\delta/4$ and the rounding bound $h\sqrt d\leq\delta/4$. Together, they ensure that rounding preserves a fixed fraction of the descent associated with the communicated gradient. The argument requires no stationarity condition at an inner minimizer. 
The decreases can be summed over the entire execution because accepting an inner fit preserves the current empirical-loss value when the submodel is re-centered,
\begin{equation*}
\ell_{k+1}(\mathbf0) =P_nL(p^{k+1}) =P_nL(p^k(\epsilon^k)) =\ell_k(\epsilon^k).
\end{equation*}
Consider any finite prefix ending after round $u$ at state $(k,t)$. Telescoping through the completed outer fits and the current, possibly unfinished fit gives
\begin{align*}
\frac1{6\beta}\sum_{\substack{r\in\mathcal S\\r\leq u}} \|\widetilde g_r\|^2 &\leq\sum_{\substack{r\in\mathcal S\\r\leq u}} \{\ell_{k_r}(\epsilon^{k_r,t_r}) -\ell_{k_r}(\epsilon^{k_r,t_r+1})\}\\*
&=\sum_{j=0}^{k-1}\{\ell_j(\mathbf0)-\ell_j(\epsilon^j)\} +\ell_k(\mathbf0)-\ell_k(\epsilon^{k,t})\\
&=\sum_{j=0}^{k-1}\{\ell_j(\mathbf0)-\ell_{j+1}(\mathbf0)\} +\ell_k(\mathbf0)-\ell_k(\epsilon^{k,t})\\
&=\ell_0(\mathbf0)-\ell_k(\epsilon^{k,t})\\*
&\leq\ell_0(\mathbf0)-\inf_{p\in\mathcal M}P_nL(p)=\Delta.
\end{align*}
Empty sums are zero. If the prefix ends immediately after acceptance, the current inner parameter is zero and the equality still holds. A prefix may also end inside an inner solve; no termination assumption was used. Every executed step has squared communicated norm greater than $9\delta^2/16$, so the finite-prefix bound implies
\begin{align*}
\frac{3\delta^2}{32\beta} \#\{r\leq u:r\in\mathcal S\} &\leq\frac1{6\beta} \sum_{\substack{r\in\mathcal S\\r\leq u}} \|\widetilde g_r\|^2\\*
&\leq\Delta,\qquad u\geq1,\\
\sup_{u\geq1}\#\{r\leq u:r\in\mathcal S\} &\leq\frac{32\beta\Delta}{3\delta^2},\\*
\sup_{u\geq1}\sum_{\substack{r\in\mathcal S\\r\leq u}} \|\widetilde g_r\|^2 &\leq6\beta\Delta.
\end{align*}
Taking the supremum of the nonnegative partial sums gives
\begin{equation}\label{eq:proof-energy-and-steps}
\sum_{r\in\mathcal S}\|\widetilde g_r\|^2\leq6\beta\Delta, \qquad S:=|\mathcal S|\leq\frac{32\beta\Delta}{3\delta^2}<\infty.
\end{equation}
An action $01$ can occur only after at least one step since the last reset, and each step belongs to at most one accepted fit. There are therefore at most $S$ actions $01$. Since action $10$ terminates immediately, the protocol cannot execute infinitely many rounds. If $T_k$ denotes the number of steps in the $k$th accepted fit, then
\begin{align}
S&=\sum_{k=0}^{K-1}T_k, &T_k&\geq1, &K&\leq S,\notag\\*
R&=\sum_{k=0}^{K-1}(T_k+1)+1 =S+K+1 \leq2S+1 \leq1+\frac{64\beta\Delta}{3\delta^2}.
\label{eq:proof-rounds}
\end{align}
Termination occurs at the origin of submodel $K$, where the score identity applies. An endpoint query in submodel $K-1$ would not by itself justify that identity. Hence
\begin{align*}
r^K &=\|P_nD(p^K)\| =\|\nabla\ell_K(\mathbf0)\|\\*
&\leq\|\widetilde g_R\| +\|\nabla\ell_K(\mathbf0)-\widetilde g_R\|\\*
&\leq\|\widetilde g_R\|+h\sqrt d \leq\frac{3\delta}{4}+\frac{\delta}{4} =\delta.
\end{align*}
When $\Delta=0$, \eqref{eq:proof-energy-and-steps} forces $S=0$; then $K=0$ and $R=1$, so this case is covered as well.

To bound code lengths, we use the squared-gradient bound to control the total length of the fluctuation steps. Cauchy--Schwarz and \eqref{eq:proof-energy-and-steps} give
\begin{align}
\sum_{k=0}^{K-1}\|\epsilon^k\| &=\sum_{k=0}^{K-1} \left\|\sum_{t=0}^{T_k-1} (\epsilon^{k,t+1}-\epsilon^{k,t})\right\|\notag\\*
&\leq\sum_{k=0}^{K-1}\sum_{t=0}^{T_k-1} \|\epsilon^{k,t+1}-\epsilon^{k,t}\|\notag\\
&=\frac1\beta\sum_{r\in\mathcal S}\|\widetilde g_r\|\notag\\
&\leq\frac{\sqrt S}{\beta} \left(\sum_{r\in\mathcal S}\|\widetilde g_r\|^2\right)^{1/2}\notag\\*
&\leq\sqrt{\frac{6\Delta S}{\beta}} \leq\sqrt{\frac{6\Delta}{\beta} \frac{32\beta\Delta}{3\delta^2}} =\frac{8\Delta}{\delta}.
\label{eq:proof-path-length}
\end{align}
The same upper bound applies to the sum of the norms of all individual parameter increments. Successive local gradients can change either within a submodel or when the submodel is re-centered. If action $00$ occurs in round $r$, the next query uses the same submodel, and Assumption~\ref{ass:smooth} gives
\begin{align}
\|g_{m,r+1}-g_{m,r}\| &=\|\nabla\ell_{m,k_r}(\epsilon^{k_r,t_r+1}) -\nabla\ell_{m,k_r}(\epsilon^{k_r,t_r})\|\notag\\*
&\leq\beta\|\epsilon^{k_r,t_r+1}-\epsilon^{k_r,t_r}\| =\|\widetilde g_r\|.
\label{eq:proof-within-variation}
\end{align}
For action $01$, let $k=k_r$. The protocol accepts $\epsilon^k$ and evaluates the next gradient at the origin of the reconstructed submodel. Insert the gradient at the old origin to separate the change in the EIF from the change within the old submodel. The score identities at the two origins and Assumptions~\ref{ass:recentering} and \ref{ass:smooth} then give
\begin{align}
\|g_{m,r+1}-g_{m,r}\| &=\|\nabla\ell_{m,k+1}(\mathbf0) -\nabla\ell_{m,k}(\epsilon^k)\|\notag\\
&=\left\|\frac1{N_m}\sum_{i\in\mathcal I_m} D(p^{k+1})(O_i)-\nabla\ell_{m,k}(\epsilon^k)\right\|\notag\\
&=\left\|\frac1{N_m}\sum_{i\in\mathcal I_m} \{D(p^k(\epsilon^k))-D(p^k)\}(O_i) +\nabla\ell_{m,k}(\mathbf0) -\nabla\ell_{m,k}(\epsilon^k)\right\|\notag\\
&\leq \left\|\frac1{N_m}\sum_{i\in\mathcal I_m} \{D(p^k(\epsilon^k))-D(p^k(\mathbf0))\}(O_i)\right\| +\|\nabla\ell_{m,k}(\mathbf0) -\nabla\ell_{m,k}(\epsilon^k)\|\notag\\*
&\leq\kappa\|\epsilon^k\|+\beta\|\epsilon^k\| =(\kappa+\beta)\|\epsilon^k\|.
\label{eq:proof-recenter-variation}
\end{align}
With $T_k$ as in \eqref{eq:proof-rounds}, the queries within an accepted fit occur at $\epsilon^{k,0},\ldots,\epsilon^{k,T_k}$ and are followed by the origin query in submodel $k+1$. These two types of transition therefore account for the complete local variation,
\begin{align*}
\sum_{r=2}^R\|g_{m,r}-g_{m,r-1}\| &=\sum_{k=0}^{K-1}\sum_{t=0}^{T_k-1} \|\nabla\ell_{m,k}(\epsilon^{k,t+1}) -\nabla\ell_{m,k}(\epsilon^{k,t})\|\\
&\quad+\sum_{k=0}^{K-1} \|\nabla\ell_{m,k+1}(\mathbf0) -\nabla\ell_{m,k}(\epsilon^k)\|\\*
&\leq\beta\sum_{k=0}^{K-1}\sum_{t=0}^{T_k-1} \|\epsilon^{k,t+1}-\epsilon^{k,t}\| +(\kappa+\beta)\sum_{k=0}^{K-1}\|\epsilon^k\|.
\end{align*}
The final origin query contributes the last boundary term, whereas the terminating action creates no additional transition. Summing \eqref{eq:proof-within-variation} and \eqref{eq:proof-recenter-variation}, then using \eqref{eq:proof-path-length} before its final simplification, proves
\begin{align}
\sum_{r=2}^R\|g_{m,r}-g_{m,r-1}\| &=\sum_{r\in\mathcal S}\|g_{m,r+1}-g_{m,r}\| +\sum_{r:\mathsf a_r=01}\|g_{m,r+1}-g_{m,r}\|\notag\\*
&\leq\sum_{r\in\mathcal S}\|\widetilde g_r\| +(\kappa+\beta)\sum_{k=0}^{K-1}\|\epsilon^k\|\notag\\
&\leq\left(1+\frac{\kappa+\beta}{\beta}\right) \sum_{r\in\mathcal S}\|\widetilde g_r\|\notag\\*
&\leq(2+\kappa/\beta)\sqrt{6\beta\Delta S}.
\label{eq:proof-total-variation}
\end{align}
The pooled gradients obey the same bound because their sample-size weights remain fixed,
\begin{align}
\sum_{r=2}^R\|g_r-g_{r-1}\| &=\sum_{r=2}^R\left\|\sum_{m=1}^M\frac{N_m}{n} (g_{m,r}-g_{m,r-1})\right\|\notag\\*
&\leq\sum_{m=1}^M\frac{N_m}{n} \sum_{r=2}^R\|g_{m,r}-g_{m,r-1}\| \leq (2+\kappa/\beta)\sqrt{6\beta\Delta S}.
\label{eq:proof-pooled-variation}
\end{align}
Equations~\eqref{eq:proof-energy-and-steps} and \eqref{eq:proof-total-variation} prove \eqref{eq:variation-summary}. The variation across submodels has been derived from the two Lipschitz assumptions, with no separate bound imposed on successive-submodel differences. Only the first message encodes an absolute value, since all memories start at zero. The initial bound on the unrounded local gradients implies
\begin{align*}
\|z_{m,1}\|_\infty &\leq G/h+1/2,\\*
\|q_1\|_\infty &\leq\left\|\sum_{m=1}^M\frac{N_m}{n}z_{m,1}\right\|_\infty+1/2 \leq G/h+1.
\end{align*}
Combining these inequalities with \eqref{eq:proof-code-length}, the uploaded and broadcast vectors and their control flags in round one cost at most
\begin{align}
&\sum_{m=1}^M|\operatorname{enc}(z_{m,1})| +M|\operatorname{enc}(q_1)|+2M\notag\\*
&\quad\leq6Md +2Md\log_2(3/2+G/h) +2Md\log_2(2+G/h)+2M\notag\\
&\quad\leq8Md+4Md\log_2(2+G/h)\notag\\*
&\quad\leq8Md+4Md\log_2\!\left(2+\frac{8G\sqrt d}{\delta}\right).
\label{eq:proof-initial-bits}
\end{align}
No absolute gradient bound is used after this round. Local and server messages require separate bounds, since the broadcast gradient includes two rounding errors. For $r\geq2$, successive queries satisfy
\begin{align*}
|z_{m,r,j}-z_{m,r-1,j}| &=\frac1h\big|g_{m,r,j}-g_{m,r-1,j} +(hz_{m,r,j}-g_{m,r,j}) -(hz_{m,r-1,j}-g_{m,r-1,j})\big|\\*
&\leq\frac{|g_{m,r,j}-g_{m,r-1,j}|}{h} +\frac{|hz_{m,r,j}-g_{m,r,j}|}{h} +\frac{|hz_{m,r-1,j}-g_{m,r-1,j}|}{h}\\
&\leq h^{-1}|g_{m,r,j}-g_{m,r-1,j}|+1,\\
|q_{r,j}-q_{r-1,j}| &=\frac1h\big|g_{r,j}-g_{r-1,j} +(hq_{r,j}-g_{r,j}) -(hq_{r-1,j}-g_{r-1,j})\big|\\
&\leq\frac{|g_{r,j}-g_{r-1,j}|}{h} +\frac{|hq_{r,j}-g_{r,j}|}{h} +\frac{|hq_{r-1,j}-g_{r-1,j}|}{h}\\*
&\leq h^{-1}|g_{r,j}-g_{r-1,j}|+2.
\end{align*}
The bound on the server difference uses $|h q_{r,j}-g_{r,j}|\leq h$ from \eqref{eq:proof-rounding-error}. Applying concavity of the logarithm and $\|v\|_1\leq\sqrt d\|v\|$ to the code lengths gives
\begin{align*}
|\operatorname{enc}(z_{m,r}-z_{m,r-1})| &\leq3d+2\sum_{j=1}^d \log_2\!\left(2+\frac{|g_{m,r,j}-g_{m,r-1,j}|}{h}\right)\\*
&\leq3d+2d\log_2\!\left( 2+\frac{\|g_{m,r}-g_{m,r-1}\|_1}{dh}\right)\\
&\leq3d+2d\log_2\!\left( 2+\frac{\|g_{m,r}-g_{m,r-1}\|}{h\sqrt d}\right),\\*
|\operatorname{enc}(q_r-q_{r-1})| &\leq3d+2d\log_2\!\left( 3+\frac{\|g_r-g_{r-1}\|}{h\sqrt d}\right).
\end{align*}
If $R>1$, a second application of concavity, now over rounds and silos, gives the complete upload bound
\begin{align}
&\sum_{m=1}^M\sum_{r=2}^R |\operatorname{enc}(z_{m,r}-z_{m,r-1})| \leq3Md(R-1)+2d\sum_{m=1}^M\sum_{r=2}^R \log_2\!\left(2+ \frac{\|g_{m,r}-g_{m,r-1}\|}{h\sqrt d}\right)\notag\\
&\quad\leq3Md(R-1)+2Md(R-1) \log_2\!\left(2+ \frac{\sum_{m=1}^M\sum_{r=2}^R \|g_{m,r}-g_{m,r-1}\|} {M(R-1)h\sqrt d}\right)\notag\\*
&\quad\leq3Md(R-1)+2Md(R-1) \log_2\!\left(2+\frac{(2+\kappa/\beta)\sqrt{6\beta\Delta S}}{(R-1)h\sqrt d}\right).
\label{eq:proof-upload-sum}
\end{align}
Similarly, \eqref{eq:proof-pooled-variation} gives
\begin{align}
M\sum_{r=2}^R|\operatorname{enc}(q_r-q_{r-1})| &\leq3Md(R-1)+2Md\sum_{r=2}^R \log_2\!\left(3+ \frac{\|g_r-g_{r-1}\|}{h\sqrt d}\right)\notag\\*
&\leq3Md(R-1)+2Md(R-1) \log_2\!\left(3+\frac{(2+\kappa/\beta)\sqrt{6\beta\Delta S}}{(R-1)h\sqrt d}\right).
\label{eq:proof-broadcast-sum}
\end{align}
There are exactly $2MR$ control bits over the complete execution. Thus, using \eqref{eq:proof-initial-bits}, \eqref{eq:proof-upload-sum}, and \eqref{eq:proof-broadcast-sum},
\begin{align}
\mathsf B &=\sum_{m=1}^M\sum_{r=1}^R |\operatorname{enc}(z_{m,r}-z_{m,r-1})| +M\sum_{r=1}^R|\operatorname{enc}(q_r-q_{r-1})|+2MR\notag\\*
&\leq8MdR+4Md\log_2(2+G/h) +4Md(R-1)\log_2\!\left(3+ \frac{(2+\kappa/\beta)\sqrt{6\beta\Delta S}}{(R-1)h\sqrt d}\right)\notag\\
&\leq8MdR+4Md\log_2\!\left(2+ \frac{8G\sqrt d}{\delta}\right)\notag\\*
&\qquad+4Md(R-1)\log_2\!\left(3+ \frac{8(2+\kappa/\beta)\sqrt{6\beta\Delta S}} {(R-1)\delta}\right).
\label{eq:proof-pathwise-bits}
\end{align}
For $R=1$, only \eqref{eq:proof-initial-bits} is needed and the last term is absent.

It remains to eliminate the realized step and round counts from the logarithmic term. For this calculation, put $a=\beta\Delta/\delta^2$, $c=2+\kappa/\beta\geq2$, and $x=G\sqrt d/\delta$. If $R>1$, then $a>0$ and $S\leq R-1\leq64a/3$. For $u>0$, the function $u\log_2(3+A/\sqrt u)$ is increasing whenever $A\geq0$, as follows from
\begin{align*}
\frac{d}{du}\{u\log_2(3+A/\sqrt u)\} &=\frac1{\log2}\left\{ \log(3+A/\sqrt u) -\frac{A/\sqrt u}{2(3+A/\sqrt u)}\right\}\\*
&\geq\frac{\log3-1/2}{\log2}>0.
\end{align*}
With $u=R-1$ and $A=8c\sqrt{6a}$, the logarithmic term in \eqref{eq:proof-pathwise-bits} consequently obeys
\begin{align*}
(R-1)\log_2\!\left(3+ \frac{8c\sqrt{6aS}}{R-1}\right) &\leq(R-1)\log_2\!\left(3+ 8c\sqrt{\frac{6a}{R-1}}\right)\\
&\leq\frac{64a}{3}\log_2\!\left(3+ 8c\sqrt{\frac{18}{64}}\right)\\*
&=\frac{64a}{3}\log_2(3+3\sqrt2\,c) \leq\frac{128a}{3}\log_2(c+2).
\end{align*}
The final comparison follows by factoring the difference:
\begin{align*}
(c+2)^2-(3+3\sqrt2\,c) &=c^2+(4-3\sqrt2)c+1\\*
&=(c-2)(c+6-3\sqrt2)+13-6\sqrt2\geq0, \qquad c\geq2.
\end{align*}
Also $\log_2(2+8x)\leq3+\log_2(2+x)$ and $\log_2(c+2)\geq2$. Substituting into the complete cost gives
\begin{align*}
\frac{\mathsf B}{Md} &\leq8+\frac{512a}{3}+4\log_2(2+8x) +\frac{256a}{3}\log_2(3+3\sqrt2\,c)\\*
&\leq20+4\log_2(2+x)+\frac{512a}{3} +\frac{512a}{3}\log_2(c+2)\\
&\leq20+4\log_2(2+x)+256a\log_2(c+2)\\*
&\leq256\{(1+a)\log_2(c+2)+\log_2(2+x)\}.
\end{align*}
For $R=1$, \eqref{eq:proof-initial-bits} gives the same bound. Substituting the definitions of $a,c,x$ proves \eqref{eq:main-communication}, with $C=256$ as one valid universal constant. The count includes the final certificate round. As specified in the accounting model, it excludes initial model transfer, transport headers, and security mechanisms.
\end{proof}

\subsection{Proof of Theorem~\ref{thm:statistical}}\label{app:statistical}
\begin{proof}
The concentration argument uses the full countable output family. 
The accepted fluctuations suffice to reconstruct the terminal density. To encode them, note that the updates start each submodel at zero and satisfy
\begin{align*}
\epsilon^{k,t} &=\epsilon^{k,0} +\sum_{u=0}^{t-1} (\epsilon^{k,u+1}-\epsilon^{k,u})\\*
&=-\frac h\beta \sum_{\substack{r\in\mathcal S\\k_r=k,\ t_r<t}}q_r,\\
\frac{\beta\epsilon^{k,t}}h &=-\sum_{\substack{r\in\mathcal S\\k_r=k,\ t_r<t}}q_r \in\mathbb Z^d,\\*
\frac{\beta\epsilon^k}h &=\frac{\beta\epsilon^{k,T_k}}h =-\sum_{\substack{r\in\mathcal S\\k_r=k}}q_r \in\mathbb Z^d.
\end{align*}
When an initial-state description is required, the decoder reads it first, followed by the gamma code for $K+1$ and exactly $Kd$ signed-integer codewords. It recovers each $\epsilon^k$ by multiplying its integer vector by $h/\beta$. The decoder constructs its submodel from the previously decoded state and applies $p^{k+1}=p^k(\epsilon^k)$, reconstructing $p^K$ by induction. This reconstruction requires no inner stopping times, intermediate gradients, server accumulators, or rejected inner parameters. Nor must the decoder verify that a particular dataset would produce the history.

The description length depends on the magnitudes of the accepted integer coefficients. Equations~\eqref{eq:proof-path-length} and \eqref{eq:proof-energy-and-steps} bound their total absolute magnitude in terms of the communicated GD steps,
\begin{align*}
\sum_{k=0}^{K-1}\sum_{j=1}^d \left|\frac{\beta\epsilon_j^k}{h}\right| &=\frac\beta h\sum_{k=0}^{K-1}\|\epsilon^k\|_1\\*
&\leq\frac{\beta\sqrt d}{h}\sum_{k=0}^{K-1} \left\|\sum_{t=0}^{T_k-1} (\epsilon^{k,t+1}-\epsilon^{k,t})\right\|\\
&\leq\frac{\beta\sqrt d}{h}\sum_{k=0}^{K-1} \sum_{t=0}^{T_k-1} \|\epsilon^{k,t+1}-\epsilon^{k,t}\|\\
&=\frac{\sqrt d}{h}\sum_{r\in\mathcal S}\|\widetilde g_r\|\\
&\leq\frac{\sqrt{6d\beta\Delta S}}h
\leq\frac{8\beta\Delta\sqrt d}{h\delta} \leq\frac{64d\beta\Delta}{\delta^2}.
\end{align*}
For $K=0$ the sum is empty. The gamma code of $1$ is a single bit, so $H_0=H_{\mathrm{init}}+1$. For $K\geq1$, the exact code length and concavity of the logarithm give
\begin{align*}
H_K-H_{\mathrm{init}} &=2\lfloor\log_2(K+1)\rfloor+1 +\sum_{k=0}^{K-1}\sum_{j=1}^d \left|\operatorname{enc} \left(\frac{\beta\epsilon_j^k}h\right)\right|\\*
&\leq1+2\log_2(K+1)+3Kd +2\sum_{k=0}^{K-1}\sum_{j=1}^d \log_2\!\left(1+\left| \frac{\beta\epsilon_j^k}h\right|\right)\\
&\leq1+2\log_2(K+1)+3Kd +2Kd\log_2\!\left(1+\frac1{Kd} \sum_{k=0}^{K-1}\sum_{j=1}^d \left|\frac{\beta\epsilon_j^k}h\right|\right)\\*
&\leq1+2\log_2(K+1)+3Kd +2Kd\log_2\!\left(1+\frac{64\beta\Delta}{K\delta^2}\right).
\end{align*}
Since $\log_2(K+1)\leq K\leq Kd$ and $\log_2(1+64u)\leq6+\log_2(2+u)$ for $u\geq0$, this implies
\begin{align*}
H_K-H_{\mathrm{init}} &\leq1+17Kd +2Kd\log_2\!\left(2+\frac{\beta\Delta}{K\delta^2}\right)\\*
&\leq1+19Kd\log_2\!\left(2+\frac{\beta\Delta}{K\delta^2}\right)\\*
&\leq20\left\{1+Kd\log_2\!\left( 2+\frac{\beta\Delta}{K\delta^2}\right)\right\}.
\end{align*}
This establishes \eqref{eq:history-length}; the larger universal constant in Theorem~\ref{thm:communication} also suffices here. The bound holds deterministically for the realized history and has not used a probability argument or complexity assumption.

For the probability bound, impose Assumption~\ref{ass:statistical-code} and consider all outputs of its fixed decoder. The complete code is prefix-free. The end of a prefix-free initial-state code is recognizable, so the history code can be appended to it; the gamma code of $K+1$ then determines how many signed-integer codewords to read. This remains true for variable $K$. The Kraft mass of the signed-integer code is
\begin{align}
\sum_{a\in\mathbb Z}2^{-|\operatorname{enc}(a)|} &=\sum_{v=1}^{\infty} 2^{-\{2\lfloor\log_2v\rfloor+1\}}\notag\\*
&=\sum_{j=0}^{\infty} \sum_{v=2^j}^{2^{j+1}-1}2^{-(2j+1)}\notag\\*
&=\sum_{j=0}^{\infty}2^j2^{-(2j+1)} =\sum_{j=0}^{\infty}2^{-j-1}=1.
\label{eq:proof-integer-kraft}
\end{align}
Let $\mathcal W$ be the set of all complete codewords that the fixed decoder maps to a valid density, including codewords absent from the observed optimization path. With $w_0$ ranging over initial-state codewords, Kraft's inequality \citep{cover2006elements} and \eqref{eq:proof-integer-kraft} give
\begin{align}
\sum_{w\in\mathcal W}2^{-|w|} &\leq\sum_{w_0}2^{-|w_0|} \sum_{K=0}^{\infty}2^{-\{2\lfloor\log_2(K+1)\rfloor+1\}} \prod_{k=0}^{K-1}\prod_{j=1}^d \left(\sum_{a\in\mathbb Z} 2^{-|\operatorname{enc}(a)|}\right)\notag\\*
&\leq\sum_{K=0}^{\infty} 2^{-\{2\lfloor\log_2(K+1)\rfloor+1\}} =1.
\label{eq:proof-history-kraft}
\end{align}
With independent initialization, condition on the initial state and replace the first sum by one empty initial code of mass one. Under this conditioning, the observations remain i.i.d. from $p_0$. With same-sample initialization, retain its entire description in $w$ and do not condition on the initial state; \eqref{eq:proof-history-kraft} then applies unconditionally. Multiple codewords may represent the same density. Counting them separately is valid and can only enlarge the subsequent probability bound.

We apply a variance-sensitive concentration argument \citep{boucheron2013concentration} to each fixed decoded density before evaluating the bound at the selected output. Fix $w\in\mathcal W$, denote its decoded density by $p_w$, and fix a coordinate $j$. Write
\begin{align*}
f_{w,j}(o)&:=D_j(p_w)(o)-D_j(p_0)(o),\\*
X_i&:=f_{w,j}(O_i)-P_0f_{w,j},\qquad v_{w,j}:=P_0(f_{w,j}-P_0f_{w,j})^2.
\end{align*}
On the probability space under consideration, fixing the codeword fixes $p_w$. Thus the $X_i$ are independent; this argument does not condition on the output selected by the algorithm. The envelope bound and centering imply
\begin{align*}
P_0 f_{w,j}&=P_0D_j(p_w)-P_0D_j(p_0),\\*
|f_{w,j}(o)|&\leq|D_j(p_w)(o)|+|D_j(p_0)(o)|\leq2G,\\
|X_i|&\leq|f_{w,j}(O_i)|+P_0|f_{w,j}|\leq4G,\\
\mathbb E X_i&=P_0f_{w,j}-P_0f_{w,j}=0,\\
\mathbb E X_i^2 &=P_0(f_{w,j}-P_0f_{w,j})^2\\*
&=P_0f_{w,j}^2-(P_0f_{w,j})^2 =v_{w,j}\leq P_0f_{w,j}^2.
\end{align*}
For every integer $\ell\geq2$, the higher moments therefore satisfy
\begin{align}
|\mathbb E X_i^\ell| &\leq\mathbb E|X_i|^\ell =\mathbb E\{|X_i|^{\ell-2}X_i^2\}\notag\\*
&\leq(4G)^{\ell-2}\mathbb E X_i^2 =(4G)^{\ell-2}v_{w,j},\notag\\*
\ell!&=2\prod_{u=3}^\ell u\geq2\,3^{\ell-2}.
\label{eq:proof-moment-control}
\end{align}
The product is one for $\ell=2$. Boundedness justifies termwise expectation in the exponential series. For $4G|\lambda|<3$, the geometric majorant in \eqref{eq:proof-moment-control} yields
\begin{align*}
\mathbb E e^{\lambda X_i} &=1+\lambda\mathbb E X_i +\sum_{\ell=2}^{\infty}\frac{\lambda^\ell\mathbb E X_i^\ell}{\ell!}\\*
&\leq1+\sum_{\ell=2}^{\infty} \frac{|\lambda|^\ell|\mathbb E X_i^\ell|}{\ell!}\\
&\leq1+v_{w,j}\sum_{\ell=2}^{\infty} \frac{|\lambda|^\ell(4G)^{\ell-2}}{\ell!}\\
&\leq1+\frac{\lambda^2v_{w,j}}2 \sum_{\ell=2}^{\infty} \left(\frac{4G|\lambda|}{3}\right)^{\ell-2}\\
&=1+\frac{\lambda^2v_{w,j}} {2(1-4G|\lambda|/3)}\\
&\leq\exp\!\left\{ \frac{\lambda^2v_{w,j}}{2(1-4G|\lambda|/3)}\right\},\\
\log\mathbb E\exp\!\left(\lambda\sum_{i=1}^nX_i\right) &=\sum_{i=1}^n\log\mathbb E e^{\lambda X_i}
\leq\frac{n\lambda^2v_{w,j}}{2(1-4G|\lambda|/3)}.
\end{align*}
For this scalar calculation, write $c=4G/3$ and first assume $c>0$ and $v_{w,j}>0$. For $t>0$, the exponential Markov inequality gives
\begin{align*}
\mathbb P\{(P_n-P_0)f_{w,j}>t\} &=\mathbb P\!\left\{\sum_{i=1}^nX_i>nt\right\}\\*
&\leq\inf_{0<\lambda<1/c} e^{-n\lambda t}\, \mathbb E\exp\!\left(\lambda\sum_{i=1}^nX_i\right)\\*
&\leq\exp\!\left[-n\sup_{0<\lambda<1/c} \left\{\lambda t- \frac{\lambda^2v_{w,j}}{2(1-c\lambda)}\right\}\right].
\end{align*}
The objective in this supremum is strictly concave,
\begin{align*}
\frac{d}{d\lambda} \left\{\lambda t-\frac{\lambda^2v_{w,j}}{2(1-c\lambda)}\right\} &=t-\frac{v_{w,j}\lambda(2-c\lambda)}{2(1-c\lambda)^2}\\*
&=t-\frac{v_{w,j}}{2c} \{(1-c\lambda)^{-2}-1\},\\*
\frac{d^2}{d\lambda^2} \left\{\lambda t-\frac{\lambda^2v_{w,j}}{2(1-c\lambda)}\right\} &=-\frac{v_{w,j}}{(1-c\lambda)^3}<0.
\end{align*}
The derivative is positive at zero and tends to $-\infty$ at $1/c$. Its unique zero is consequently the maximizer,
\begin{align*}
(1-c\lambda_*)^{-2} &=1+\frac{2ct}{v_{w,j}},\\*
\lambda_* &=\frac1c\left(1-\sqrt{\frac{v_{w,j}}{v_{w,j}+2ct}}\right) \in(0,1/c),\\*
1-c\lambda_* &=\sqrt{\frac{v_{w,j}}{v_{w,j}+2ct}}.
\end{align*}
Substituting the maximizing value evaluates the supremum exactly,
\begin{align*}
\lambda_*t- \frac{\lambda_*^2v_{w,j}}{2(1-c\lambda_*)} &=\frac tc\left(1- \sqrt{\frac{v_{w,j}}{v_{w,j}+2ct}}\right) -\frac{v_{w,j}}{2c^2} \frac{\left(1-\sqrt{v_{w,j}/(v_{w,j}+2ct)}\right)^2} {\sqrt{v_{w,j}/(v_{w,j}+2ct)}}\\
&=\frac{ 2ct\{\sqrt{v_{w,j}+2ct}-\sqrt{v_{w,j}}\}} {2c^2\sqrt{v_{w,j}+2ct}} -\frac{\sqrt{v_{w,j}} \{\sqrt{v_{w,j}+2ct}-\sqrt{v_{w,j}}\}^2} {2c^2\sqrt{v_{w,j}+2ct}}\\
&=\frac{\sqrt{v_{w,j}+2ct}-\sqrt{v_{w,j}}} {2c^2\sqrt{v_{w,j}+2ct}} \left[2ct-\sqrt{v_{w,j}} \{\sqrt{v_{w,j}+2ct}-\sqrt{v_{w,j}}\}\right]\\
&=\frac{\sqrt{v_{w,j}+2ct}-\sqrt{v_{w,j}}} {2c^2\sqrt{v_{w,j}+2ct}} \left[v_{w,j}+2ct- \sqrt{v_{w,j}(v_{w,j}+2ct)}\right]\\*
&=\frac{ \{\sqrt{v_{w,j}+2ct}-\sqrt{v_{w,j}}\}^2}{2c^2}.
\end{align*}
The same argument applies to $-X_i$. Thus
\begin{equation*}
\mathbb P\{|(P_n-P_0)f_{w,j}|>t\} \leq2\exp\!\left[-\frac{n}{2c^2} \{\sqrt{v_{w,j}+2ct}-\sqrt{v_{w,j}}\}^2\right].
\end{equation*}
For $x>0$, choose $t_x=\sqrt{2v_{w,j}x/n}+cx/n$. The exponent is exactly $x$, because
\begin{align*}
v_{w,j}+2ct_x &=v_{w,j}+2c\sqrt{\frac{2v_{w,j}x}{n}} +\frac{2c^2x}{n}\\*
&=\left(\sqrt{v_{w,j}}+c\sqrt{\frac{2x}{n}}\right)^2,\\*
\frac{n}{2c^2} \{\sqrt{v_{w,j}+2ct_x}-\sqrt{v_{w,j}}\}^2 &=\frac{n}{2c^2} \left(c\sqrt{\frac{2x}{n}}\right)^2=x.
\end{align*}
Restoring $c=4G/3$ gives the fixed-code bound
\begin{equation*}
\mathbb P\!\left\{|(P_n-P_0)f_{w,j}|> \sqrt{\frac{2v_{w,j}x}{n}}+\frac{4Gx}{3n}\right\} \leq2e^{-x}.
\end{equation*}
If $v_{w,j}=0$, the centered variable vanishes almost surely; if $G=0$, every coordinate vanishes. Retaining the variance of the EIF difference in the square-root term allows this part of the statistical penalty to decrease as the estimated EIF approaches the true EIF. 
To make the concentration event simultaneous over decoded histories of every length, assign $x_w=|w|\log2+\log(2d/\xi)$ to each codeword. The resulting failure probabilities are summable by \eqref{eq:proof-history-kraft}. A union bound over codewords and coordinates gives
\begin{align*}
&\mathbb P\!\left\{\exists w\in\mathcal W,\ 1\leq j\leq d: |(P_n-P_0)f_{w,j}|> \sqrt{\frac{2v_{w,j}x_w}{n}}+\frac{4Gx_w}{3n}\right\} \leq\sum_{w\in\mathcal W}\sum_{j=1}^d2e^{-x_w}\\
&\quad=\sum_{w\in\mathcal W}\sum_{j=1}^d 2\exp\{-|w|\log2-\log(2d/\xi)\}\\*
&\quad=\xi\sum_{w\in\mathcal W}2^{-|w|} \leq\xi.
\end{align*}
On the complementary event, all coordinate bounds hold for every decoded density. Combining them by Cauchy--Schwarz introduces an explicit factor $\sqrt d$ only in the envelope term,
\begin{align}
&\|(P_n-P_0)\{D(p_w)-D(p_0)\}\| =\left\{\sum_{j=1}^d |(P_n-P_0)f_{w,j}|^2\right\}^{1/2}\notag\\
&\quad\leq\left\{\sum_{j=1}^d \left(\sqrt{\frac{2v_{w,j}x_w}{n}} +\frac{4Gx_w}{3n}\right)^2\right\}^{1/2}\notag\\
&\quad=\left\{\frac{2x_w}{n}\sum_{j=1}^dv_{w,j} +\frac{8Gx_w}{3n}\sqrt{\frac{2x_w}{n}} \sum_{j=1}^d\sqrt{v_{w,j}} +\frac{16dG^2x_w^2}{9n^2}\right\}^{1/2}\notag\\
&\quad\leq\left\{\frac{2x_w}{n}\sum_{j=1}^dv_{w,j} +\frac{8G\sqrt d\,x_w}{3n} \sqrt{\frac{2x_w}{n}\sum_{j=1}^dv_{w,j}} +\frac{16dG^2x_w^2}{9n^2}\right\}^{1/2}\notag\\
&\quad=\sqrt{\frac{2x_w}{n}} \left(\sum_{j=1}^dv_{w,j}\right)^{1/2} +\frac{4G\sqrt d\,x_w}{3n}\notag\\
&\quad=\sqrt{\frac{2x_w}{n}} \left\{P_0\sum_{j=1}^df_{w,j}^2 -\sum_{j=1}^d(P_0f_{w,j})^2\right\}^{1/2} +\frac{4G\sqrt d\,x_w}{3n}\notag\\*
&\quad\leq\sqrt{\frac{2x_w}{n}}\, \|D(p_w)-D(p_0)\|_{L^2(p_0)} +\frac{4G\sqrt d\,x_w}{3n}.
\label{eq:proof-vector-concentration}
\end{align}
There are countably many complete codewords and finitely many coordinates, so the envelope and fixed-code statements hold on a common probability-one set. The algorithm's realized output has a codeword of length $H_K$. We may substitute it into the simultaneous concentration bound even though its length, accepted fluctuations, and stopping index depend on the sample. This gives the empirical-process bound with $x_w=x_K=H_K\log2+\log(2d/\xi)$. Because the failure probabilities were allocated across all descriptions, no a priori deterministic bound on $K$ or $R$ is needed, and the argument does not rely on fixed-time confidence statements.

The empirical-process event covers every valid decoded density, including nonstationary densities; the inequality $r^K\leq\delta$ is used only at the endpoint certified by the algorithm. By \eqref{eq:remainder} and the centering identity $P_0D(p_0)=\mathbf0$,
\begin{align*}
&\Psi(p^K)-\Psi(p_0)-P_nD(p_0) =-P_0D(p^K)+R_2(p^K,p_0)-P_nD(p_0)\\
&\quad=-P_nD(p^K)+(P_n-P_0)D(p^K) +R_2(p^K,p_0)-P_nD(p_0)\\
&\quad=-P_nD(p^K)+(P_n-P_0)\{D(p^K)-D(p_0)\} +R_2(p^K,p_0)-P_0D(p_0)\\*
&\quad=-P_nD(p^K)+(P_n-P_0)\{D(p^K)-D(p_0)\} +R_2(p^K,p_0).
\end{align*}
Taking norms, applying the terminal certificate from Theorem~\ref{thm:communication}, and using \eqref{eq:proof-vector-concentration}, all on the same event, gives
\begin{align*}
&\|\Psi(p^K)-\Psi(p_0)-P_nD(p_0)\|\\*
&\quad\leq\|P_nD(p^K)\| +\|(P_n-P_0)\{D(p^K)-D(p_0)\}\| +\|R_2(p^K,p_0)\|\\*
&\quad\leq\delta+\|R_2(p^K,p_0)\| +\sqrt{\frac{2x_K}{n}}\, \|D(p^K)-D(p_0)\|_{L^2(p_0)} +\frac{4G\sqrt d\,x_K}{3n}.
\end{align*}
The nonlinear remainder remains a separate term in the exact expansion. Bounding its order requires a statistical argument about the terminal density; neither the EIF residual nor the code length supplies that bound.
\end{proof}

\subsection{Proof of Corollary~\ref{cor:inference}}\label{app:inference}
\begin{proof}
Set $\delta_n=(a_n\sqrt n)^{-1}$. Then $\beta\Delta/\delta_n^2=\beta\Delta\,na_n^2=O_{\mathbb P}(na_n^2)$. Applying Theorem~\ref{thm:communication} with the stipulated bounded constants yields
\begin{align*}
R &\leq1+\frac{64}{3}\beta\Delta\,na_n^2 =O_{\mathbb P}(na_n^2),\\*
\frac{\mathsf B}{Md} &\leq C\left\{(1+\beta\Delta\,na_n^2) \log_2(4+\kappa/\beta) +\log_2(2+G\sqrt{dn}\,a_n)\right\}\\*
&=O_{\mathbb P}(na_n^2) +O\!\left(1+\log(na_n^2)\right) =O_{\mathbb P}(na_n^2).
\end{align*}
Since $d$ is fixed and $na_n^2\to\infty$, these estimates prove \eqref{eq:inference-cost}. The cost retains its explicit dependence on $M$, which does not enter the empirical-process argument. To derive \eqref{eq:history-asymptotic} from \eqref{eq:history-length} for random $K$, work on the event $\beta\Delta\leq A$, where $A$ is a fixed finite constant, and put $A_+=\max(1,A)$. For every $u\geq0$,
\begin{align*}
\log_2(2+Au) &\leq\log_2\{A_+(2+u)\}\\*
&=\log_2 A_++\log_2(2+u)\\*
&\leq(1+\log_2 A_+)\log_2(2+u).
\end{align*}
For $K\geq1$, substitute $u=na_n^2/K$ in this deterministic bound; for $K=0$, use $H_0-H_{\mathrm{init}}=1$. Since $\beta\Delta=O_{\mathbb P}(1)$, the probability of the event $\beta\Delta\leq A$ can be made arbitrarily close to one by choosing $A$ large. Thus
\begin{equation*}
H_K-H_{\mathrm{init}} =O_{\mathbb P}\!\left( 1+Kd\log\!\left(2+\frac{na_n^2}{K\vee1}\right)\right).
\end{equation*}
If $K=O_{\mathbb P}(1)$, $H_{\mathrm{init}}=0$, and $\log a_n=o(\log n)$, the right-hand side is $O_{\mathbb P}(\log n)$, establishing the example discussed after the corollary without assuming a deterministic bound on the realized stopping index. For $n\geq2$, apply Theorem~\ref{thm:statistical} with $\xi=1/n$. Its simultaneous bound holds with failure probability at most $1/n$. Since $d$ is fixed,
\begin{equation*}
x_K=H_K\log2+\log(2dn) \leq C_d(H_K+\log n)
\end{equation*}
for a finite constant $C_d$. On this event, multiplication of \eqref{eq:history-statistical-bound} by $\sqrt n$ gives
\begin{align}
&\sqrt n\,\|\Psi(p^K)-\Psi(p_0)-P_nD(p_0)\|\notag\\*
&\quad\leq\frac1{a_n} +\sqrt n\,\|R_2(p^K,p_0)\| +\sqrt{2x_K}\,\|D(p^K)-D(p_0)\|_{L^2(p_0)} +\frac{4G\sqrt d\,x_K}{3\sqrt n}\notag\\
&\quad\leq\frac1{a_n} +\sqrt n\,\|R_2(p^K,p_0)\|\notag\\*
&\qquad\quad+\sqrt{2C_d}\, \sqrt{H_K+\log n}\, \|D(p^K)-D(p_0)\|_{L^2(p_0)} +\frac{4C_dG\sqrt d\,(H_K+\log n)}{3\sqrt n}.
\label{eq:proof-root-n-bound}
\end{align}
By \eqref{eq:inference-conditions} and $a_n\to\infty$, every term on the last two lines tends to zero in probability. In detail, fix $\eta>0$ and let $U_n$ denote the final right-hand side of \eqref{eq:proof-root-n-bound}. A union bound gives
\begin{equation*}
\mathbb P\!\left\{\sqrt n\, \|\Psi(p^K)-\Psi(p_0)-P_nD(p_0)\|>\eta\right\} \leq\frac1n+\mathbb P(U_n>\eta)\longrightarrow0.
\end{equation*}
For independent initialization, the concentration argument is initially conditional on that state; integrating the conditional failure probability retains the bound $1/n$. Finally, $D(p_0)(O_i)$ are i.i.d., centered, and square-integrable. For any $v\in\mathbb R^d$,
\begin{align*}
\mathbb E\{v^\top D(p_0)(O_i)\}&=v^\top P_0D(p_0)=0,\\*
\mathbb E\bigl[\{v^\top D(p_0)(O_i)\}^2\bigr] &=v^\top P_0[D(p_0)D(p_0)^\top]v<\infty,\\
v^\top\sqrt n\,P_nD(p_0) &=\frac1{\sqrt n}\sum_{i=1}^n v^\top D(p_0)(O_i)\\*
&\rightsquigarrow \mathcal N\!\left(0, v^\top P_0[D(p_0)D(p_0)^\top]v\right).
\end{align*}
Applying the scalar central limit theorem, Cram\'er--Wold, and Slutsky's theorem \citep{vandervaart1998asymptotic} proves \eqref{eq:canonical-limit}, with no requirement that the canonical-gradient covariance be nonsingular. The conclusion is pointwise at $p_0$. Regularity under local alternatives requires a separate argument and does not follow from the communication or coding bounds.
\end{proof}

\subsection{Proof of Theorem~\ref{thm:reconstruction}}
\label{app:reconstruction}
\begin{proof}
Let $\mathsf Q$ be a deterministic adaptive query policy, and let $\mathsf T_{\mathsf Q}(x,e)$ denote the transcript of its queries and responses under \eqref{eq:robust-channel}. Write $\mathcal X_s$ for the class of compatible datasets in Theorem~\ref{thm:reconstruction}, and label the $s$ undisclosed records in each dataset by $1,\ldots,s$. The record distance and minimax risk are
\begin{align*}
\operatorname{dist}(x,x') &=\min_{\sigma\in\mathfrak S_s}\max_{1\leq j\leq s} \|o_j-o'_{\sigma(j)}\|,\\
\mathcal R_T^{\mathrm{rec}}(\eta) &=\inf_{\mathsf Q,\widehat x} \sup_{\substack{x\in\mathcal X_s\\\|e\|\leq\eta}} \operatorname{dist}\bigl(\widehat x(\mathsf T_{\mathsf Q}(x,e)),x\bigr).
\end{align*}
The vector $e$ stacks all response perturbations as in \eqref{eq:robust-channel}. The remaining records and the public information stay fixed throughout.

Let $\mathsf C_s(u)=\cos(s\arccos u)$ be the Chebyshev polynomial; this notation is distinct from the derivative bound $C_s$. For $\theta\in[-1/2,1/2]$ and $j=0,\ldots,s-1$, define
\begin{align*}
\phi_j(\theta) &=\frac{j\pi+\arccos((-1)^j\theta)}{s}, \qquad u_j(\theta)=\cos\phi_j(\theta),\\
\mathsf C_s(u_j(\theta)) &=\cos\{j\pi+\arccos((-1)^j\theta)\} =(-1)^j(-1)^j\theta=\theta.
\end{align*}
The defining angles give the ordering
\begin{equation*}
\frac{j\pi}{s}<\phi_j(\theta)<\frac{(j+1)\pi}{s}, \qquad 1>u_0(\theta)>\cdots>u_{s-1}(\theta)>-1.
\end{equation*}
The resulting points are all $s$ roots of $\mathsf C_s-\theta$. None is a critical point of $\mathsf C_s$, since
\begin{equation*}
(\mathsf C_s)'(u_j(\theta)) =\frac{s\sin(s\phi_j(\theta))}{\sin\phi_j(\theta)}, \qquad |\sin(s\phi_j(\theta))|=\sqrt{1-\theta^2}\geq\sqrt3/2.
\end{equation*}
Thus the implicit derivatives used below are finite and continuous throughout the level interval. For $0<a\leq \rho$, define the undisclosed records of $x_+$ and $x_-$ by
\begin{equation*}
o_{j+1}^{+}=o_\circ+a u_j(1/2)v, \qquad o_{j+1}^{-}=o_\circ+a u_j(-1/2)v, \qquad j=0,\ldots,s-1.
\end{equation*}
These indices relabel only the undisclosed block within $\mathcal I_m$. The sample weights and fluctuation family remain unchanged. To bound the distance between these datasets, we must account for the optimal matching of their records. At the two endpoint levels,
\begin{align*}
\phi_j(1/2) &=\frac{(j+1/2)\pi}{s}-\frac{(-1)^j\pi}{6s},\\*
\phi_j(-1/2) &=\frac{(j+1/2)\pi}{s}+\frac{(-1)^j\pi}{6s},\\
|u_j(1/2)-u_j(-1/2)| &=2\sin\!\left(\frac{(j+1/2)\pi}{s}\right) \sin\!\left(\frac{\pi}{6s}\right).
\end{align*}
Ordered matching minimizes the maximum distance between two ordered real lists. If $u_1\geq u_2$ and $v_1\geq v_2$, then
\begin{equation*}
\max\{|u_1-v_1|,|u_2-v_2|\} \leq\max\{|u_1-v_2|,|u_2-v_1|\}.
\end{equation*}
Indeed, $u_1-v_1\leq u_1-v_2$ and $v_1-u_1\leq v_1-u_2$ bound the first absolute value; $u_2-v_2\leq u_1-v_2$ and $v_2-u_2\leq v_1-u_2$ bound the second. Repeatedly uncrossing a permutation thus cannot increase its maximum cost. Norms along the unit direction $v$ equal absolute differences of the corresponding coordinates, so
\begin{align}
	\operatorname{dist}(x_+,x_-)
	&=a\max_{0\leq j<s}|u_j(1/2)-u_j(-1/2)|\notag\\*
	&=2a\sin\!\left(\frac{\pi}{6s}\right)
	\max_{0\leq j<s}\sin\!\left(\frac{(j+1/2)\pi}{s}\right)\notag\\
	&\geq2a\sin\!\left(\frac{\pi}{6s}\right)
	\cos\!\left(\frac{\pi}{2s}\right)
	\geq2a\frac{2}{\pi}\frac{\pi}{6s}\frac12
	=\frac{a}{3s}.
	\label{eq:proof-collision-distance}
\end{align}
For the penultimate inequality, at least one angle lies within $\pi/(2s)$ of $\pi/2$. The last inequality uses $s\geq2$ and the bound $\sin t\geq2t/\pi$ on $[0,\pi/2]$. To verify moment cancellation, use the fact that $\mathsf C_s$ has leading coefficient $2^{s-1}$ to write
\begin{equation*}
\prod_{j=0}^{s-1}(u-u_j(\theta)) =2^{1-s}\{\mathsf C_s(u)-\theta\} =u^s+a_1u^{s-1}+\cdots+a_{s-1}u+a_s(\theta).
\end{equation*}
Only the constant coefficient depends on $\theta$. If $M_\ell(\theta)=\sum_j u_j(\theta)^\ell$, Newton's identities imply, for $1\leq\ell<s$,
\begin{align*}
M_1(\theta)&=-a_1,\\*
M_\ell(\theta) &=-a_1M_{\ell-1}(\theta)-\cdots -a_{\ell-1}M_1(\theta)-\ell a_\ell,\\
M_\ell(1/2)-M_\ell(-1/2) &=-\sum_{j=1}^{\ell-1}a_j \{M_{\ell-j}(1/2)-M_{\ell-j}(-1/2)\}=0.
\end{align*}
The last equality follows by induction, while the degree-zero identity is $M_0(1/2)=M_0(-1/2)=s$. Consequently, every polynomial of degree less than $s$ satisfies
\begin{equation*}
\sum_{j=0}^{s-1}P(u_j(1/2)) -\sum_{j=0}^{s-1}P(u_j(-1/2))=0, \qquad \deg P<s.
\end{equation*}
For a general accepted gradient query, the polynomial cancellation leaves a remainder to control. Fix $g\in\mathcal G$ and define the vector-valued function $f(u)=g(o_\circ+a uv)$. By \eqref{eq:query-smoothness},
\begin{equation*}
f^{(s)}(u)=a^s\frac{d^s}{dz^s}g(o_\circ+zv)\bigg|_{z=au}, \qquad \sup_{|u|\leq1}\|f^{(s)}(u)\|\leq C_s a^s.
\end{equation*}
We use the integral representation of divided differences \citep{deboor2005divided} to bound this remainder. For distinct nodes $\xi_0,\ldots,\xi_k$, Lagrange interpolation gives the order-$k$ divided difference
\begin{equation*}
f[\xi_0,\ldots,\xi_k] =\sum_{j=0}^k \frac{f(\xi_j)}{\prod_{\ell\ne j}(\xi_j-\xi_\ell)}.
\end{equation*}
It is the leading coefficient of $\sum_j f(\xi_j)\prod_{\ell\ne j}(u-\xi_\ell)/ \prod_{\ell\ne j}(\xi_j-\xi_\ell)$. Subtracting the expressions with the first and last nodes omitted, then dividing by $\xi_k-\xi_0$, gives the divided-difference recursion. These identities apply componentwise to vector-valued $f$. 
To derive the integral representation, set $\Lambda_k=\{\lambda\in[0,\infty)^k:\sum_{j=1}^k\lambda_j\leq1\}$ and $\lambda_0=1-\sum_{j=1}^k\lambda_j$. The integral
\begin{equation*}
J_k(\xi_0,\ldots,\xi_k;f) =\int_{\Lambda_k}f^{(k)}\!\left( \lambda_0\xi_0+\sum_{j=1}^k\lambda_j\xi_j\right) d\lambda_1\cdots d\lambda_k
\end{equation*}
has the same recursion. Fixing $\lambda_1,\ldots,\lambda_{k-1}$, write $b=1-\sum_{j=1}^{k-1}\lambda_j$. The fundamental theorem of calculus yields
\begin{align*}
&(\xi_k-\xi_0) \int_0^b f^{(k)}\!\left( (b-\lambda_k)\xi_0+ \sum_{j=1}^{k-1}\lambda_j\xi_j+\lambda_k\xi_k\right)d\lambda_k\\*
&\quad=\int_0^b\frac{\partial}{\partial\lambda_k} f^{(k-1)}\!\left( (b-\lambda_k)\xi_0+ \sum_{j=1}^{k-1}\lambda_j\xi_j+\lambda_k\xi_k\right)d\lambda_k\\
&\quad=f^{(k-1)}\!\left( \sum_{j=1}^{k-1}\lambda_j\xi_j+b\xi_k\right) -f^{(k-1)}\!\left( b\xi_0+\sum_{j=1}^{k-1}\lambda_j\xi_j\right),\\
(\xi_k-\xi_0)J_k(\xi_0,\ldots,\xi_k;f) &\quad=J_{k-1}(\xi_1,\ldots,\xi_k;f) -J_{k-1}(\xi_0,\ldots,\xi_{k-1};f).
\end{align*}
The simplex integral is unchanged by a permutation of its barycentric coordinates; those affine substitutions have absolute determinant one. Together with $J_0(\xi_0;f)=f(\xi_0)$, the recursion proves the Hermite--Genocchi formula
\begin{equation*}
f[\xi_0,\ldots,\xi_k]=J_k(\xi_0,\ldots,\xi_k;f).
\end{equation*}
In particular, $\operatorname{vol}(\Lambda_k)=1/k!$ implies
\begin{equation*}
\|f[\xi_0,\ldots,\xi_k]\| \leq\frac1{k!}\sup_{u\in[\min_j\xi_j,\max_j\xi_j]} \|f^{(k)}(u)\|.
\end{equation*}
Taking the vector norm after integration gives this bound without an additional factor depending on $d$. No sum of separate coordinate bounds is needed.

We can now differentiate the level-set identity. Using the leading coefficient of $\mathsf C_s$ converts the derivative into the divided difference just analyzed,
\begin{align}
u_j'(\theta)&=\frac1{(\mathsf C_s)'(u_j(\theta))},\notag\\*
(\mathsf C_s)'(u_j(\theta)) &=2^{s-1}\prod_{\ell\ne j} \{u_j(\theta)-u_\ell(\theta)\},\notag\\
\frac d{d\theta}\sum_{j=0}^{s-1}f(u_j(\theta)) &=\sum_{j=0}^{s-1} \frac{f'(u_j(\theta))}{(\mathsf C_s)'(u_j(\theta))} \notag\\
&=2^{1-s}f'[u_0(\theta),\ldots,u_{s-1}(\theta)]\notag\\*
&=2^{1-s}\int_{\Lambda_{s-1}} f^{(s)}\!\left(\sum_{j=0}^{s-1}\lambda_j u_j(\theta)\right)d\lambda.
\label{eq:proof-level-derivative}
\end{align}
Integrate \eqref{eq:proof-level-derivative} over the unit-length interval $[-1/2,1/2]$ to obtain the full-silo mean bound
\begin{align}
\|G_g(x_+)-G_g(x_-)\| &=\frac1{N_m} \left\|\sum_{j=0}^{s-1} \{f(u_j(1/2))-f(u_j(-1/2))\}\right\|\notag\\
&=\frac1{N_m2^{s-1}} \left\|\int_{-1/2}^{1/2}\int_{\Lambda_{s-1}} f^{(s)}\!\left(\sum_{j=0}^{s-1}\lambda_j u_j(\theta)\right) d\lambda\,d\theta\right\|\notag\\
&\leq\frac1{N_m2^{s-1}} \int_{-1/2}^{1/2}\int_{\Lambda_{s-1}} C_s a^s\,d\lambda\,d\theta\notag\\*
&=\frac{C_s a^s}{N_m2^{s-1}(s-1)!}.
\label{eq:proof-collision-response}
\end{align}
The first equality cancels the known records. Since the bound applies to every $g\in\mathcal G$ for the same pair $x_+,x_-$, it establishes the uniform response inequality in \eqref{eq:collision-pair}. To obtain the minimax lower bound, fix an arbitrary adaptive policy $\mathsf Q$ and decoder $\widehat x$. Respond to each query $g_t$ with
\begin{equation*}
Y_t=\tfrac12\{G_{g_t}(x_+)+G_{g_t}(x_-)\}, \qquad e_t^\pm=Y_t-G_{g_t}(x_\pm).
\end{equation*}
The candidates have the same public information and therefore produce the same first query. If their histories agree through time $t-1$, the deterministic policy chooses the same $g_t$, and the displayed response preserves that agreement. Induction gives identical complete transcripts, without any continuity or differentiability requirement on the query-selection rule. Along either execution, \eqref{eq:proof-collision-response} bounds the required perturbations by
\begin{align*}
\sum_{t=1}^T\|e_t^\pm\|^2 &=\frac14\sum_{t=1}^T \|G_{g_t}(x_+)-G_{g_t}(x_-)\|^2\\*
&\leq\frac{T C_s^2a^{2s}} {N_m^2 2^{2s}\{(s-1)!\}^2},\\
\|e^\pm\| &\leq\frac{C_s\sqrt T\,a^s}{N_m2^s(s-1)!}.
\end{align*}
When this norm is at most $\eta$, both executions are admissible and the decoder returns the same dataset, denoted by $\widehat x_*$. The bottleneck matching distance is a metric on multisets of fixed cardinality. In particular, its triangle inequality follows by composing matchings and applying the triangle inequality for the observation norm. Together with \eqref{eq:proof-collision-distance}, it gives
\begin{align*}
\sup_{\substack{x\in\mathcal X_s\\\|e\|\leq\eta}} \operatorname{dist}(\widehat x(\mathsf T_{\mathsf Q}(x,e)),x) &\geq\max\{\operatorname{dist}(\widehat x_*,x_+), \operatorname{dist}(\widehat x_*,x_-)\}\\
&\geq\frac12 \{\operatorname{dist}(\widehat x_*,x_+) +\operatorname{dist}(\widehat x_*,x_-)\}\\*
&\geq\frac12\operatorname{dist}(x_+,x_-) \geq\frac{a}{6s}.
\end{align*}
For $\eta>0$, take
\begin{equation*}
a=\min\left\{\rho, \left(\frac{2^sN_m(s-1)!\eta}{C_s\sqrt T}\right)^{1/s}\right\}.
\end{equation*}
This choice satisfies the perturbation budget. Taking the infimum over $\mathsf Q$ and $\widehat x$ proves \eqref{eq:reconstruction-risk}. At $\eta=0$, the stated bound is simply nonnegativity of the minimax error. The proof therefore supplies no positive lower bound for exact real-valued responses without further structure. The midpoint transcript is admissible under bounded uncertainty, but need not be produced by a particular nearest-grid quantizer.

A related cancellation argument bounds local conditioning for a fixed response block. Fix queries $g_1,\ldots,g_T$ and let $A\in\mathbb R^{dT\times s}$ be the response Jacobian with respect to the $s$ physical record coordinates along $v$. Now take arbitrary distinct $u_1,\ldots,u_s\in[-1,1]$, without requiring them to be the Chebyshev roots above. At physical coordinates $a u_j$, we will show
\begin{equation}\label{eq:collision-jacobian}
\inf_{\|w\|=1}\|Aw\| \leq\frac{C_s\sqrt{sT}}{N_m(s-1)!}a^{s-1}.
\end{equation}
Choose a unit vector $w\in\mathbb R^s$ in the nullspace of the $(s-1)\times s$ matrix with entries $u_j^\ell$, $0\leq\ell\leq s-2$. Such a vector exists because the matrix has fewer rows than columns. Set $\gamma_t(z)=g_t(o_\circ+zv)$ and apply Taylor's formula,
\begin{align*}
\gamma_t'(a u_j) &=\sum_{\ell=0}^{s-2} \frac{(a u_j)^\ell}{\ell!}\gamma_t^{(\ell+1)}(0) +\frac{(a u_j)^{s-1}}{(s-2)!} \int_0^1(1-b)^{s-2}\gamma_t^{(s)}(b a u_j)\,db,\\
\sum_{j=1}^s w_j\gamma_t'(a u_j) &=\sum_{\ell=0}^{s-2}\frac{a^\ell}{\ell!} \gamma_t^{(\ell+1)}(0)\sum_{j=1}^sw_ju_j^\ell\\*
&\quad+\frac{a^{s-1}}{(s-2)!}\sum_{j=1}^s w_ju_j^{s-1} \int_0^1(1-b)^{s-2}\gamma_t^{(s)}(b a u_j)\,db\\
&=\frac{a^{s-1}}{(s-2)!}\sum_{j=1}^s w_ju_j^{s-1} \int_0^1(1-b)^{s-2}\gamma_t^{(s)}(b a u_j)\,db.
\end{align*}
The derivative of the local mean with respect to the $j$th physical record coordinate is $N_m^{-1}\gamma_t'(a u_j)$. Differentiation with respect to the scaled coordinate $u_j$ would introduce a factor $a$ and define a different Jacobian. With the stated physical-coordinate convention,
\begin{align*}
\|Aw\|^2 &=\frac1{N_m^2}\sum_{t=1}^T \left\|\sum_{j=1}^s w_j\gamma_t'(a u_j)\right\|^2\\
&\leq\frac1{N_m^2}\sum_{t=1}^T \left\{\frac{C_s a^{s-1}}{(s-2)!} \sum_{j=1}^s|w_j||u_j|^{s-1} \int_0^1(1-b)^{s-2}\,db\right\}^2\\
&=\frac{TC_s^2a^{2s-2}}{N_m^2\{(s-1)!\}^2} \left(\sum_{j=1}^s|w_j||u_j|^{s-1}\right)^2\\
&\leq\frac{TC_s^2a^{2s-2}}{N_m^2\{(s-1)!\}^2} \left(\sum_{j=1}^sw_j^2\right) \left(\sum_{j=1}^s|u_j|^{2s-2}\right)
\leq\frac{sTC_s^2a^{2s-2}}{N_m^2\{(s-1)!\}^2}.
\end{align*}
Taking square roots and then the infimum over unit directions proves \eqref{eq:collision-jacobian} for the fixed response block. The minimax lower bound for adaptive reconstruction follows from the common-transcript argument above and does not rely on differentiating the selected queries.
\end{proof}

\subsection{Proof of Theorem~\ref{thm:disclosure}}
\label{app:disclosure}
\begin{proof}
Retain the policy and transcript conventions from Section~\ref{app:reconstruction}. On the candidate ball, the attribute minimax risk is
\begin{equation*}
\mathcal R_T^{\varphi}(\eta) =\inf_{\mathsf Q,\widehat\varphi} \sup_{\substack{\|x-x_\circ\|\leq\rho\\\|e\|\leq\eta}} \bigl|\widehat\varphi(\mathsf T_{\mathsf Q}(x,e))-\varphi(x)\bigr|.
\end{equation*}
To simplify the derivations, use the proof-local abbreviations
\begin{equation*}
c=\nabla\varphi(x_\circ),\qquad b_T=\frac{H_g\rho^2\sqrt T}{2},\qquad b_\varphi=\frac{H_\varphi\rho^2}{2}.
\end{equation*}
As in \eqref{eq:disclosure-design}, every infimum over $A$ below ranges only over stacked Jacobians of accepted $T$-query blocks.

Fix a matrix $A$ associated with an accepted query block and take $\eta\geq0$. The feasible set $\{u:\|u\|\leq\rho,\|Au\|\leq\eta\}$ is nonempty, compact, and symmetric. Its support in the attribute direction is therefore
\begin{equation}\label{eq:proof-recovery-modulus}
\omega_A(\eta)= \max_{\substack{\|u\|\leq\rho\\\|Au\|\leq\eta}}c^\top u =\max_{\substack{\|u\|\leq\rho\\\|Au\|\leq\eta}}|c^\top u|.
\end{equation}
For every feasible $u$ and every $w\in\mathbb R^{dT}$,
\begin{align*}
c^\top u &=(c-A^\top w)^\top u+w^\top Au\\*
&\leq\|c-A^\top w\|\,\|u\|+\|w\|\,\|Au\|\\*
&\leq\rho\|c-A^\top w\|+\eta\|w\|.
\end{align*}
To show that minimizing over $w$ gives equality, first take $\eta>0$. Introduce $z=Au$ and a multiplier $w$ for this equality,
\begin{align}
\omega_A(\eta) &=\sup_{\substack{\|u\|\leq\rho,\,\|z\|\leq\eta\\Au-z=0}} c^\top u\notag\\
&=\inf_{w\in\mathbb R^{dT}} \sup_{\|u\|\leq\rho,\,\|z\|\leq\eta} \{c^\top u-w^\top(Au-z)\}\notag\\
&=\inf_w\left\{ \sup_{\|u\|\leq\rho}(c-A^\top w)^\top u +\sup_{\|z\|\leq\eta}w^\top z\right\}\notag\\*
&=\inf_w\{\rho\|c-A^\top w\|+\eta\|w\|\}.
\label{eq:proof-recovery-dual}
\end{align}
The point $(u,z)=(\mathbf0,\mathbf0)$ is strictly feasible for the two norm constraints and satisfies the affine equality $Au=z$, so the primal and dual values agree \citep{rockafellar1970convex}. Convex duality is used only for this auxiliary problem with norm balls and a linear observation map; it imposes no convexity requirement on the TMLE loss or fluctuation objective.

Write $\Pi_{\ker A}$ for the orthogonal projection onto $\ker A$, and use a dagger for the Moore--Penrose pseudoinverse. For $\eta=0$, the feasible directions lie in the kernel. Orthogonal decomposition handles this case directly, without a constraint qualification for the zero-radius ball,
\begin{align*}
\omega_A(0) &=\max_{\substack{u\in\ker A\\\|u\|\leq\rho}} \langle\Pi_{\ker A}c,u\rangle =\rho\|\Pi_{\ker A}c\|,\\
\inf_w\rho\|c-A^\top w\| &=\rho\inf_{v\in\operatorname{range}(A^\top)}\|c-v\|\\*
&=\rho\|\Pi_{\ker A}c\|.
\end{align*}
The range is closed in finite dimensions. Hence \eqref{eq:proof-recovery-dual} holds for every $\eta\geq0$, including rank-deficient designs, and
\begin{equation}\label{eq:proof-design-infimum}
\omega_T(\eta)=\inf_A\omega_A(\eta).
\end{equation}

For the spectral representation, first take $\eta>0$ and square the norm constraints in the support problem. Introducing nonnegative multipliers $\mu,\nu$ gives
\begin{align}
\omega_A(\eta) &=\inf_{\mu,\nu\geq0} \left\{\mu\rho^2+\nu\eta^2+ \sup_{u\in\mathbb R^q} \bigl[c^\top u-u^\top(\mu I+\nu A^\top A)u\bigr]\right\}.
\label{eq:proof-quadratic-dual}
\end{align}
Strict feasibility again holds at $u=\mathbf0$. If $Q=\mu I+\nu A^\top A$ is positive definite, complete the square to obtain
\begin{align*}
c^\top u-u^\top Q u &=\frac14c^\top Q^{-1}c -\left\|Q^{1/2}u-\frac12Q^{-1/2}c\right\|^2,\\*
\sup_u\{c^\top u-u^\top Q u\} &=\frac14c^\top Q^{-1}c.
\end{align*}
If $Q$ is singular, a nonzero component of $c$ in $\ker Q$ makes the supremum infinite; otherwise its value is $c^\top Q^\dagger c/4$. Both cases are obtained as limits by replacing $Q$ with $Q+\gamma I$ and letting $\gamma\downarrow0$. Indeed, in an eigenbasis of $Q$ with eigenvalues $q_j\geq0$ and coordinates $\widetilde c_j$,
\begin{equation*}
c^\top(Q+\gamma I)^{-1}c =\sum_{q_j>0}\frac{\widetilde c_j^2}{q_j+\gamma} +\gamma^{-1}\sum_{q_j=0}\widetilde c_j^2 \longrightarrow
\begin{cases}
c^\top Q^\dagger c,&c\in\operatorname{range}(Q),\\
+\infty,&c\notin\operatorname{range}(Q).
\end{cases}
\end{equation*}
This perturbation adds $\gamma$ to $\mu$ and changes the remaining dual cost by $\gamma\rho^2$. We may therefore restrict the infimum in \eqref{eq:proof-quadratic-dual} to $\mu>0$ and recover the boundary values by taking limits. Set $\nu=\mu\lambda$, $\lambda\geq0$. For fixed $\lambda$,
\begin{align*}
\inf_{\mu>0} \left\{\mu(\rho^2+\lambda\eta^2) +\frac1{4\mu}c^\top(I+\lambda A^\top A)^{-1}c\right\} &=\left[(\rho^2+\lambda\eta^2) c^\top(I+\lambda A^\top A)^{-1}c\right]^{1/2}.
\end{align*}
For $c\ne\mathbf0$ the minimizing multiplier is
\begin{equation*}
\mu_*=\frac12\left\{ \frac{c^\top(I+\lambda A^\top A)^{-1}c} {\rho^2+\lambda\eta^2}\right\}^{1/2};
\end{equation*}
for $c=\mathbf0$ the infimum is zero as $\mu\downarrow0$. Thus
\begin{equation}\label{eq:proof-modulus-spectrum}
\omega_A(\eta)=\inf_{\lambda\geq0} \left[(\rho^2+\lambda\eta^2) c^\top(I+\lambda A^\top A)^{-1}c\right]^{1/2}, \qquad \eta>0.
\end{equation}
For each fixed $A$, choose an orthonormal eigenbasis $A^\top A v_j=\lambda_jv_j$ and set $c_j=v_j^\top c$. The indexed $\lambda_j$ are eigenvalues; the unindexed $\lambda$ remains the optimization variable in \eqref{eq:proof-modulus-spectrum}. At zero uncertainty,
\begin{align*}
\inf_{\lambda\geq0}\rho^2 c^\top(I+\lambda A^\top A)^{-1}c &=\rho^2\lim_{\lambda\to\infty} \sum_{j=1}^q\frac{c_j^2}{1+\lambda\lambda_j}\\*
&=\rho^2\sum_{\lambda_j=0}c_j^2 =\omega_A(0)^2.
\end{align*}
Thus \eqref{eq:proof-modulus-spectrum} also holds at $\eta=0$. Together with \eqref{eq:proof-design-infimum}, it proves \eqref{eq:disclosure-spectrum} for singular matrices and for $c=\mathbf0$, without requiring the infimum over accepted query blocks to be attained.

For any fixed accepted query block, define $F_T(x)=(G_{g_1}(x)^\top,\ldots, G_{g_T}(x)^\top)^\top$. Whenever $\|u\|\leq\rho$, the segment $x_\circ+t u$, $0\leq t\leq1$, remains in the candidate ball. We may therefore apply the fundamental theorem of calculus and \eqref{eq:attribute-smoothness} along that segment,
\begin{align*}
G_g(x_\circ+u)-G_g(x_\circ)-\nabla_xG_g(x_\circ)u &=\int_0^1\{\nabla_xG_g(x_\circ+t u) -\nabla_xG_g(x_\circ)\}u\,dt,\\
\|G_g(x_\circ+u)-G_g(x_\circ)-\nabla_xG_g(x_\circ)u\| &\leq\int_0^1 H_g t\|u\|^2\,dt =\frac{H_g}{2}\|u\|^2,\\
\varphi(x_\circ+u)-\varphi(x_\circ)-c^\top u &=\int_0^1 \langle\nabla\varphi(x_\circ+t u)-\nabla\varphi(x_\circ),u\rangle dt,\\
|\varphi(x_\circ+u)-\varphi(x_\circ)-c^\top u| &\leq\int_0^1 H_\varphi t\|u\|^2\,dt =\frac{H_\varphi}2\|u\|^2.
\end{align*}
Denote the stacked response remainder by $R_F(u)\in\mathbb R^{dT}$ and the attribute remainder by $R_\varphi(u)\in\mathbb R$. Both are local to this proof and distinct from the statistical remainder $R_2$. The preceding estimates give
\begin{align}
F_T(x_\circ+u)&=F_T(x_\circ)+Au+R_F(u),\notag\\*
\|R_F(u)\|^2 &=\sum_{t=1}^T \|G_{g_t}(x_\circ+u)-G_{g_t}(x_\circ) -\nabla_xG_{g_t}(x_\circ)u\|^2\notag\\*
&\leq\frac{TH_g^2}{4}\|u\|^4\leq b_T^2,\notag\\
\varphi(x_\circ+u)&=\varphi(x_\circ)+c^\top u+R_\varphi(u),\notag\\*
|R_\varphi(u)|&\leq\frac{H_\varphi}2\|u\|^2\leq b_\varphi.
\label{eq:proof-query-remainders}
\end{align}
Because the smoothness bound is uniform over $\mathcal G$, these remainder estimates hold for every block selected by an adaptive policy. 
For the upper risk bound, fix an accepted query block and weights $w\in\mathbb R^{dT}$. Define the affine decoder
\begin{equation*}
\widehat\varphi(Y) =\varphi(x_\circ)+w^\top\{Y-F_T(x_\circ)\}.
\end{equation*}
For $x=x_\circ+u$ and every $\|e\|\leq\eta$, \eqref{eq:proof-query-remainders} gives
\begin{align}
\widehat\varphi(Y)-\varphi(x) &=(A^\top w-c)^\top u+w^\top R_F(u)+w^\top e-R_\varphi(u),\notag\\
|\widehat\varphi(Y)-\varphi(x)| &\leq\|A^\top w-c\|\,\|u\| +\|w\|\,\|R_F(u)\|+\|w\|\,\|e\| +|R_\varphi(u)|\notag\\*
&\leq\rho\|c-A^\top w\|+(\eta+b_T)\|w\|+b_\varphi.
\label{eq:proof-affine-error}
\end{align}
A fixed block is an admissible policy in the original minimax problem. For any positive tolerance, choose a block and weights within that tolerance of the infimum defining $\omega_T(\eta+b_T)$. Letting the tolerance tend to zero proves
\begin{equation*}
\mathcal R_T^{\varphi}(\eta) \leq \omega_T(\eta+b_T)+b_\varphi
\end{equation*}
without requiring an optimal design or weights to exist. Since the argument imposes no ordering between $\eta$ and $b_T$, the upper bound holds in both regimes.

For the lower risk bound, fix any adaptive policy $\mathsf Q$ and its decoder. Answer each requested query with $Y_t=G_{g_t}(x_\circ)$ to construct a reference history. This history determines a particular accepted block $g_1,\ldots,g_T$ and its matrix $A$. Suppose $\eta\geq b_T$. The feasible set in \eqref{eq:proof-recovery-modulus} is compact, so there is a vector $u$ satisfying
\begin{equation*}
\|u\|\leq\rho,\qquad \|Au\|\leq\eta-b_T,\qquad c^\top u=\omega_A(\eta-b_T)\geq0.
\end{equation*}
Both candidate datasets $x_\pm=x_\circ\pm u$ lie in the compatible ball. Along the reference block, assign them the perturbations
\begin{align*}
e_t^\pm&=G_{g_t}(x_\circ)-G_{g_t}(x_\pm),\\*
e^\pm&=\mp Au-R_F(\pm u),\\
\|e^\pm\|&\leq\|Au\|+\|R_F(\pm u)\| \leq\eta-b_T+b_T=\eta.
\end{align*}
We must verify that these perturbations produce the reference block under each candidate, since the policy is adaptive. The public information is identical initially. If previous responses agree with the reference history, the policy chooses its reference query $g_t$; the identity $G_{g_t}(x_\pm)+e_t^\pm=G_{g_t}(x_\circ)$ then gives the reference response. Induction establishes
\begin{equation*}
\mathsf T_{\mathsf Q}(x_+,e^+)=\mathsf T_{\mathsf Q}(x_-,e^-) =\mathsf T_{\mathsf Q}(x_\circ,\mathbf0).
\end{equation*}
Thus the candidates also produce identical state changes and stopping decisions. The adaptive query labels supply no additional information for distinguishing them.

Denote the common decoder output by $a_*$. Then
\begin{align*}
\sup_{\substack{\|x-x_\circ\|\leq\rho\\\|e\|\leq\eta}} |\widehat\varphi(\mathsf T_{\mathsf Q}(x,e))-\varphi(x)| &\geq\max\{|a_*-\varphi(x_+)|,|a_*-\varphi(x_-)|\}\\
&\geq\frac12|\varphi(x_+)-\varphi(x_-)|\\
&=\frac12 |2c^\top u+R_\varphi(u)-R_\varphi(-u)|\\
&\geq c^\top u-\frac12|R_\varphi(u)| -\frac12|R_\varphi(-u)|\\*
&\geq\omega_A(\eta-b_T)-b_\varphi \geq \omega_T(\eta-b_T)-b_\varphi.
\end{align*}
Combining this inequality with nonnegativity of the risk and taking the infimum over policies and decoders proves the lower bound in \eqref{eq:disclosure-sandwich}. The reference-history block need not minimize $\omega_T$. Taking the infimum over all accepted blocks can only decrease its modulus, so the inequality has the required direction.

To compare the two bounds on a relative scale, use the nonnegativity of each term in the design objective. For fixed $A,w$ and $0\leq b_T\leq\eta$, $\eta>0$,
\begin{align*}
&\rho\|c-A^\top w\|+(\eta-b_T)\|w\| =\left(1-\frac{b_T}{\eta}\right) \{\rho\|c-A^\top w\|+\eta\|w\|\} +\frac{b_T}{\eta}\rho\|c-A^\top w\|\\
&\quad\geq\left(1-\frac{b_T}{\eta}\right) \{\rho\|c-A^\top w\|+\eta\|w\|\},\\
&\rho\|c-A^\top w\|+(\eta+b_T)\|w\| =\left(1+\frac{b_T}{\eta}\right) \{\rho\|c-A^\top w\|+\eta\|w\|\} -\frac{b_T}{\eta}\rho\|c-A^\top w\|\\*
&\quad\leq\left(1+\frac{b_T}{\eta}\right) \{\rho\|c-A^\top w\|+\eta\|w\|\}.
\end{align*}
Taking infima yields
\begin{equation*}
\omega_T(\eta-b_T)\geq(1-b_T/\eta)\omega_T(\eta), \qquad \omega_T(\eta+b_T)\leq(1+b_T/\eta)\omega_T(\eta).
\end{equation*}
Combining these inequalities with \eqref{eq:disclosure-sandwich}, and dividing when $\omega_T(\eta)>0$, gives
\begin{equation*}
1-\frac{b_T}{\eta}-\frac{b_\varphi}{\omega_T(\eta)} \leq\frac{\mathcal R_T^{\varphi}(\eta)}{\omega_T(\eta)} \leq1+\frac{b_T}{\eta}+\frac{b_\varphi}{\omega_T(\eta)}.
\end{equation*}
This is the relative-error bound underlying the leading-term interpretation in Section~\ref{sec:robustness}. It does not apply when $\omega_T(\eta)=0$. For example, even if $c=\mathbf0$, a second-order attribute error may remain and is bounded by $b_\varphi$.

A smaller candidate ball gives a lower bound even when $\eta\geq b_T$ fails at the original radius. Restricting the supremum in the minimax risk to a concentric ball of radius $\rho'\leq\rho$ cannot increase the risk. This leaves the derivative $c$ and the design family at $x_\circ$ unchanged. Apply the preceding argument at each radius for which the curvature term is within the uncertainty budget to obtain
\begin{adjustwidth}{-5pt}{-5pt}
\begin{align*}
&\mathcal R_T^{\varphi}(\eta) \geq \sup_{\substack{0\leq\rho'\leq\rho\\H_g\rho'^2\sqrt T/2\leq\eta}} \left[\inf_A\inf_{w\in\mathbb R^{dT}} \left\{\rho'\|c-A^\top w\| +\left(\eta-\frac{H_g\rho'^2\sqrt T}{2}\right)\|w\|\right\} -\frac{H_\varphi\rho'^2}{2}\right]_+.
\end{align*}
\end{adjustwidth}
At $\rho'=0$, take the value to be zero. This includes the case in which no positive radius satisfies the constraint.
The mean-gradient structure allows $H_g$ to be bounded using observation-level derivatives. In ordinary concatenated record coordinates, write a unit perturbation as $u=(u_i)_{i\in\mathcal I_m}$. If $\nabla_o g$ is $C_{\mathrm{obs}}$-Lipschitz in operator norm, the block derivative formula and Cauchy--Schwarz yield
\begin{align*}
\nabla_xG_g(x) &=\frac1{N_m}\bigl[\nabla_o g(o_i)\bigr]_{i\in\mathcal I_m},\\
\|\{\nabla_xG_g(x)-\nabla_xG_g(x')\}u\| &=\frac1{N_m}\left\| \sum_{i\in\mathcal I_m} \{\nabla_o g(o_i)-\nabla_o g(o_i')\}u_i\right\|\\
&\leq\frac{C_{\mathrm{obs}}}{N_m} \sum_{i\in\mathcal I_m}\|o_i-o_i'\|\,\|u_i\|\\
&\leq\frac{C_{\mathrm{obs}}}{N_m} \left(\sum_{i\in\mathcal I_m}\|o_i-o_i'\|^2\right)^{1/2} \left(\sum_{i\in\mathcal I_m}\|u_i\|^2\right)^{1/2}\\*
&=\frac{C_{\mathrm{obs}}}{N_m}\|x-x'\|\,\|u\|.
\end{align*}
Taking the supremum over unit $u$ gives the admissible constant $H_g=C_{\mathrm{obs}}/N_m$ in these coordinates. Omitting fixed known records from the block leaves the bound unchanged. A nonlinear coordinate chart may introduce additional derivative terms, which must be controlled through \eqref{eq:attribute-smoothness}. The factor $N_m^{-1}$ comes from averaging. Converting to raw sums rescales both the signal and its uncertainty, so this normalization does not itself provide privacy.

Equation~\eqref{eq:proof-modulus-spectrum} reads
\begin{equation*}
\omega_A(\eta)^2 =\inf_{\lambda\geq0} (\rho^2+\lambda\eta^2) \sum_{j=1}^q\frac{c_j^2}{1+\lambda\lambda_j}.
\end{equation*}
When $\Pi_{\ker A}c\ne\mathbf0$, substitute the feasible vector $u=\rho\Pi_{\ker A}c/\|\Pi_{\ker A}c\|$ into \eqref{eq:proof-recovery-modulus}. When the projection vanishes, the same lower bound is zero. Thus in either case,
\begin{equation*}
\omega_A(\eta)\geq\rho\|\Pi_{\ker A}c\|, \qquad \eta\geq0.
\end{equation*}
If instead $c\in\operatorname{range}(A^\top)$, put
\begin{equation*}
w=(A^\dagger)^\top c =\sum_{\lambda_j>0}\frac{c_j}{\lambda_j}Av_j.
\end{equation*}
The orthogonality relations $(Av_i)^\top Av_j=v_i^\top A^\top A v_j =\lambda_j\mathbf1\{i=j\}$ then give
\begin{align*}
A^\top w &=\sum_{\lambda_j>0}\frac{c_j}{\lambda_j}A^\top A v_j =\sum_{\lambda_j>0}c_jv_j=c,\\
\|w\|^2 &=\sum_{\lambda_i>0}\sum_{\lambda_j>0} \frac{c_ic_j}{\lambda_i\lambda_j}(Av_i)^\top Av_j\\*
&=\sum_{\lambda_j>0}\frac{c_j^2}{\lambda_j}.
\end{align*}
Substitution into \eqref{eq:proof-affine-error} gives
\begin{equation*}
\sup_{\substack{\|x-x_\circ\|\leq\rho\\\|e\|\leq\eta}} |\widehat\varphi-\varphi(x)| \leq(\eta+b_T) \left(\sum_{\lambda_j>0}\frac{c_j^2}{\lambda_j}\right)^{1/2} +b_\varphi.
\end{equation*}
If $c_j$ is zero, the corresponding small eigenvalue contributes nothing to this bound. Reconstruction of all coordinates, by comparison, must control every direction. We impose no such alignment on the specified sensitive attribute. For the threshold claim, suppose the displayed error is at most $E$. Then $\varphi(x)>a+E$ implies $\widehat\varphi>a$, while $\varphi(x)<a-E$ implies $\widehat\varphi<a$. This establishes the threshold interpretation in Section~\ref{sec:robustness} using an error bound rather than exact attribute recovery.
\end{proof}

\subsection{Proof of Lemma~\ref{lem:local-derivatives}}
\label{app:local-derivatives}
\begin{proof}
Condition only on the partition, which is independent of the targeting observations. The function classes and their covers are deterministic under this conditioning. If $M=1$, there is no pair to compare and the statement is vacuous. Suppose $M\geq2$, and choose an internal cover $\{f_1,\ldots,f_{\mathcal N_n}\}$ of the normalized scalar derivative functions in Assumption~\ref{ass:local-complexity}, with radius $1/n$. Every covering function has absolute value at most one on the common full-measure set in Assumption~\ref{ass:local-regularity}. For this proof only, put $s_{mj}=N_m^{-1}+N_j^{-1}$; the concentration factor $\zeta_n(\xi)$ is as defined in Section~\ref{sec:personalized}.

Fix $f$ in the cover and distinct clients $m,j$. Under the common-law assumption, the two empirical means have the same expectation. Subtracting it gives
\begin{align*}
\frac1{N_m}\sum_{i\in\mathcal I_m}f(O_i) -\frac1{N_j}\sum_{i\in\mathcal I_j}f(O_i) &= \frac1{N_m}\sum_{i\in\mathcal I_m}\{f(O_i)-P_0f\} -\frac1{N_j}\sum_{i\in\mathcal I_j}\{f(O_i)-P_0f\},\\*
&\frac1{N_m}\sum_{i\in\mathcal I_m}P_0f -\frac1{N_j}\sum_{i\in\mathcal I_j}P_0f =P_0f-P_0f=0.
\end{align*}
The centered summands are independent under the conditioning on the partition. A term from client $m$ has range length at most $2/N_m$, and a term from client $j$ has range length at most $2/N_j$. The bounded-variable exponential-moment estimate of \citet{hoeffding1963probability} therefore gives, for every $u\in\mathbb R$,
\begin{align*}
&\mathbb E\exp\!\left[u\left\{ \frac1{N_m}\sum_{i\in\mathcal I_m}f(O_i) -\frac1{N_j}\sum_{i\in\mathcal I_j}f(O_i)\right\}\right]\\*
&\quad= \prod_{i\in\mathcal I_m} \mathbb E\exp\!\left[\frac{u}{N_m}\{f(O_i)-P_0f\}\right] \prod_{i\in\mathcal I_j} \mathbb E\exp\!\left[-\frac{u}{N_j}\{f(O_i)-P_0f\}\right]\\
&\quad\leq \exp\!\left\{\frac{u^2}{8} \left[N_m\left(\frac2{N_m}\right)^2 +N_j\left(\frac2{N_j}\right)^2\right]\right\}\\*
&\quad=\exp\!\left(\frac{u^2s_{mj}}2\right).
\end{align*}
Markov's inequality with positive $u$, minimized at $u=t/s_{mj}$, bounds the upper tail. Applying the same argument to the negative of the sum gives
\begin{align*}
&\mathbb P\!\left( \left|\frac1{N_m}\sum_{i\in\mathcal I_m}f(O_i) -\frac1{N_j}\sum_{i\in\mathcal I_j}f(O_i)\right|>t\right)\\*
&\quad\leq 2\inf_{u>0}\exp\!\left(-ut+\frac{u^2s_{mj}}2\right) =2\exp\!\left(-\frac{t^2}{2s_{mj}}\right), \qquad t>0.
\end{align*}
There are at most $M(M-1)\mathcal N_n$ comparisons, one for each ordered client pair and cover member. A union bound shows that the probability of any comparison exceeding $\zeta_n(\xi)\sqrt{s_{mj}}/2$ is at most
\begin{align*}
\sum_{m\ne j}\sum_{b=1}^{\mathcal N_n}2e^{-\zeta_n(\xi)^2/8} &=2M(M-1)\mathcal N_n \frac{\xi}{4M^2\mathcal N_n}\\*
&=\frac{M-1}{2M}\xi \leq\xi.
\end{align*}
Work on the finite-cover event just constructed, intersected with the probability-one event that all observations belong to the common full-measure set. For any normalized scalar derivative function $f$ in Assumption~\ref{ass:local-complexity}, choose a cover member $f_b$ with $\|f-f_b\|_\infty\leq1/n$. Then
\begin{align*}
\left|\frac1{N_m}\sum_{i\in\mathcal I_m}f(O_i) -\frac1{N_j}\sum_{i\in\mathcal I_j}f(O_i)\right| &\leq \left|\frac1{N_m}\sum_{i\in\mathcal I_m}f_b(O_i) -\frac1{N_j}\sum_{i\in\mathcal I_j}f_b(O_i)\right|\\
&\quad+ \frac1{N_m}\sum_{i\in\mathcal I_m}|f-f_b|(O_i) +\frac1{N_j}\sum_{i\in\mathcal I_j}|f-f_b|(O_i)\\*
&\leq\frac{\zeta_n(\xi)}2\sqrt{s_{mj}}+\frac2n.
\end{align*}
The covering error is small enough to be included in the stated concentration factor, since
\begin{align*}
s_{mj} &=\frac{N_m+N_j}{N_mN_j} \geq\frac4{N_m+N_j}\geq\frac4n,\\*
\frac2n &\leq\frac{\zeta_n(\xi)}2\sqrt{s_{mj}},\\*
\frac{\zeta_n(\xi)}2\sqrt{s_{mj}}+\frac2n &\leq\zeta_n(\xi)\sqrt{s_{mj}},
\end{align*}
where $n\geq2$ and $\zeta_n(\xi)^2\geq8\log4$ suffice for the middle inequality. The resulting event controls the entire class at once, so it also controls the derivative functions selected along the realized optimization trajectory. For any common base state $p$ and parameter $\epsilon$ covered by Assumption~\ref{ass:local-complexity}, apply this scalar bound to the normalized directional gradients. Differentiating the finite empirical sums and using the dual characterization of the Euclidean norm gives
\begin{align*}
&\left\|\nabla_\epsilon\ell_m(p(\epsilon)) -\nabla_\epsilon\ell_j(p(\epsilon))\right\|\\*
&\quad= G_1\sup_{\|v\|=1} \left|\frac1{N_m}\sum_{i\in\mathcal I_m} \frac{v^\top\nabla_\epsilon L(p(\epsilon))(O_i)}{G_1} -\frac1{N_j}\sum_{i\in\mathcal I_j} \frac{v^\top\nabla_\epsilon L(p(\epsilon))(O_i)}{G_1}\right|\\*
&\quad\leq G_1\zeta_n(\xi)\sqrt{s_{mj}}.
\end{align*}
Twice continuous differentiability makes the Hessian difference symmetric. Its operator norm is the supremum of the absolute quadratic form, and the scalar bound for the normalized Hessian functions therefore gives
\begin{align*}
&\left\|\nabla_\epsilon^2\ell_m(p(\epsilon)) -\nabla_\epsilon^2\ell_j(p(\epsilon))\right\|_{\mathrm{op}}\\*
&\quad= \beta\sup_{\|v\|=1} \left|\frac1{N_m}\sum_{i\in\mathcal I_m} \frac{v^\top\nabla_\epsilon^2 L(p(\epsilon))(O_i)v}{\beta} -\frac1{N_j}\sum_{i\in\mathcal I_j} \frac{v^\top\nabla_\epsilon^2 L(p(\epsilon))(O_i)v}{\beta}\right|\\*
&\quad\leq\beta\zeta_n(\xi)\sqrt{s_{mj}}.
\end{align*}
These inequalities establish \eqref{eq:local-derivative-comparison} on the same event, since the cover includes both normalized gradient and Hessian functions. 
The cover was fixed before fitting $p^0$ or choosing the subsequent states and active sets from the sample. We may therefore evaluate the bound along the entire trajectory of either weighting mode, even when initialization and all later fits use the targeting sample.

We finish by verifying the compact-family example following Assumption~\ref{ass:local-complexity}. For this example only, suppose that rescaling a bounded parameter set indexes each normalized derivative family by a subset of $[-1,1]^q$, with a $C$-Lipschitz map from the Euclidean parameter norm to the uniform function norm and $q,C$ fixed. The parameterization includes the base state, the fluctuation coordinates, and the unit direction; it is not restricted to the $d$ fluctuation coordinates. Partition the cube into cells of side length at most $1/(Cn\sqrt q)$ and choose one valid representative from each occupied cell. For any parameter and the representative of its cell,
\begin{align*}
\|\theta-\theta'\| &\leq\sqrt q\,\frac1{Cn\sqrt q}=\frac1{Cn},\\*
\|f_\theta-f_{\theta'}\|_\infty &\leq C\|\theta-\theta'\|\leq\frac1n,\\*
\mathcal N_n &\leq 2\bigl(2+2Cn\sqrt q\bigr)^q,\\*
\log\mathcal N_n &\leq\log2+q\log\bigl(2+2Cn\sqrt q\bigr)=O(\log n).
\end{align*}
The factor two accounts for the two derivative families. Choosing representatives from occupied cells ensures that they belong to the family, as required for an internal cover. Constant maps need only one representative per family.
\end{proof}

\subsection{Proof of Lemma~\ref{lem:local-fourth-order}}
\label{app:local-fourth-order}
\begin{proof}
Fix any iteration on the uniform event of Lemma~\ref{lem:local-derivatives}. If the active set is empty, all quantities in the claim are zero. Otherwise, suppress the fixed superscript $k$ on $\epsilon^k$ and $\hat\epsilon_m^k$, and write
\begin{equation*}
a=\sum_{m\in\mathcal A_k}\alpha_m,\qquad u_m=\epsilon-\hat\epsilon_m.
\end{equation*}
These abbreviations are confined to this proof, and every client sum is over $\mathcal A_k$. The sampling coefficient satisfies $\chi_k=\sum_m\alpha_m(1-\alpha_m/a)/N_m$ for either weighting rule. At each selected interior minimizer, the first- and second-order necessary conditions hold. Together with the definition of the weighted average, they give
\begin{equation*}
\begin{aligned}
\nabla \ell_{m,k}(\hat\epsilon_m)&=\mathbf0, &\nabla^2\ell_{m,k}(\hat\epsilon_m)&\succeq0,\\
\epsilon&=\frac1a\sum_m\alpha_m\hat\epsilon_m, &\sum_m\alpha_m u_m&=\mathbf0.
\end{aligned}
\end{equation*}
The positive semidefiniteness is used only at $\hat\epsilon_m$; the loss may have negative curvature elsewhere on the connecting segments. 
Two exact integral identities allow us to express the loss difference through endpoint derivatives and a fourth-order remainder. By Assumption~\ref{ass:local-regularity}, the third directional derivative along any admissible segment is Lipschitz and hence absolutely continuous. The scalar restriction therefore has an integrable fourth derivative almost everywhere. For such a scalar function $\phi$ on $[0,1]$, repeated integration by parts yields
\begin{align*}
\int_0^1t(1-t)^2\phi^{(4)}(t)\,dt &=\bigl[t(1-t)^2\phi'''(t)\bigr]_0^1 -\int_0^1(1-4t+3t^2)\phi'''(t)\,dt\\*
&=-\bigl[(1-4t+3t^2)\phi''(t)\bigr]_0^1 +\int_0^1(-4+6t)\phi''(t)\,dt\\
&=\phi''(0)+\bigl[(-4+6t)\phi'(t)\bigr]_0^1 -6\int_0^1\phi'(t)\,dt\\*
&=\phi''(0)+2\phi'(1)+4\phi'(0) -6\{\phi(1)-\phi(0)\}.
\end{align*}
Thus
\begin{equation}\label{eq:proof-retire-interpolation}
\begin{aligned}
\phi(1)-\phi(0) ={}&\frac13\phi'(1)+\frac23\phi'(0)+\frac16\phi''(0)\\
&-\frac16\int_0^1t(1-t)^2\phi^{(4)}(t)\,dt.
\end{aligned}
\end{equation}
Using the kernel $t(1-t)$ in the same integration-by-parts calculation gives
\begin{align*}
\int_0^1t(1-t)\phi^{(4)}(t)\,dt &=\bigl[t(1-t)\phi'''(t)\bigr]_0^1 -\int_0^1(1-2t)\phi'''(t)\,dt\\*
&=-\bigl[(1-2t)\phi''(t)\bigr]_0^1 -2\int_0^1\phi''(t)\,dt\\*
&=\phi''(1)+\phi''(0)-2\{\phi'(1)-\phi'(0)\}.
\end{align*}
Equivalently,
\begin{equation}\label{eq:proof-retire-trapezoid}
\phi'(1)-\phi'(0) =\frac{\phi''(0)+\phi''(1)}2 -\frac12\int_0^1t(1-t)\phi^{(4)}(t)\,dt.
\end{equation}
 Apply \eqref{eq:proof-retire-interpolation} to $\phi(t)=\ell_{m,k}(\hat\epsilon_m+tu_m)$. Its derivatives at the endpoints satisfy
\begin{align*}
\phi'(0)&=\nabla \ell_{m,k}(\hat\epsilon_m)^\top u_m=0,\\*
\phi'(1)&=\nabla \ell_{m,k}(\epsilon)^\top u_m,\qquad \phi''(0)=u_m^\top \nabla^2\ell_{m,k}(\hat\epsilon_m)u_m,\\
|\phi'''(t)-\phi'''(s)| &=\bigl|\{\nabla^3\ell_{m,k}(\hat\epsilon_m+tu_m)-\nabla^3\ell_{m,k}(\hat\epsilon_m+su_m)\} [u_m,u_m,u_m]\bigr|\\*
&\leq\beta_4|t-s|\|u_m\|^4.
\end{align*}
It follows that $|\phi^{(4)}(t)|\leq\beta_4\|u_m\|^4$ almost everywhere. The empirical third derivative inherits the same Lipschitz constant from the observation-level bound. Summing the interpolation identity with the client weights gives
\begin{equation}\label{eq:proof-retire-average-identity}
\begin{aligned}
\sum_m\alpha_m\{\ell_{m,k}(\epsilon)-\ell_{m,k}(\hat\epsilon_m)\} &=\frac13\sum_m\alpha_m\nabla \ell_{m,k}(\epsilon)^\top u_m +\frac16\sum_m\alpha_mu_m^\top \nabla^2\ell_{m,k}(\hat\epsilon_m)u_m\\
&\quad-\frac16\sum_m\alpha_m\int_0^1t(1-t)^2 \frac{d^4}{dt^4}\ell_{m,k}(\hat\epsilon_m+tu_m)\,dt.
\end{aligned}
\end{equation}

For the sample-size factor,
\begin{align}
\sum_{m\ne j}\alpha_m\alpha_j(N_m^{-1}+N_j^{-1}) &=2\sum_m\frac{\alpha_m}{N_m} \sum_{j\ne m}\alpha_j\notag\\*
&=2\sum_m\frac{\alpha_m(a-\alpha_m)}{N_m} =2a\chi_k.
\label{eq:proof-retire-sampling-moment}
\end{align}
Next, expanding the pairwise squared distances gives
\begin{align}
\sum_{m,j}\alpha_m\alpha_j\|\hat\epsilon_m-\hat\epsilon_j\|^2 &=\sum_{m,j}\alpha_m\alpha_j (\|\hat\epsilon_m\|^2+\|\hat\epsilon_j\|^2-2\hat\epsilon_m^\top \hat\epsilon_j)\notag\\*
&=2a\sum_m\alpha_m\|\hat\epsilon_m\|^2 -2\left\|\sum_m\alpha_m\hat\epsilon_m\right\|^2\notag\\*
&=2a\left\{\sum_m\alpha_m\|\hat\epsilon_m\|^2-a\|\epsilon\|^2\right\} =2a\sum_m\alpha_m\|\hat\epsilon_m-\epsilon\|^2 =2aV_2^k.
\label{eq:proof-retire-second-moment}
\end{align}
By definition,
\begin{equation}\label{eq:proof-retire-fourth-moment}
\sum_{m,j}\alpha_m\alpha_j\|\hat\epsilon_m-\hat\epsilon_j\|^4=2aV_4^k.
\end{equation}
Diagonal terms vanish in both distance identities. We omit them from \eqref{eq:proof-retire-sampling-moment} because Lemma~\ref{lem:local-derivatives} compares distinct clients; the diagonal derivative differences are already zero. For the gradient term in \eqref{eq:proof-retire-average-identity}, centering first converts the weighted sum into pairwise differences. Cauchy--Schwarz and Lemma~\ref{lem:local-derivatives} then give
\begin{align*}
&\sum_m\alpha_m\nabla \ell_{m,k}(\epsilon)^\top u_m =\frac1a\sum_{m,j}\alpha_m\alpha_j \nabla \ell_{m,k}(\epsilon)^\top(u_m-u_j)\\
&\quad=\frac1{2a}\sum_{m,j}\alpha_m\alpha_j \{\nabla \ell_{m,k}(\epsilon)-\nabla \ell_{j,k}(\epsilon)\}^\top(u_m-u_j)\\
&\quad\leq\frac1{2a} \left(\sum_{m,j}\alpha_m\alpha_j \|\nabla \ell_{m,k}(\epsilon)-\nabla \ell_{j,k}(\epsilon)\|^2\right)^{1/2} \left(\sum_{m,j}\alpha_m\alpha_j\|u_m-u_j\|^2\right)^{1/2}\\
&\quad\leq\frac{G_1\zeta_n(\xi)}{2a} \left(\sum_{m\ne j}\alpha_m\alpha_j (N_m^{-1}+N_j^{-1})\right)^{1/2} \left(\sum_{m,j}\alpha_m\alpha_j \|\hat\epsilon_m-\hat\epsilon_j\|^2\right)^{1/2}\\*
&\quad=\frac{G_1\zeta_n(\xi)}{2a} (2a\chi_k)^{1/2}(2aV_2^k)^{1/2} =G_1\zeta_n(\xi)\sqrt{\chi_kV_2^k}.
\end{align*}
The weighted directions sum to zero, canceling any gradient component common to all clients. The resulting bound depends on cross-client derivative differences, which supply the sample-size factor; separate bounds on the individual gradients would not retain it.

For the curvature term, we use stationarity of each local loss at its own selected minimum. Fix $m\ne j$ and let $v=\hat\epsilon_j-\hat\epsilon_m$. Use the two scalar restrictions $\phi_m(t)=\ell_{m,k}(\hat\epsilon_m+tv)$ and $\phi_j(t)=\ell_{j,k}(\hat\epsilon_m+tv)$. Their endpoint derivatives are
\begin{align*}
\phi_m'(0)&=0, &\phi_m'(1)&=\nabla \ell_{m,k}(\hat\epsilon_j)^\top v,\\*
\phi_j'(0)&=\nabla \ell_{j,k}(\hat\epsilon_m)^\top v, &\phi_j'(1)&=0,\\
\phi_m''(t)&=v^\top\nabla^2\ell_{m,k}(\hat\epsilon_m+tv)v, &\phi_j''(t)&=v^\top\nabla^2\ell_{j,k}(\hat\epsilon_m+tv)v.
\end{align*}
Both fourth derivatives have absolute value at most $\beta_4\|v\|^4$ almost everywhere. Apply \eqref{eq:proof-retire-trapezoid} to both restrictions and add the identities. Collecting the Hessian of each loss at its own minimizer gives
\begin{adjustwidth}{-11pt}{-11pt}
\begin{align*}
&v^\top\{\nabla^2\ell_{m,k}(\hat\epsilon_m)+\nabla^2\ell_{j,k}(\hat\epsilon_j)\}v =\{\nabla \ell_{m,k}(\hat\epsilon_j)-\nabla \ell_{j,k}(\hat\epsilon_j)\}^\top v +\{\nabla \ell_{m,k}(\hat\epsilon_m)-\nabla \ell_{j,k}(\hat\epsilon_m)\}^\top v\\
&\qquad-\frac12v^\top \{\nabla^2\ell_{m,k}(\hat\epsilon_j)-\nabla^2\ell_{j,k}(\hat\epsilon_j)\}v
-\frac12v^\top \{\nabla^2\ell_{j,k}(\hat\epsilon_m)-\nabla^2\ell_{m,k}(\hat\epsilon_m)\}v\\
&\qquad+\frac12\int_0^1t(1-t) \{\phi_m^{(4)}(t)+\phi_j^{(4)}(t)\}\,dt.
\end{align*}
\end{adjustwidth}
Each cross-client derivative difference is now evaluated at a common fluctuation parameter. We can therefore apply Lemma~\ref{lem:local-derivatives} to those differences and the fourth-derivative bound to the integral, obtaining
\begin{adjustwidth}{-12pt}{-12pt}
\begin{align}
&v^\top\{\nabla^2\ell_{m,k}(\hat\epsilon_m)+\nabla^2\ell_{j,k}(\hat\epsilon_j)\}v \leq \{\|\nabla \ell_{m,k}(\hat\epsilon_j)-\nabla \ell_{j,k}(\hat\epsilon_j)\| +\|\nabla \ell_{m,k}(\hat\epsilon_m)-\nabla \ell_{j,k}(\hat\epsilon_m)\|\}\|v\|\notag\\
&\qquad+\frac12 \|\nabla^2\ell_{m,k}(\hat\epsilon_j)-\nabla^2\ell_{j,k}(\hat\epsilon_j)\|_{\mathrm{op}}\|v\|^2\notag
+\frac12 \|\nabla^2\ell_{j,k}(\hat\epsilon_m)-\nabla^2\ell_{m,k}(\hat\epsilon_m)\|_{\mathrm{op}}\|v\|^2\notag\\
&\qquad+\frac12\int_0^1t(1-t) (2\beta_4\|v\|^4)\,dt\notag\\*
&\quad\leq 2G_1\zeta_n(\xi)\sqrt{N_m^{-1}+N_j^{-1}}\|v\| +\beta\zeta_n(\xi)\sqrt{N_m^{-1}+N_j^{-1}}\|v\|^2 +\frac{\beta_4}{6}\|v\|^4.
\label{eq:proof-retire-pair-curvature}
\end{align}
\end{adjustwidth}
The argument has not required positive curvature along the intervening segment.

We now use the positive semidefinite Hessian at each selected minimizer to pass from pairwise directions to the direction of the aggregate. Since $u_m=a^{-1}\sum_j\alpha_j(\hat\epsilon_j-\hat\epsilon_m)$ and $\nabla^2\ell_{m,k}(\hat\epsilon_m)\succeq0$,
\begin{align*}
&\frac1a\sum_j\alpha_j (\hat\epsilon_j-\hat\epsilon_m)^\top \nabla^2\ell_{m,k}(\hat\epsilon_m)(\hat\epsilon_j-\hat\epsilon_m)-u_m^\top \nabla^2\ell_{m,k}(\hat\epsilon_m)u_m\\*
&\quad=\frac1a\sum_j\alpha_j (\hat\epsilon_j-\hat\epsilon_m-u_m)^\top \nabla^2\ell_{m,k}(\hat\epsilon_m)(\hat\epsilon_j-\hat\epsilon_m-u_m) \geq0.
\end{align*}
Multiplying by $\alpha_m$, summing, and exchanging the two client indices in one copy of the sum gives
\begin{align*}
\sum_m\alpha_mu_m^\top \nabla^2\ell_{m,k}(\hat\epsilon_m)u_m &\leq\frac1a\sum_{m,j}\alpha_m\alpha_j (\hat\epsilon_j-\hat\epsilon_m)^\top \nabla^2\ell_{m,k}(\hat\epsilon_m)(\hat\epsilon_j-\hat\epsilon_m)\\*
&=\frac1{2a}\sum_{m,j}\alpha_m\alpha_j (\hat\epsilon_j-\hat\epsilon_m)^\top(\nabla^2\ell_{m,k}(\hat\epsilon_m)+\nabla^2\ell_{j,k}(\hat\epsilon_j))(\hat\epsilon_j-\hat\epsilon_m).
\end{align*}
Substitute \eqref{eq:proof-retire-pair-curvature} and apply Cauchy--Schwarz separately to its linear and quadratic distance terms:
\begin{align*}
&\sum_m\alpha_mu_m^\top \nabla^2\ell_{m,k}(\hat\epsilon_m)u_m \leq\frac{G_1\zeta_n(\xi)}{a} \sum_{m\ne j}\alpha_m\alpha_j \sqrt{N_m^{-1}+N_j^{-1}}\|\hat\epsilon_m-\hat\epsilon_j\|\\
&\qquad+\frac{\beta\zeta_n(\xi)}{2a} \sum_{m\ne j}\alpha_m\alpha_j \sqrt{N_m^{-1}+N_j^{-1}}\|\hat\epsilon_m-\hat\epsilon_j\|^2 +\frac{\beta_4}{12a} \sum_{m,j}\alpha_m\alpha_j\|\hat\epsilon_m-\hat\epsilon_j\|^4\\
&\quad\leq\frac{G_1\zeta_n(\xi)}{a} \left\{\sum_{m\ne j}\alpha_m\alpha_j (N_m^{-1}+N_j^{-1})\right\}^{1/2} \left\{\sum_{m,j}\alpha_m\alpha_j \|\hat\epsilon_m-\hat\epsilon_j\|^2\right\}^{1/2}\\
&\qquad+\frac{\beta\zeta_n(\xi)}{2a} \left\{\sum_{m\ne j}\alpha_m\alpha_j (N_m^{-1}+N_j^{-1})\right\}^{1/2} \left\{\sum_{m,j}\alpha_m\alpha_j \|\hat\epsilon_m-\hat\epsilon_j\|^4\right\}^{1/2} +\frac{\beta_4}{6}V_4^k\\
&\quad=\frac{G_1\zeta_n(\xi)}{a} (2a\chi_k)^{1/2}(2aV_2^k)^{1/2} +\frac{\beta\zeta_n(\xi)}{2a} (2a\chi_k)^{1/2}(2aV_4^k)^{1/2} +\frac{\beta_4}{6}V_4^k\\*
&\quad=2G_1\zeta_n(\xi)\sqrt{\chi_kV_2^k} +\beta\zeta_n(\xi)\sqrt{\chi_kV_4^k} +\frac{\beta_4}{6}V_4^k.
\end{align*}
The normalization factors follow from \eqref{eq:proof-retire-sampling-moment}-\eqref{eq:proof-retire-fourth-moment}. The active mass $a$ is already included in $\chi_k$, and both disagreement moments have the same pairwise normalization. For the remainder in \eqref{eq:proof-retire-average-identity}, the convexity of $v\mapsto\|v\|^4$, which is unrelated to the curvature of the loss, gives
\begin{adjustwidth}{-14pt}{-14pt}
\begin{align*}
\sum_m\alpha_m\|u_m\|^4 &=\sum_m\alpha_m \left\|\sum_j\frac{\alpha_j}{a}(\hat\epsilon_j-\hat\epsilon_m)\right\|^4\\*
&\leq\sum_m\alpha_m \sum_j\frac{\alpha_j}{a}\|\hat\epsilon_j-\hat\epsilon_m\|^4 =2V_4^k,\\
&\left|\frac16\sum_m\alpha_m\int_0^1t(1-t)^2 \frac{d^4}{dt^4}\ell_{m,k}(\hat\epsilon_m+tu_m)\,dt\right| \leq\frac{\beta_4}{6}\sum_m\alpha_m\|u_m\|^4 \int_0^1t(1-t)^2\,dt\\*
&\quad=\frac{\beta_4}{72}\sum_m\alpha_m\|u_m\|^4 \leq\frac{\beta_4}{36}V_4^k.
\end{align*}
\end{adjustwidth}
Combining the three bounds in \eqref{eq:proof-retire-average-identity} now yields
\begin{align*}
&\sum_m\alpha_m\{\ell_{m,k}(\epsilon)-\ell_{m,k}(\hat\epsilon_m)\}\\*
&\quad\leq\frac13G_1\zeta_n(\xi)\sqrt{\chi_kV_2^k} +\frac16\left\{ 2G_1\zeta_n(\xi)\sqrt{\chi_kV_2^k} +\beta\zeta_n(\xi)\sqrt{\chi_kV_4^k} +\frac{\beta_4}{6}V_4^k\right\} +\frac{\beta_4}{36}V_4^k\\*
&\quad=\frac{2G_1\zeta_n(\xi)}{3}\sqrt{\chi_kV_2^k} +\frac{\beta\zeta_n(\xi)}{6}\sqrt{\chi_kV_4^k} +\frac{\beta_4}{18}V_4^k.
\end{align*}
Each loss difference is nonnegative because its local fit is a global minimizer over the admissible fluctuations, giving the lower bound. The deterministic quadratic bound uses only smoothness and stationarity at the selected minimum. Along the admissible segment from $\hat\epsilon_m$ to $\epsilon$,
\begin{align*}
\ell_{m,k}(\epsilon)-\ell_{m,k}(\hat\epsilon_m) &=\int_0^1\nabla \ell_{m,k}(\hat\epsilon_m+tu_m)^\top u_m\,dt\\*
&=\int_0^1\{\nabla \ell_{m,k}(\hat\epsilon_m+tu_m)-\nabla \ell_{m,k}(\hat\epsilon_m)\}^\top u_m\,dt\\*
&\leq\int_0^1\beta t\|u_m\|^2\,dt =\frac\beta2\|u_m\|^2.
\end{align*}
Summing gives $\beta V_2^k/2$. On the uniform derivative event, both upper bounds hold at every iteration, so their minimum is a valid bound in each round. If only one client remains, $\epsilon=\hat\epsilon_m$, so the exact averaging error and every term in both bounds are zero.
\end{proof}

\subsection{Proof of Theorem~\ref{thm:personalized-convergence}}
\label{app:personalized-convergence}
\begin{proof}
Work on the single uniform event of Lemma~\ref{lem:local-derivatives}; the deterministic parts of the argument hold whenever their optimization hypotheses hold. A finite budget returns a prefix of this iteration. After all clients have retired, continue the accounting by keeping their outputs constant and setting the active-set quantities to zero.

Fix any base state $p$ in the analyzed family, and take all local minima over the admissible fluctuations. Since zero is feasible,
\begin{align*}
\min_{\epsilon}\ell_m(p(\epsilon)) &\leq\ell_m(p(\mathbf0))=\ell_m(p),\\*
\Delta_m(p) &=\ell_m(p)-\min_{\epsilon}\ell_m(p(\epsilon)) \geq0.
\end{align*}
When $\Delta_m(p)=0$, zero is a minimizer and is selected by the fixed rule; conversely, selecting zero gives a zero gap. Retaining both the density and the targeting state preserves the same local fluctuation problem. An autonomous local iteration therefore selects zero again and returns $p(\mathbf0)=p$, so induction gives a constant local sequence. Since zero is an interior minimizer and the loss is differentiable there,
\begin{align*}
\mathbf0 &=\left.\nabla_\epsilon\ell_m(p(\epsilon)) \right|_{\epsilon=\mathbf0}\\*
&=\frac1{N_m}\sum_{i\in\mathcal I_m} \left.\nabla_\epsilon L(p(\epsilon))(O_i) \right|_{\epsilon=\mathbf0}\\*
&=\frac1{N_m}\sum_{i\in\mathcal I_m}D(p)(O_i).
\end{align*}
The final equality uses the score identity \eqref{eq:local-score} at the origin of the retained submodel. In particular, if $\tau_m<\infty$, then for every $k\geq\tau_m$,
\begin{equation}\label{eq:proof-retire-absorption}
p_m^k=p^{\tau_m},\qquad \Delta_m(p_m^k)=0,\qquad \frac1{N_m}\sum_{i\in\mathcal I_m}D(p_m^k)(O_i)=\mathbf0.
\end{equation}
The certificate applies to the client's frozen density and targeting state. To obtain the first inequality in the theorem, we lower-bound each local targeting gap by its squared empirical EIF norm. For this calculation only, write $f(\epsilon)=\ell_m(p(\epsilon))$ and $g=\nabla f(\mathbf0)$. The segment from zero to $-g/\beta$ is feasible by Assumption~\ref{ass:local-regularity}. Integrating the loss derivative along it and using gradient Lipschitzness gives
\begin{align*}
\ell_m(p(-g/\beta)) -\ell_m(p) &=\int_0^1 \nabla f(-tg/\beta)^\top(-g/\beta)\,dt\\*
&=-\frac1\beta\|g\|^2 -\frac1\beta\int_0^1 \{\nabla f(-tg/\beta)-g\}^\top g\,dt\\
&\leq-\frac1\beta\|g\|^2 +\frac1\beta\int_0^1 \|\nabla f(-tg/\beta)-g\|\,\|g\|\,dt\\
&\leq-\frac1\beta\|g\|^2 +\frac1\beta\int_0^1\beta\frac{t\|g\|}{\beta}\|g\|\,dt\\*
&=-\frac1{2\beta}\|g\|^2.
\end{align*}
The exact local minimum cannot exceed the loss at this feasible trial point. Hence
\begin{equation}\label{eq:proof-retire-gap-score}
\begin{aligned}
\Delta_m(p) &\geq\ell_m(p)-\ell_m(p(-g/\beta))\\
&\geq\frac1{2\beta}\|g\|^2 =\frac1{2\beta} \left\|\frac1{N_m}\sum_{i\in\mathcal I_m}D(p)(O_i)\right\|^2.
\end{aligned}
\end{equation}
At a retained output both sides are zero by \eqref{eq:proof-retire-absorption}. Summing these inequalities with the client weights and averaging over iterations proves the first inequality in \eqref{eq:personalized-convergence-bound}. Here the EIF is evaluated at the current submodel origin; no endpoint gradient from an earlier submodel is used. The active set after the round-$k$ check is $\mathcal A_k=\{m:k<\tau_m\}$. Consequently,
\begin{align*}
\sum_{m=1}^M\alpha_m\Delta_m(p_m^k) &=\sum_{m:\,k<\tau_m}\alpha_m\Delta_m(p^k) +\sum_{m:\,k\geq\tau_m}\alpha_m\Delta_m(p^{\tau_m})\\*
&=\sum_{m\in\mathcal A_k}\alpha_m\Delta_m(p^k).
\end{align*}
For each active client, decompose the targeting gap into the signed loss decrease under the shared update and the additional loss incurred by averaging its local fit. Inserting the loss at the aggregate gives
\begin{align*}
\Delta_m(p^k) &=\ell_{m,k}(\mathbf0)-\ell_{m,k}(\hat\epsilon_m^k)\\*
&=\ell_{m,k}(\mathbf0)-\ell_{m,k}(\epsilon^k) +\ell_{m,k}(\epsilon^k)-\ell_{m,k}(\hat\epsilon_m^k)\\*
&=\ell_m(p^k)-\ell_m(p^{k+1}) +\ell_{m,k}(\epsilon^k)-\ell_{m,k}(\hat\epsilon_m^k).
\end{align*}
Summing this exact decomposition over iterations and active clients gives
\begin{align}
\sum_{k=0}^{K-1}\sum_{m=1}^M \alpha_m\Delta_m(p_m^k) &= \sum_{k=0}^{K-1}\sum_{m\in\mathcal A_k} \alpha_m\{\ell_m(p^k)-\ell_m(p^{k+1})\}\notag\\*
&\quad+ \sum_{k=0}^{K-1}\sum_{m\in\mathcal A_k} \alpha_m\{\ell_{m,k}(\epsilon^k) -\ell_{m,k}(\hat\epsilon_m^k)\}.
\label{eq:proof-retire-loss-identity}
\end{align}
The fixed base weights keep each client's coefficient unchanged throughout its participation, allowing the loss differences to telescope even as the active set changes.

Reordering the first sum in \eqref{eq:proof-retire-loss-identity} by client makes those participation intervals explicit,
\begin{align*}
\sum_{k=0}^{K-1}\sum_{m\in\mathcal A_k} \alpha_m\{\ell_m(p^k)-\ell_m(p^{k+1})\} &=\sum_{m=1}^M\alpha_m\sum_{k=0}^{K-1} \mathbf1\{k<\tau_m\} \{\ell_m(p^k)-\ell_m(p^{k+1})\}\\
&=\sum_{m=1}^M\alpha_m \sum_{k=0}^{\min\{K,\tau_m\}-1} \{\ell_m(p^k)-\ell_m(p^{k+1})\}\\
&=\sum_{m=1}^M\alpha_m \{\ell_m(p^0)-\ell_m(p^{\min\{K,\tau_m\}})\}\\
&=\sum_{m=1}^M\alpha_m \{\ell_m(p^0)-\ell_m(p_m^K)\}\\*
&\leq\sum_{m=1}^M\alpha_m \{\ell_m(p^0)-\underline\ell_m\}<\infty.
\end{align*}
The sum for $\tau_m=0$ is empty; the sum for $\tau_m=\infty$ ends at $K-1$. The calculation is a finite-sum identity for each realized path. It remains valid at adaptive retirement times without taking expectations or invoking an optional-stopping result.

Lemma~\ref{lem:local-fourth-order} bounds the averaging loss in the second sum of \eqref{eq:proof-retire-loss-identity}. Combining it with the clientwise telescoping bound yields
\begin{align*}
&\sum_{k=0}^{K-1}\sum_{m=1}^M \alpha_m\Delta_m(p_m^k)\\*
&\quad\leq\sum_{m=1}^M\alpha_m\{\ell_m(p^0)-\underline\ell_m\} +\sum_{k=0}^{K-1}\left\{ \frac{2G_1\zeta_n(\xi)}3\sqrt{\chi_kV_2^k} +\frac{\beta\zeta_n(\xi)}6\sqrt{\chi_kV_4^k} +\frac{\beta_4}{18}V_4^k\right\}.
\end{align*}
Dividing by $K$ and using \eqref{eq:proof-retire-gap-score} proves \eqref{eq:personalized-convergence-bound}. The second-order part of Lemma~\ref{lem:local-fourth-order} may replace the braced expression by its minimum with $\beta V_2^k/2$. The derivative event covers all $K$ and both weighting modes because the fixed family in Assumption~\ref{ass:local-complexity} contains every state either mode can visit. Thus the conclusion is simultaneous over their respective trajectories, with no conditioning on the states selected by the sample.

At one nonempty active set write, for this calculation only, $a=\sum_{m\in\mathcal A_k}\alpha_m$ and $s=|\mathcal A_k|$. The common form of the sampling coefficient is
\begin{equation*}
\chi_k=\sum_{m\in\mathcal A_k} \frac{\alpha_m}{N_m}\left(1-\frac{\alpha_m}{a}\right).
\end{equation*}
For uniform base weights this gives
\begin{align*}
a&=\frac{s}{M},\qquad \frac{\alpha_m}{a}=\frac1s,\\*
\chi_k &=\frac1M\sum_{m\in\mathcal A_k} \left(1-\frac1s\right)\frac1{N_m} =\frac{s-1}{Ms}\sum_{m\in\mathcal A_k}\frac1{N_m},\\*
\frac{\chi_k}{a} &=\frac{s-1}{s^2}\sum_{m\in\mathcal A_k}\frac1{N_m}.
\end{align*}
For sample-size weights, retaining the sample totals explicitly,
\begin{align*}
a&=\frac{\sum_{j\in\mathcal A_k}N_j}{n},\qquad \frac{\alpha_m}{a} =\frac{N_m}{\sum_{j\in\mathcal A_k}N_j},\\*
\chi_k &=\frac1n\sum_{m\in\mathcal A_k} \left(1-\frac{N_m}{\sum_{j\in\mathcal A_k}N_j}\right) =\frac1n\left(s- \frac{\sum_{m\in\mathcal A_k}N_m} {\sum_{j\in\mathcal A_k}N_j}\right) =\frac{s-1}{n},\\*
\frac{\chi_k}{a} &=\frac{s-1}{\sum_{m\in\mathcal A_k}N_m}.
\end{align*}
For $s=1$, both coefficients and both disagreement moments vanish, because the aggregate equals the sole client's solution.

For a fixed active set with $s\geq2$, divide the averaging-error bound by its active mass $a$. Its sampling terms then involve $\sqrt{(\chi_k/a)(V_r^k/a)}$, $r=2,4$. Holding the normalized update geometry fixed, the ratio of the two sampling coefficients is
\begin{equation}\label{eq:retirement-imbalance}
\frac{(\chi_k/a)_{\mathrm{unif}}}{(\chi_k/a)_{\mathrm{size}}} =\frac{\bigl(\sum_{m\in\mathcal A_k}N_m\bigr) \bigl(\sum_{m\in\mathcal A_k}N_m^{-1}\bigr)}{s^2} \geq1.
\end{equation}
Indeed,
\begin{align*}
s^2 &=\left(\sum_{m\in\mathcal A_k} \sqrt{N_m}\frac1{\sqrt{N_m}}\right)^2\\*
&\leq\left(\sum_{m\in\mathcal A_k}N_m\right) \left(\sum_{m\in\mathcal A_k}\frac1{N_m}\right).
\end{align*}
Equality holds exactly when all active sample sizes are equal. The ratio in \eqref{eq:retirement-imbalance} is the arithmetic mean of those sizes divided by their harmonic mean. This comparison concerns only the sampling coefficient: the two algorithms may have different active sets, normalized dispersions, and terminal densities. Also, for every client,
\begin{equation*}
\Delta_m(p_m^k) \leq\frac1{\alpha_m} \sum_{j=1}^M\alpha_j\Delta_j(p_j^k).
\end{equation*}
Thus extracting an individual guarantee costs a factor $M$ under uniform weights and $n/N_m$ under sample-size weights.

To relate the accumulated sampling terms to participation duration, first apply Cauchy--Schwarz over the iterations, for both $r=2$ and $r=4$,
\begin{equation*}
\sum_{k=0}^{K-1}\sqrt{\chi_kV_r^k} \leq\left(\sum_{k=0}^{K-1}\chi_k\right)^{1/2} \left(\sum_{k=0}^{K-1}V_r^k\right)^{1/2}.
\end{equation*}
Under sample-size weights, the active-set identity $\mathcal A_k=\{m:k<\tau_m\}$ gives
\begin{align*}
n\sum_{k=0}^{K-1}\chi_k &=\sum_{k=0}^{K-1}(|\mathcal A_k|-1)_+\\*
&=\sum_{k=0}^{K-1}|\mathcal A_k| -\sum_{k=0}^{K-1}\mathbf1\{|\mathcal A_k|>0\}\\
&=\sum_{m=1}^M\sum_{k=0}^{K-1}\mathbf1\{k<\tau_m\} -\sum_{k=0}^{K-1}\mathbf1\{k<\max_m\tau_m\}
=\sum_{m=1}^M\min\{K,\tau_m\} -\min\{K,\max_m\tau_m\}.
\end{align*}
This identity counts the active client-rounds in excess of one client per nonempty round. Under uniform weights, interchanging the finite sums over nonempty active sets gives
\begin{align*}
\sum_{k=0}^{K-1}\chi_k &=\frac1M\sum_{\substack{0\leq k<K\\\mathcal A_k\ne\varnothing}} \left(1-\frac1{|\mathcal A_k|}\right) \sum_{m\in\mathcal A_k}\frac1{N_m}\\*
&=\frac1M\sum_{m=1}^M\frac1{N_m} \sum_{k=0}^{\min\{K,\tau_m\}-1} \left(1-\frac1{|\mathcal A_k|}\right).
\end{align*}
Together, these calculations establish
\begin{equation*}
\begin{aligned}
\sum_{k<K}\chi_k &=\frac1n\left\{\sum_m\min\{K,\tau_m\} -\min\{K,\max_m\tau_m\}\right\}, &&\alpha_m=N_m/n,\\
\sum_{k<K}\chi_k &=\frac1M\sum_m\frac1{N_m} \sum_{k=0}^{\min\{K,\tau_m\}-1} \left(1-\frac1{|\mathcal A_k|}\right), &&\alpha_m=1/M.
\end{aligned}
\end{equation*}
Truncation by $K$ keeps these identities finite even when some retirement times are infinite; empty sums are zero. In the second identity each inner summand has $k<\tau_m$, so its active set is nonempty. The uniform-weight coefficient retains the inverse sample sizes of the participating clients. At one nonempty active set, let $\operatorname{diam}_k:=\max_{m,j\in\mathcal A_k} \|\hat\epsilon_m^k-\hat\epsilon_j^k\|$ be their diameter. With the proof-local active mass $a$, the common moment definition yields
\begin{align*}
\frac{V_2^k}{a} &=\frac1{2a^2}\sum_{m,j\in\mathcal A_k}\alpha_m\alpha_j \|\hat\epsilon_m^k-\hat\epsilon_j^k\|^2 \leq\frac{\operatorname{diam}_k^2}{2},\\
\frac{V_4^k}{a} &=\frac1{2a^2}\sum_{m,j\in\mathcal A_k}\alpha_m\alpha_j \|\hat\epsilon_m^k-\hat\epsilon_j^k\|^4 \leq\frac{\operatorname{diam}_k^4}{2}.
\end{align*}
Substitution in \eqref{eq:local-fourth-order-error} gives
\begin{align*}
&\frac1a\sum_{m\in\mathcal A_k}\alpha_m \{\ell_{m,k}(\epsilon^k)-\ell_{m,k}(\hat\epsilon_m^k)\}\\*
&\quad\leq\frac{2G_1\zeta_n(\xi)}{3\sqrt2} \operatorname{diam}_k\sqrt{\chi_k/a} +\frac{\beta\zeta_n(\xi)}{6\sqrt2} \operatorname{diam}_k^2\sqrt{\chi_k/a} +\frac{\beta_4}{36}\operatorname{diam}_k^4.
\end{align*}
The theorem does not require the diameter to be small or to decay. When local minima are far apart, the deterministic quadratic bound may be tighter.

Finally, the convergence statements in the main text follow from the nonnegativity of the gaps. If the accumulated averaging-error bound in \eqref{eq:personalized-convergence-bound} is sublinear in the iteration count, their weighted temporal average tends to zero. If that accumulated error is bounded as $K\to\infty$, then
\begin{align*}
\sum_{k=0}^{\infty}\sum_{m=1}^M \alpha_m\Delta_m(p_m^k)&<\infty,\\*
\sum_{k=0}^{\infty}\Delta_m(p_m^k) &\leq\frac1{\alpha_m} \sum_{k=0}^{\infty}\sum_{j=1}^M \alpha_j\Delta_j(p_j^k)<\infty \qquad\text{for  } \forall m.
\end{align*}
It follows that $\Delta_m(p_m^k)\to0$ for each client, and \eqref{eq:proof-retire-gap-score} gives convergence of its local empirical EIF mean. 
\end{proof}

\endgroup

\section{Experimental details}\label{app:experiments}
\subsection{Optimization-defined target and fluctuation} For this benchmark, we consider positive densities on $\mathbb R^2$ with finite second moments. In \eqref{eq:experiment-target}, we fix the destinations at $z_1=(-2,-2)$, $z_2=(2,-2)$, $z_3=(-2,2)$, and $z_4=(2,2)$. Each capacity is $1/4$, and the entropy regularization is one. Setting $\theta_4=0$ removes additive nonidentifiability, leaving the three coordinates of $\Psi(p)$. The assignment probabilities and the vector of their first three coordinates are
\begin{equation*}
s_j^{\mathrm{OT}}(o;\theta) =\frac{e^{\theta_j-\|o-z_j\|_2^2/2}} {\sum_{a=1}^4e^{\theta_a-\|o-z_a\|_2^2/2}}, \qquad s_{1:3}^{\mathrm{OT}}=(s_1^{\mathrm{OT}},s_2^{\mathrm{OT}},s_3^{\mathrm{OT}})^\top.
\end{equation*}
At $\theta=\Psi(p)$, their means are $1/4$. The reduced Hessian is
\begin{equation*}
H_{\mathrm{OT}}(p)=\int\left[ \operatorname{diag}\{s_{1:3}^{\mathrm{OT}}(o;\Psi(p))\} -s_{1:3}^{\mathrm{OT}}(o;\Psi(p)) s_{1:3}^{\mathrm{OT}}(o;\Psi(p))^\top\right]p(o)\,do.
\end{equation*}
All four assignment probabilities are positive, so this reduced Hessian is positive definite. Differentiating the population first-order equation along a regular density submodel yields the nonparametric EIF
\begin{equation*}
D(p)(o)=-H_{\mathrm{OT}}(p)^{-1} \left\{s_{1:3}^{\mathrm{OT}}(o;\Psi(p))-\tfrac14\mathbf1_3\right\},
\end{equation*}
which is centered under $p$. Because the pooled EIF equation is equivalent to the first-order equation of the empirical transport objective, a direct pooled OT solve provides an endpoint check. We use $L(p)(o)=-\log p(o)$ with the normalized logistic density fluctuation
\begin{equation*}
p^k(\epsilon)(o)= \frac{2p^k(o)\{1+e^{2\epsilon^\top D(p^k)(o)}\}^{-1}} {\displaystyle\int 2p^k(u) \{1+e^{2\epsilon^\top D(p^k)(u)}\}^{-1}\,du}.
\end{equation*}

\subsection{Data-generating distributions} In DGP~1, observations are independent draws from a three-component bivariate Gaussian mixture with probabilities $(0.5,0.3,0.2)$. In the same component order, the means are $(-2,0)$, $(1.5,1.5)$, and $(1.5,-1.5)$, and the covariance matrices are
\begin{equation*}
\begin{pmatrix}0.8&0.5\\0.5&1.2\end{pmatrix},\qquad
\begin{pmatrix}1&-0.35\\-0.35&0.4\end{pmatrix},\qquad
\begin{pmatrix}0.35&0\\0&0.8\end{pmatrix}.
\end{equation*}
The unequal mixture masses, correlations, and scales lead to different local allocation imbalances despite the equal destination capacities. In DGP~2, we generate independent variables
\begin{equation*}
U\sim0.4N(-1.5,0.6^2)+0.6N(1.5,0.9^2), \qquad Z\sim N(0,1),
\end{equation*}
and set
\begin{equation*}
O=\bigl(U,\,0.5U^2+0.3\sin(2U)+(0.35+0.15U^2)Z\bigr)^\top.
\end{equation*}
This construction gives a curved distribution with location-dependent conditional spread. The DGPs differ both in their geometry and in the probabilities assigned to their mixture components. The targeting sample size is $n=10{,}000$.

\subsection{Client partitions and common initialization} For $M\in\{5,10,20,40\}$, we set the client proportions proportional to $5^{(m-1)/(M-1)}$, $m=1,\ldots,M$. We multiply the proportions by $n$, take integer floors, and assign the remaining observations to clients with the largest fractional parts, breaking ties by client index. The random and spatial partitions of a dataset use exactly the same counts. 

For the random partition, we randomly permute all sample indices and allocate successive blocks of sizes $N_m$. To construct the spatial partition, we first reserve $\lfloor0.2N_m\rfloor$ positions in each client and select the corresponding total number of observations uniformly without replacement. Within this subset, we standardize the score $O_{i1}+0.5O_{i2}$ by its pooled mean and standard deviation, add independent $N(0,1)$ noise, and sort the resulting scores. We fill the reserved client positions in ascending score order, then randomly permute the remaining observations to fill the remaining positions. The spatially biased subset and the larger randomly allocated subset together produce moderate heterogeneity while preserving client sizes and the pooled dataset. The spatial partition depends on the observations and thus falls outside the observation-independent assignment condition in Assumption~\ref{ass:local-complexity}.

We construct $p^0$ as a Gaussian KDE from an independent pilot sample of size $2{,}000$ drawn from the same DGP. The Gaussian kernel covariance is $2{,}000^{-1/3}$ times the pilot sample covariance, following the multivariate Scott rule in dimension two \citep{scott2015multivariate}. We hold the pilot sample, $p^0$, and all numerical integration settings fixed across methods, partitions, and client counts. Fitting the pilot estimator and initially distributing its common representation are setup costs.

\end{document}